\definecolor{myblue}{HTML}{0000B5}
\definecolor{crimson}{HTML}{B30000}
\newcommand{\bb}[1]{\textcolor{myblue}{#1}}
\newcommand{\cc}[1]{\textcolor{crimson}{#1}}
\DeclareMathOperator*{\argmax}{argmax}
\renewcommand{\cite}{\citep}
\begin{document}

\title{Monotonic Value Function Factorisation for Deep Multi-Agent Reinforcement Learning}

\author{\name Tabish Rashid$^*$ \email tabish.rashid@cs.ox.ac.uk \\
       \addr University of Oxford
       \AND
       \name Mikayel Samvelyan$^*$ \email mikayel@samvelyan.com \\
       \addr Russian-Armenian University
       \AND
       \name Christian Schroeder de Witt \email cs@robots.ox.ac.uk \\
       \addr University of Oxford
       \AND
       \name Gregory Farquhar \email gregory.farquhar@cs.ox.ac.uk \\
       \addr University of Oxford
        \AND
       \name Jakob Foerster \email jnf@fb.com\\
       \addr Facebook AI Research
        \AND
       \name Shimon Whiteson \email shimon.whiteson@cs.ox.ac.uk \\
       \addr University of Oxford
   }

\newcommand{\customfootnotetext}[2]{{%
		\renewcommand{\thefootnote}{#1}%
		\footnotetext[0]{#2}}}%
	
\customfootnotetext{$^*$}{Equal contribution.}

\editor{George Konidaris}

\maketitle
\begin{abstract}
\label{sec:abstract}

In many real-world settings, a team of agents must coordinate its behaviour  
while acting in a decentralised fashion. At the same time, it is often possible to 
train the agents in a centralised fashion
where global state information is available and communication constraints are lifted. 
Learning joint action-values conditioned on extra state information is 
an attractive way to exploit centralised learning, but the best strategy for 
then extracting decentralised policies is unclear.
Our solution is QMIX, a novel value-based method that can train decentralised policies in a centralised end-to-end fashion. 
QMIX employs a mixing network that estimates joint action-values as a monotonic combination of per-agent values.
We structurally enforce that the joint-action value is monotonic in the 
per-agent values, through the use of non-negative weights in the mixing network, which 
guarantees consistency between the 
centralised and decentralised policies.
To evaluate the performance of QMIX, we propose the StarCraft Multi-Agent Challenge (SMAC) as a new benchmark for deep multi-agent reinforcement learning.
We evaluate QMIX on a challenging set of SMAC scenarios and show that it significantly outperforms existing multi-agent reinforcement learning methods.

\end{abstract}

\begin{keywords}
  Reinforcement Learning, Multi-Agent Learning, Multi-Agent Coordination
\end{keywords}

\section{Introduction}
\label{sec:intro}

Reinforcement learning (RL) holds considerable promise to help address a variety of cooperative multi-agent problems, such as coordination of robot swarms \cite{huttenrauch_guided_2017} and autonomous cars \cite{cao_overview_2012}. 

In many such settings, partial observability and/or communication constraints necessitate the learning of \textit{decentralised policies}, which condition only on the local action-observation history of each agent. Decentralised policies also naturally attenuate the problem that joint action spaces grow exponentially with the number of agents, often rendering the application of traditional single-agent RL methods impractical.

Fortunately, decentralised policies can often be learned in a centralised fashion in a simulated or laboratory setting. This often grants access to additional state information, otherwise hidden from agents, and removes inter-agent communication constraints. 
The paradigm of \textit{centralised training with decentralised execution} \cite{oliehoek_optimal_2008,kraemer_multi-agent_2016} has recently attracted attention in the RL community \cite{jorge_learning_2016,foerster_counterfactual_2017}. 
However, many challenges surrounding how to best exploit centralised training remain open.

One of these challenges is how to represent and use the action-value function 
that many RL methods learn.  On the one hand, properly capturing the effects of 
the agents' actions requires a centralised action-value function $Q_{tot}$ that 
conditions on the global state and the joint action.  On the other hand, such a 
function is difficult to learn when there are many agents and, even if it can 
be learned, offers no obvious way to extract decentralised policies that allow 
each agent to select only an individual action based on an individual 
observation.

The simplest option is to forgo a centralised action-value function and let each agent $a$ learn an individual action-value function $Q_a$ independently, as in \emph{independent Q-learning} (IQL) \cite{tan_multi-agent_1993}.  However, this approach cannot explicitly represent interactions between the agents and may not converge, as each agent's learning is confounded by the learning and exploration of others.

At the other extreme, we can learn a fully centralised action-value function $Q_{tot}$ and then use it to guide the optimisation of decentralised policies in an actor-critic framework, an approach taken by \emph{counterfactual multi-agent} (COMA) policy gradients \cite{foerster_counterfactual_2017}, as well as work by \citet{gupta_cooperative_2017}. However, this requires on-policy learning, which can be sample-inefficient, and training the fully centralised critic becomes impractical when there are more than a handful of agents.

In between these two extremes, we can learn a centralised but factored $Q_{tot}$, an approach taken by \emph{value decomposition networks} (VDN) \cite{sunehag_value-decomposition_2017}. By representing $Q_{tot}$ as a sum of individual value functions $Q_a$ that condition only on individual observations and actions, a decentralised policy arises simply from each agent selecting actions greedily with respect to its $Q_a$. However, VDN severely limits the complexity of centralised action-value functions that can be represented and ignores any extra state information available during training.

In this paper, we propose a new approach called QMIX which, like VDN, lies between the extremes of IQL and COMA, but can represent a much richer class of action-value functions. Key to our method is the insight that the full factorisation of VDN is not necessary to extract decentralised policies.  Instead, we only need to ensure that a global $\argmax$ performed on $Q_{tot}$ yields the same result as a set of individual $\argmax$ operations performed on each $Q_a$.  To this end, it suffices to enforce a monotonicity constraint on the relationship between $Q_{tot}$ and each $Q_a$:
\begin{equation}
\label{eq:monotonicity_constraint}
\frac{\partial Q_{tot}}{\partial Q_a}  \geq 0,~ \forall a.
\end{equation}

QMIX consists of \textit{agent networks} representing each $Q_a$, and a 
\emph{mixing network} that combines them into $Q_{tot}$, not as a simple sum as 
in VDN, but in a complex nonlinear way that ensures consistency between the 
centralised and decentralised policies.
At the same time, it enforces the 
constraint of \eqref{eq:monotonicity_constraint} by restricting the mixing 
network to have positive weights.
We use \emph{hypernetworks} \cite{ha_hypernetworks_2016} to condition the weights of the mixing network on the state, which is observed only during training.
As a result, QMIX can represent complex centralised action-value 
functions with a factored representation that scales well in the number of 
agents and allows decentralised policies to be easily extracted via inexpensive 
individual argmax operations.

To evaluate QMIX, as well as the growing number of other algorithms recently proposed for multi-agent RL \cite{foerster_counterfactual_2017, sunehag_value-decomposition_2017}, we introduce the StarCraft Multi-Agent Challenge (SMAC)\footnote{Code is available at \url{https://github.com/oxwhirl/smac}.}.  In single-agent RL, standard environments such as the Arcade Learning Environment \cite{bellemare13arcade} and MuJoCo \cite{Plappert2019multigoal} have facilitated rapid progress.  While some multi-agent testbeds have emerged, such as Poker \cite{HeinrichS16}, Pong \cite{tampuu_multiagent_2015}, Keepaway Soccer \cite{stone2005keepaway}, or simple gridworld-like environments \cite{lowe_multi-agent_2017, leibo_multi-agent_2017, yang2018mean, zheng2017magent}, there are currently no challenging standard testbeds for centralised training with decentralised execution with the exception of the recently introduced Hanabi Challenge \citep{bard2020Hanabi} which focusses on a setting with less than 5 agents.

\begin{figure}[t!]
	\centering
	\subfigure[3 Stalkers vs 5 Zealots]{
		\includegraphics[width=0.4\columnwidth]{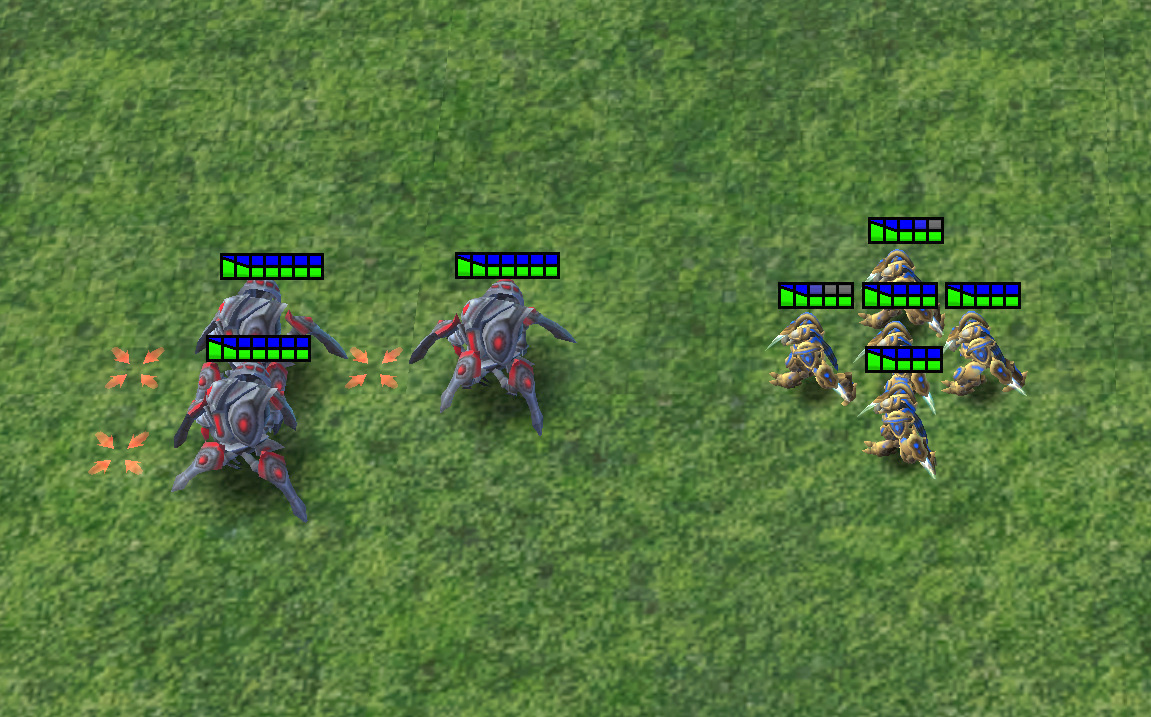}}
	\subfigure[2 Collosi vs 64 Zerglings]{
		\includegraphics[width=0.4\columnwidth]{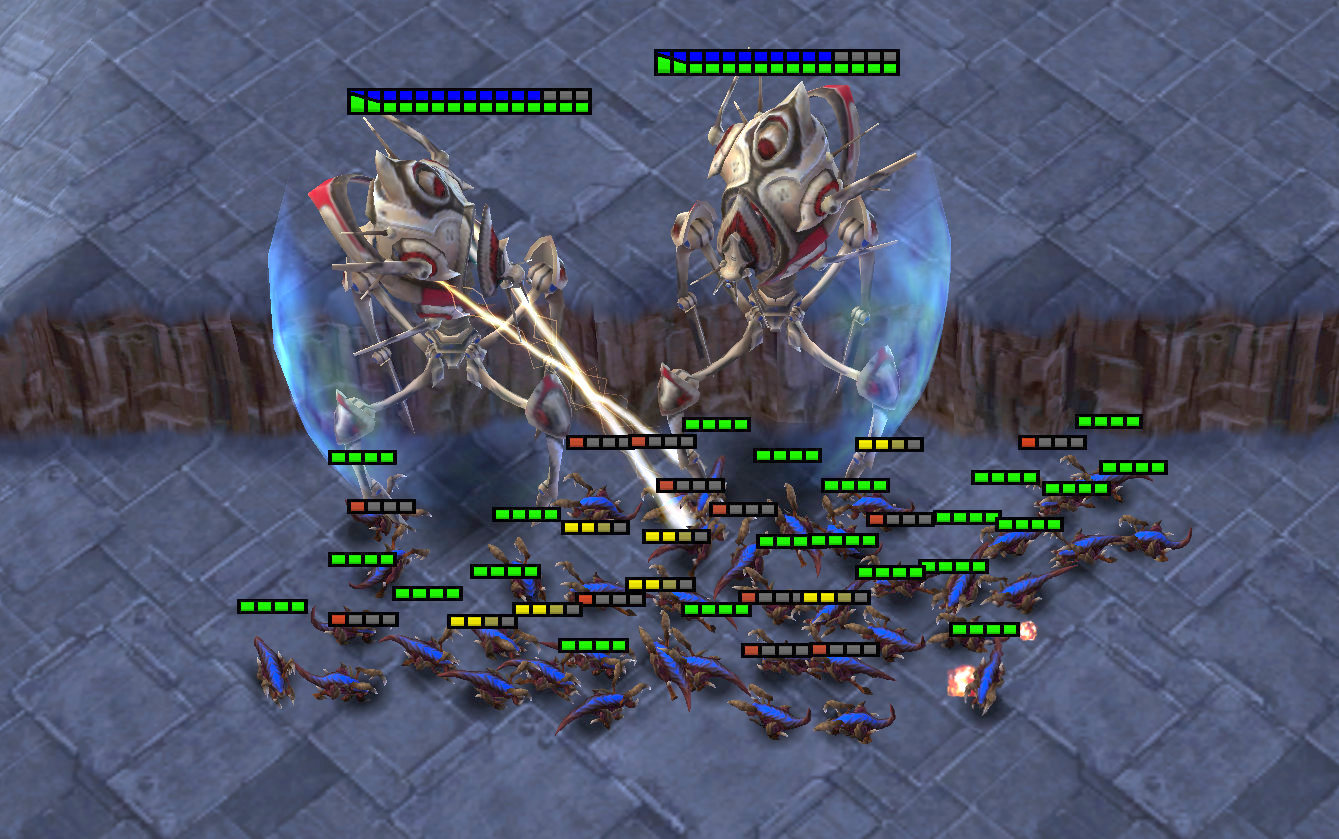}}
	\caption{\textit{Decentralised unit micromanagement} in StarCraft II, where each learning agent controls an individual unit. The goal is to coordinate behaviour across agents to defeat all enemy units.}
	\label{fig:starcraft_screenshots}
\end{figure}

SMAC fills this gap. It is built on the popular real-time strategy game StarCraft II and makes use of the SC2LE environment \cite{vinyals_starcraft_2017}.
Instead of tackling the full game of StarCraft with centralised control, it focuses on decentralised micromanagement challenges (Figure~\ref{fig:starcraft_screenshots}).
In these challenges, each of our units is controlled by an independent, learning agent that has to act based only on local observations, while the opponent's units are controlled by the hand-coded built-in StarCraft II AI.
SMAC offers a diverse set of scenarios that challenge algorithms to handle high-dimensional inputs and partial observability, and to learn coordinated 
behaviour even when restricted to fully decentralised execution.
In contrast to the diverse set of scenarios included in SMAC, the Hanabi Challenge focusses on a single task involving between 2-5 agents designed to test agents' ability to reason about the actions of the other agents (an ability in humans referred to as \textit{theory of mind} \citep{rabinowitz2018machine}).

To further facilitate research in this field, we also open-source PyMARL,
a learning framework that can serve as a starting point for other researchers and includes implementations of several key multi-agent RL algorithms. PyMARL is modular, extensible, built on PyTorch, and serves as a template for dealing with some of the unique challenges of deep multi-agent RL in practice. 
We also offer a set of guidelines for best practices in evaluations using our benchmark, including the reporting of standardised performance metrics, sample efficiency, and computational requirements (see Section~\ref{sec:setting}).

Our experiments on SMAC show that QMIX outperforms IQL, VDN, and COMA, both in terms of absolute performance and learning speed.
In particular, our method shows considerable performance gains on the harder tasks in SMAC, and tasks with heterogeneous agents. 
Moreover, our analysis and ablations show both the necessity of 
conditioning on the state information and a flexible multi-layer network for mixing of agent 
$Q$-values in order to achieve consistent performance across tasks.

Since its introduction by \citet{rashid2018qmix}, QMIX has become an important value-based algorithm in discrete action environments. 
It has inspired a number of extensions and follow-up work (Section \ref{sec:rw_qmix}) and is a prominent point of comparison in any work that utilises SMAC (Section \ref{sec:rw_smac}). 
Recently it is 
becoming increasingly common (and arguably necessary) to evaluate deep multi-agent RL algorithms on non-gridworld environments.
This has been driven in part by the introduction of PyMARL and SMAC, which provide an open-source codebase and a standardised testbed for evaluating and comparing deep multi-agent RL algorithms.\footnote{
	This work was originally published as a conference paper \citep{rashid2018qmix}.
	The main additions in this article are:
	\begin{itemize}
		\item Introducing SMAC, a benchmark for the cooperative multi-agent RL setting of centralised training and decentralised execution (Section \ref{sec:setting}).
		\item Releasing PyMARL, a framework for running and developing multi-agent RL algorithms (Section \ref{sec:pymarl}).
		\item Experimental results comparing IQL, VDN, COMA, and QTRAN on the SMAC benchmark (Section \ref{sec:results}).
		\item Further analysis and ablation experiments to investigate why QMIX outperforms VDN (Sections \ref{sec:rnd_matrix} and \ref{sec:analysis}).
		\item Comprehensive literature review of classical as well as novel cooperative multi-agent RL approaches (Section \ref{sec:related}).
	\end{itemize}
}

\section{Related Work}
\label{sec:related}

Recent work in multi-agent RL has started moving from 
tabular methods \cite{yang_multiagent_2004, busoniu_comprehensive_2008} to deep learning methods that can tackle
high-dimensional state and action spaces \cite{tampuu_multiagent_2015,foerster_counterfactual_2017,peng_multiagent_2017, de_witt_multi-agent_2018}.
In this paper, we 
focus on the fully-cooperative setting in which all agents must maximise a joint reward signal in the paradigm of centralised training and decentralised execution.  

\subsection{Independent Learners}
A natural approach to producing decentraliseable agents in a multi-agent system 
is to directly learn decentralised value functions or policies. 
\emph{Independent Q-learning} \cite{tan_multi-agent_1993} trains independent
action-value functions for each agent using $Q$-learning \cite{watkins_learning_1989}. \citet{tampuu_multiagent_2015} extend this approach to 
deep neural networks using DQN \cite{mnih_human-level_2015}.
While trivially achieving decentralisation, these approaches are prone to instability arising from the non-stationarity of the environment induced by simultaneously learning and exploring agents. 
\citet{foerster_stabilising_2017} addresses the issue of non-stationarity when using an experience replay with independent learners to some extent.
\citet{lauer2000algorithm} ignore updates which decrease $Q$-value estimates in order to not prematurely underestimate an action's $Q$-value due to the exploratory actions of other agents. In a tabular setting, they prove this converges to the optimal policy in deterministic environments, provided there is a unique optimal joint-action.
\citet{matignon2007hysteretic} introduce \textit{Hysteretic $Q$-learning} which instead uses a smaller learning rate for decreasing $Q$-value estimates, which is slightly more robust to stochasticity and the presence of multiple optimal joint actions. 
\citet{omidshafiei_deep_2017} utilises Hysteretic $Q$-learning in a multi-task deep RL setting. 
\citet{panait2008theoretical} introduce \textit{Leniency} which ignores updates that decrease $Q$-value estimates with a probability that is decreasing during training. 
\citet{wei2016lenient} show that Leniency outperforms Hysteretic $Q$-learning in cooperative stochastic games, and \citet{palmer2018lenient} extend Leniency to the deep RL setting and show benefits over Hysteretic and fully Independent $Q$-learners on deterministic and stochastic gridworlds with two agents. 
\citet{palmer_negative_2019} maintain
a learned interval whose lower bound is approximately the minimum cumulative reward received when all agents are coordinating. They use this interval when decreasing $Q$-value estimates to distinguish between miscoordination between the agents and the stochasticity of the environment, which Hysteretic and Lenient learners do not distinguish between. 
\citet{lu2019decentralised} utilise Distributional RL \citep{bellemare2017distributional} in combination with Hysteretic $Q$-learning in order to also distinguish between miscoordination and stochasticity, and argue their approach is more stable and robust to hyperparameters than the above methods. 
All of these approaches do not utilise extra state information available during a centralised training regime, nor do they attempt to learn joint action-value functions.

\subsection{Centralised Execution}
In settings where decentralised execution is not mandatory, 
the centralised learning of joint action-value function naturally handles the
coordination problems and avoids the non-stationarity problem. Nonetheless, a centralised action-value function is impractical to scale since the joint action space grows exponentially in the number of agents.
Classical approaches to scalable centralised learning include 
\textit{coordination graphs} \cite{guestrin_multiagent_2002}, which exploit conditional independencies between agents by decomposing a global reward function 
into a sum of agent-local terms.
This can significantly reduce the computation required in order to find the maximum joint action (depending on the exact structure of the coordination graph supplied) through message passing or variable elimination algorithms.
However, specifying an appropriate coordination graph (i.e. not fully-connected) can require significant domain knowledge.
\textit{Sparse cooperative Q-learning} \cite{kok_collaborative_2006} is a tabular $Q$-learning 
algorithm that learns to coordinate the actions of a group of cooperative 
agents only in the states in which such coordination is necessary, encoding
those dependencies in a coordination graph. 
Both methods require the dependencies between agents to be pre-supplied, whereas we do not require such prior knowledge. 
\citet{castellini2019representational} investigate the representational capacity of coordination graphs in a deep RL setting using one-step matrix games.
\citet{bohmer_deep_2019} also extend coordination graphs to the deep RL setting, making use of parameter sharing and limiting the graphs to contain only pairwise edges, in order to consider more complex scenarios. 
\citet{chen_factorized_2018} similarly factor the joint $Q$-function into pairwise interaction terms, performing maximisation using coordinate ascent instead of message-passing.
QMIX (and VDN) correspond to the case of a degenerate fully disconnected coordination graph,  thus enabling fully decentralised execution.

DIAL
\cite{foerster_learning_2016} utilises an end-to-end differentiable architecture that allows for a learned inter-agent communication to emerge via backpropagation.
CommNet 
\cite{sukhbaatar_learning_2016} use a centralised network architecture to exchange information between agents. 
BicNet 
\cite{peng_multiagent_2017} utilise bidirectional RNNs for inter-agent communication in an actor-critic setting and  
additionally requires estimating individual agent rewards.
MAGNet \citep{malysheva_deep_2018} allow agents to centrally learn a shared graph structure that encodes the relevance of individual environment entities to each agent. Based on this graph, agents can communicate and coordinate their actions during centralised execution.
\citet{zhao_learning_2019} show that the performance of decentralised policies can be improved upon through a centralised communication scheme that uses a central information-filtering neural network in between timesteps.
Clustered Deep Q-Networks (CDQN) \citep{pageaud_multiagent_2019} use a hierarchical approach in order to scale learning to more agents and larger joint action spaces. Low-level agents are clustered into groups, each of which is managed by a higher-level agent within a centralised execution setting. All low-level agents within a cluster execute the same action selected by a vote, which alleviates some of the non-stationarity of the independent learning setting.
In contrast to these approaches, QMIX does not require any form of inter-agent communication during decentralised execution.

\subsection{Centralised Training and Decentralised Execution}
A number of methods have developed hybrid approaches that exploit the centralised training opportunity for training fully decentralised policies. 
\citet{gupta_cooperative_2017} present a centralised actor-critic algorithm with per-agent critics, which scales easily with the number of agents.
Similarly, \citet{lowe_multi-agent_2017} present MADDPG, which learns a 
centralised critic for each agent and apply this to competitive games with 
continuous action spaces.
In contrast to \citet{gupta_cooperative_2017}, MADDPG takes advantage of the centralised training paradigm by incorporating information from the other agents into the critics, at the expense of scalability due to the increased input size.
COMA \citep{foerster_counterfactual_2017} instead uses a single centralised critic to train 
decentralised actors in the fully-cooperative setting, estimating a counterfactual advantage function for each 
agent in order to address multi-agent credit assignment. 
\citet{iqbal_actor-attention-critic_2018} devise an actor-critic algorithm with per-agent critics that share an attention mechanism in order to improve scalability compared to a single centralised critic.
Their approach allows for the agents to have differing action spaces as well allowing for individual rewards. The authors do not compare against QMIX and do not evaluate on environments of similar complexity as StarCraft. 
\citet{de_witt_multi-agent_2018} construct a multi-agent actor-critic algorithm with a hierarchical actor policy that is able to use common knowledge between agents for coordination. 
Learning Individual Intrinsic Reward (LIIR) \citep{du_liir:_2019} learns an intrinsic reward function for each agent to supplement the team reward. The intrinsic reward function is trained such that it maximises the team reward function. As a bi-level optimisation process, LIIR is significantly less computationally efficient than QMIX and has not been demonstrated to scale to large numbers of agents.
These approaches are all actor-critic variants which use the policy-gradient theorem, and thus are prone to get stuck sub-optimal local minima. Additionally, all but MADDPG are on-policy algorithms which can have poor sample efficiency compared to off-policy algorithms such as QMIX.

\citet{lin2019cesma} train a centralised action-value function $Q$ which has access to observations of all agent. They then train the decentralised agents via supervised learning to mimic the actions of the centralised policy. Distilling a centralised policy into decentralised agent policies can be problematic since the centralised policy conditions on more information than any of the agents' policies. QMIX instead learns a factored joint $Q$-function that can be easily decentralised due to its architecture.  
\citet{sunehag_value-decomposition_2017} propose \emph{value decomposition 
	networks} (VDN), which allow for centralised value function learning with 
decentralised execution. Their algorithm decomposes a central 
action-value function into a sum of individual agent terms. 
VDN does not make 
use of additional state information during training and can represent only a
limited class of centralised action-value functions.
\citet{son_qtran:_2019} introduce QTRAN which learns a centralised, unrestricted, joint $Q$-function as well as a VDN-factored joint $Q$-function that is decentralisable. 
Since the unrestricted $Q$-value cannot be maximised efficiently, the VDN-factored $Q$ is used to produce the (approximate) maximum joint-action. 
They show that solving a linear optimisation problem involving all joint actions, in which the VDN-factored $Q$-value matches the unrestricted $Q$-value for the approximated maximum joint action and over-estimates for every other action, results in decentralisable policies with the correct argmax.
This allows QTRAN to represent a much larger class of joint $Q$-function than QMIX, as well as produce decentralisable policies for them.
However, exactly solving the linear optimisation problem is prohibitively expensive. Thus, the authors instead optimise a soft approximation using $L_2$ penalties via stochastic gradient descent.
In practice, optimising this loss is difficult which results in poor performance for QTRAN on complex environments such as SMAC.

\subsection{Extensions of QMIX}
\label{sec:rw_qmix}

\citet{mahajan_maven:_2019} show that the representational constraints of QMIX can prohibit it from learning an optimal policy. 
Importantly, they show that this is exacerbated by increased $\epsilon$-greedy exploration.
They introduce MAVEN, which conditions QMIX agents on a shared latent space whose value is chosen at the beginning of an episode by a hierarchical policy. Note that this requires access to the initial state and communication at the first timestep.
A mutual information loss is added to encourage diversity of trajectories across the shared latent space, which allows for relatively-greedy agents (once conditioned on the latent variable) to still achieve committed exploration during training similar to Bootstrapped DQN \citep{osband2016deep}.
This results in significant performance gains on some scenarios in SMAC \citep{samvelyan2019starcraft}.

Action Semantics Networks (ASN) \cite{wang_action_2019} extends QMIX with a novel agent network architecture that separates each agent's $Q$-value estimation of actions that influence other agents from those actions that do not.
The $Q$-values for actions which influence another agent are computed using the relevant parts of the current agent's observation (e.g. the relative position of another agent).
Thus, ASN requires 
intimate a priori knowledge about an environment's action semantics and the structure of the agent's observations, whereas QMIX does not.

\citet{wang_learning_2019} extend QMIX to a setting which allows for communication between agents.
Their approach allows agents to condition their $Q$-values on messages from other agents, which take the form of a real-valued vector. 
An entropy loss and a mutual information loss incentivise the learning of succinct and expressive messages.
\citet{zhang2019efficient} also allow for each agent's $Q$-values to condition on vector-valued messages from other agents. However, they only utilise the messages from other agents if their gap between the best and second-best action is small enough (interpreted as uncertainty in the best choice of action). Additionally, they minimise the variance of the messages across actions to limit noisy or uninformative messages.

SMIX($\lambda$) \cite{yao_smixlambda:_2019} replaces the 1-step $Q$-learning target of QMIX with a SARSA($\lambda$) target.
They incorrectly claim that the $Q$-learning update rule is responsible for the representational limitations of QMIX, instead of the non-negative weights in the mixing network which enforces monotonicity.
They also incorrectly claim that their method can represent a larger class of joint action-value functions than QMIX. 
Since they also use non-negative weights in their mixing network, SMIX($\lambda$) can represent \textbf{exactly} the same class of value functions as QMIX. 

\citet{liu2019value} investigate transfer learning algorithms in a multi-agent setting using QMIX as a base. \citet{yang2019hierarchical} use DIAYN \citep{eysenbach2018diversity} to learn lower-level skills that a higher level QMIX agent uses. \citet{fu2019deep} extend QMIX to allow for action spaces consisting of both discrete and continuous actions.

The applicability of QMIX in real-world application domains has also been considered such as robot swarm coordination \cite{huttenrauch_deep_2018} and stratospheric geoengineering \cite{de_witt_stratospheric_2019}. 

QMIX has been extended from discrete to continuous action spaces \cite[COMIX]{de_witt_deep_2020}. COMIX uses the cross-entropy method to approximate the otherwise intractable greedy per-agent action maximisation step. \citet{de_witt_deep_2020} also present experiments with FacMADDPG, i.e., MADDPG with a factored critic, which show that QMIX-like network factorisations also perform well in actor-critic settings.
COMIX has been found to outperform previous state-of-the-art MADDPG in continuous robotic control suite Multi-Agent Mujoco, thereby illustrating its versatility beyond SMAC.

\subsection{PyMARL and SMAC}
\label{sec:rw_smac}
A number of papers have established unit micromanagement in StarCraft as a benchmark for deep multi-agent RL. 
\citet{usunier_episodic_2016} present an algorithm using a centralised \textit{greedy MDP} and first-order optimisation which they evaluate on Starcraft: BroodWar \citep{synnaeve_torchcraft_2016}. \citet{peng_multiagent_2017} also evaluate their methods on StarCraft. However, neither requires decentralised execution. 
Similar to our setup in SMAC is the work of \citet{foerster_stabilising_2017}, who evaluate replay stabilisation methods for IQL on combat scenarios with up to five agents. \citet{foerster_counterfactual_2017} also uses this setting.

In this paper, we construct unit micromanagement tasks using the \textit{StarCraft II Learning Environment} (SC2LE) \citep{vinyals_starcraft_2017} as opposed to 
StarCraft, since it is actively supported by the game developers and offers a more stable testing environment. 
The full game of StarCraft II has already been used as an RL environment \cite{vinyals_starcraft_2017}, and DeepMind's AlphaStar \cite{vinyals2019grandmaster} has recently shown an impressive level of play on a StarCraft II matchup using a centralised controller. By contrast, SMAC introduces strict decentralisation and local partial observability to turn the StarCraft II game engine into a new set of decentralised cooperative multi-agent problems. 

Since their introduction, SMAC and PyMARL have been used by a number of papers to evaluate and compare multi-agent RL methods \citep{de_witt_multi-agent_2018,wang_few_2019,du_liir:_2019,wang_learning_2019,zhang2019efficient,mahajan_maven:_2019,yao_smixlambda:_2019,bohmer_deep_2019,wang_action_2019}.

\subsection{Hypernetworks and Monotonic Networks}
QMIX relies on a neural network to transform the centralised state into the weights of another neural network, in a manner reminiscent of 
\emph{hypernetworks} \citep{ha_hypernetworks_2016}. This second neural network 
is constrained to be monotonic with respect to its inputs by keeping its 
weights positive. \citet{Dugas_2009} also investigate such functional restrictions 
for neural networks.

\section{Background}
\label{sec:background}

A fully cooperative multi-agent sequential decision-making task can be described as a \emph{decentralised partially observable Markov decision process} (Dec-POMDP) \cite{oliehoek_concise_2016} consisting of a tuple $G=\left\langle S,U,P,r,Z,O,n,\gamma\right\rangle$. 
$s \in S$ describes the true state of the environment.
At each time step, each agent $a \in A \equiv \{1,...,n\}$ chooses an action $u^a\in U$, forming a joint action $\mathbf{u}\in\mathbf{U}\equiv U^n$. 
This causes a transition on the environment according to the state transition function $P(s'|s,\mathbf{u}):S\times\mathbf{U}\times S\rightarrow [0,1]$. 
All agents share the same reward function $r(s,\mathbf{u}):S\times\mathbf{U}\rightarrow\mathbb{R}$ and $\gamma\in[0,1)$ is a discount factor. 

We consider a \textit{partially observable} scenario in which each agent draws individual observations $z\in Z$ according to observation function $O(s,a):S\times A\rightarrow Z$. 
Each agent has an action-observation history $\tau^a\in T\equiv(Z\times U)^*$, on which it conditions a stochastic policy $\pi^a(u^a|\tau^a):T\times U\rightarrow [0,1]$. The joint policy $\pi$ has a joint \textit{action-value function}: $Q^\pi(s_t, \mathbf{u}_t)=\mathbb{E}_{s_{t+1:\infty},\mathbf{u}_{t+1:\infty}} \left[R_t|s_t,\mathbf{u}_t\right]$, where $R_t=\sum^{\infty}_{i=0}\gamma^ir_{t+i}$ is the \textit{discounted return}.

We operate in the framework of centralised training with decentralised 
execution, i.e. the
learning algorithm has access to all local action-observation histories 
$\boldsymbol{\tau}$ and global state $s$, but each agent's learnt policy can 
condition only on its own action-observation history $\tau^a$.

\subsection{Deep $Q$-Learning}

Deep $Q$-learning represents the action-value function with a deep neural network parameterised by $\theta$. \textit{Deep Q-networks} (DQNs) \cite{mnih_human-level_2015} use a  \textit{replay memory} to store the transition tuple $\left\langle s,u,r,s'\right\rangle$, where the state  $s'$  is observed after taking the action $u$ in state $s$ and receiving reward $r$. $\theta$ is learnt by sampling batches of $b$ transitions from the replay memory and minimising the squared \textit{TD error}:
\begin{equation}\label{eq:dqn}
\mathcal{L}(\theta)=\sum\limits_{i=1}^{b}\left[\left(y_i^{\text{DQN}}-Q(s,u;\theta)\right)^2\right],
\end{equation} 
where $y^{\text{DQN}}=r+\gamma\max_{u'} Q(s',u';\theta^-)$. $\theta^-$ are the parameters of a \textit{target network} that are periodically copied from $\theta$ and kept constant for a number of iterations.  

\subsection{Deep Recurrent $Q$-Learning}

In partially observable settings, agents can benefit from conditioning on their entire action-observation history. \citet{hausknecht_deep_2015} propose \textit{deep recurrent Q-networks} (DRQN) that make use of recurrent neural networks. Typically, gated architectures such as LSTM \cite{hochreiter_long_1997} or GRU \cite{chung_empirical_2014} are used to facilitate learning over longer timescales.

\subsection{Independent $Q$-Learning}

Perhaps the most commonly applied method in multi-agent learning is \textit{independent Q-learning} (IQL) \cite{tan_multi-agent_1993}, which decomposes a multi-agent problem into a collection of simultaneous single-agent problems that share the same environment. This approach does not address the non-stationarity introduced due to the changing policies of the learning agents, and thus, unlike $Q$-learning, has no convergence guarantees even in the limit of infinite exploration. In practice, nevertheless, IQL commonly serves as a surprisingly strong baseline even in mixed and competitive games \cite{tampuu_multiagent_2015, leibo_multi-agent_2017}.

\subsection{Value Decomposition Networks}

By contrast, \textit{value decomposition networks} (VDNs) \citep{sunehag_value-decomposition_2017} aim to learn a joint action-value function $Q_{tot}(\boldsymbol{\tau},\mathbf{u})$, where $ \bm{\tau} \in \mathbf{T} \equiv \mathcal{T}^n $ is a joint action-observation history and $ \mathbf{u} $ is a joint action.  It represents $Q_{tot}$ as a sum of individual value functions $Q_a (\tau^a, u^a;\theta^a)$, one for each agent $a$, that condition only on individual action-observation histories:
\begin{equation}\label{eq:vdn}
Q_{tot}(\boldsymbol{\tau}, \mathbf{u}) = \sum_{i=1}^n Q_i (\tau^i, u^i;\theta^i).
\end{equation}
Strictly speaking, each $Q_a$ is a \textit{utility function} \cite{guestrin_multiagent_2002} and not a value function since by itself it does not estimate an expected return.  However, for terminological simplicity we refer to both $Q_{tot}$ and $Q_a$ as value functions.

The loss function for VDN is equivalent to \eqref{eq:dqn}, where $Q$ is replaced by $Q_{tot}$.  An advantage of this representation is that a decentralised policy arises simply from each agent performing greedy action selection with respect to its $Q_a$. 

\section{QMIX}

\label{sec:methods}

In this section, we propose a new approach called QMIX which, like VDN, lies 
between the extremes of IQL and centralised $Q$-learning. However, QMIX can represent a much richer class of action-value functions than VDN. 

Key to our method is the insight that the full factorisation of VDN is not 
necessary in order to extract decentralised policies that are fully 
consistent with their centralised counterpart.
As long as the environment is not adversarial, there exists a deterministic optimal policy conditioned 
on the full action-observation history.
Hence, we only need to establish consistency between the deterministic greedy 
decentralised policies and the deterministic greedy centralised policy based 
on the optimal joint action-value function.
When the greedy decentralised policies are determined by an $\argmax$ over the 
$Q_a$, 
consistency holds if a 
global $\argmax$ performed on $Q_{tot}$ yields the same result as a set of 
individual $\argmax$ operations performed on each $Q_a$:
\begin{equation}
\label{eq:argmax_constist}
\argmax_{\mathbf{u}}Q_{tot}(\boldsymbol{\tau}, \mathbf{u}) = 
\begin{pmatrix}
\argmax_{u^1}Q_1(\tau^1, u^1)   \\
\vdots \\
\argmax_{u^n}Q_n(\tau^n, u^n) \\
\end{pmatrix}.
\end{equation}
This allows each agent $a$ to participate in a decentralised execution solely 
by choosing greedy actions with respect to its $ Q_a $.
As a side effect, if \eqref{eq:argmax_constist} is satisfied, then taking the 
$\argmax$ of $Q_{tot}$, required by off-policy learning updates, is trivially 
tractable without an exhaustive evaluation of $Q_{tot}$ for the exponentially 
many joint actions.

VDN's representation is sufficient to satisfy \eqref{eq:argmax_constist}. 
However, QMIX is based on the observation that this representation can be 
generalised to the larger family of monotonic functions that also satisfy 
\eqref{eq:argmax_constist}.
Monotonicity in this context is defined as a constraint on the relationship 
between $Q_{tot}$ and each $Q_a$:
\begin{equation}
\label{eq:deriv-constr}
\frac{\partial Q_{tot}}{\partial Q_a}  \geq 0,~ \forall a \in A,
\end{equation}
which is sufficient
to satisfy 
\eqref{eq:argmax_constist}, as the following theorem shows.
	\begin{theorem} 
		If $\forall a \in A \equiv\{1, 2, ..., n\}$, $\frac{\partial Q_{tot}}{\partial Q_a}  \geq 0$  then
		\begin{equation}
		 \nonumber
		\argmax_{\mathbf{u}}Q_{tot}(\boldsymbol{\tau}, \mathbf{u}) = 
		\begin{pmatrix}
		\argmax_{u^1}Q_1(\tau^1, u^1)   \\
		\vdots \\
		\argmax_{u^n}Q_n(\tau^n, u^n) \\
		\end{pmatrix}.
		\end{equation}
	\end{theorem}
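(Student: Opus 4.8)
The plan is to exploit the structural fact that the mixing network expresses $Q_{tot}$ as a function $Q_{tot} = f(Q_1, Q_2, \ldots, Q_n)$ of the per-agent values alone, together with the observation that the monotonicity hypothesis \eqref{eq:deriv-constr} says exactly that $f$ is non-decreasing in each of its $n$ arguments. The second ingredient is that agent $a$'s value $Q_a$ depends only on its own pair $(\tau^a, u^a)$, so the $n$ choices $u^1, \ldots, u^n$ act as $n$ independent ``knobs'', each controlling one argument of $f$. Maximising a coordinatewise non-decreasing function over a product set is attained by maximising each coordinate separately, which is precisely the content of the theorem.

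First I would record an elementary monotonicity lemma: if $f : \mathbb{R}^n \to \mathbb{R}$ satisfies $\partial f / \partial x_a \ge 0$ for all $a$ (read, more generally, as ``$f$ is non-decreasing in each argument''), and $x_a \le y_a$ for all $a$, then $f(x) \le f(y)$. The proof is a telescoping hybrid argument: interpolate from $x$ to $y$ one coordinate at a time, $f(x_1,\ldots,x_n) \le f(y_1,x_2,\ldots,x_n) \le f(y_1,y_2,x_3,\ldots,x_n) \le \cdots \le f(y_1,\ldots,y_n)$, where each inequality holds because raising a single coordinate along a line cannot decrease $f$ (by the fundamental theorem of calculus applied to that directional slice, using the nonnegative partial derivative).

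Then I would apply this. Fix $\boldsymbol{\tau}$ and, for each agent $a$, let $u^{*a} \in \argmax_{u^a} Q_a(\tau^a, u^a)$, and set $\mathbf{u}^* = (u^{*1}, \ldots, u^{*n})$. For an arbitrary joint action $\mathbf{u} = (u^1, \ldots, u^n)$ we have $Q_a(\tau^a, u^a) \le Q_a(\tau^a, u^{*a})$ for every $a$ by definition of $u^{*a}$; applying the lemma with $x_a = Q_a(\tau^a, u^a)$ and $y_a = Q_a(\tau^a, u^{*a})$ gives $Q_{tot}(\boldsymbol{\tau}, \mathbf{u}) \le Q_{tot}(\boldsymbol{\tau}, \mathbf{u}^*)$. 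Hence $\mathbf{u}^*$ attains $\max_{\mathbf{u}} Q_{tot}(\boldsymbol{\tau}, \mathbf{u})$, which is exactly the asserted identity: stacking the per-agent $\argmax$es yields a joint action lying in $\argmax_{\mathbf{u}} Q_{tot}$.

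The argument is short, so the ``hard part'' is stating the hypothesis and conclusion cleanly rather than overcoming a deep obstacle, and two points need care. First, \eqref{eq:deriv-constr} is written as a derivative condition, but the actual QMIX mixing network is only piecewise linear; the statement should therefore be understood as monotonicity (non-decreasingness of $f$ in each argument), with the lemma phrased accordingly — this is the only place differentiability would otherwise be invoked. Second, the displayed equality is best read as an identity about a valid greedy selection: when the per-agent $\argmax$es are not unique, any consistent tie-breaking produces a joint action in $\argmax_{\mathbf{u}} Q_{tot}$ (this $\supseteq$ direction is what is needed for decentralised execution), and if in addition $f$ is strictly increasing in each argument the two sides coincide as sets. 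I would state the theorem with this understanding and prove the $\supseteq$ direction as above.
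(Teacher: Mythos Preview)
Your proposal is correct and follows essentially the same approach as the paper: both use a coordinatewise telescoping argument, replacing one $Q_a(\tau^a,u^a)$ at a time by its per-agent maximum and invoking monotonicity at each step to conclude that the tuple of per-agent $\argmax$es attains $\max_{\mathbf{u}} Q_{tot}$. Your additional remarks on reading \eqref{eq:deriv-constr} as coordinatewise non-decreasingness (rather than literal differentiability) and on tie-breaking for non-unique $\argmax$es are sound refinements that the paper leaves implicit.
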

	\begin{proof}
	Proof is provided in Appendix \ref{appendix:monotonicity}.
	\end{proof}

\begin{figure*}[t]
	\centering
	\includegraphics[width=0.5\textwidth]{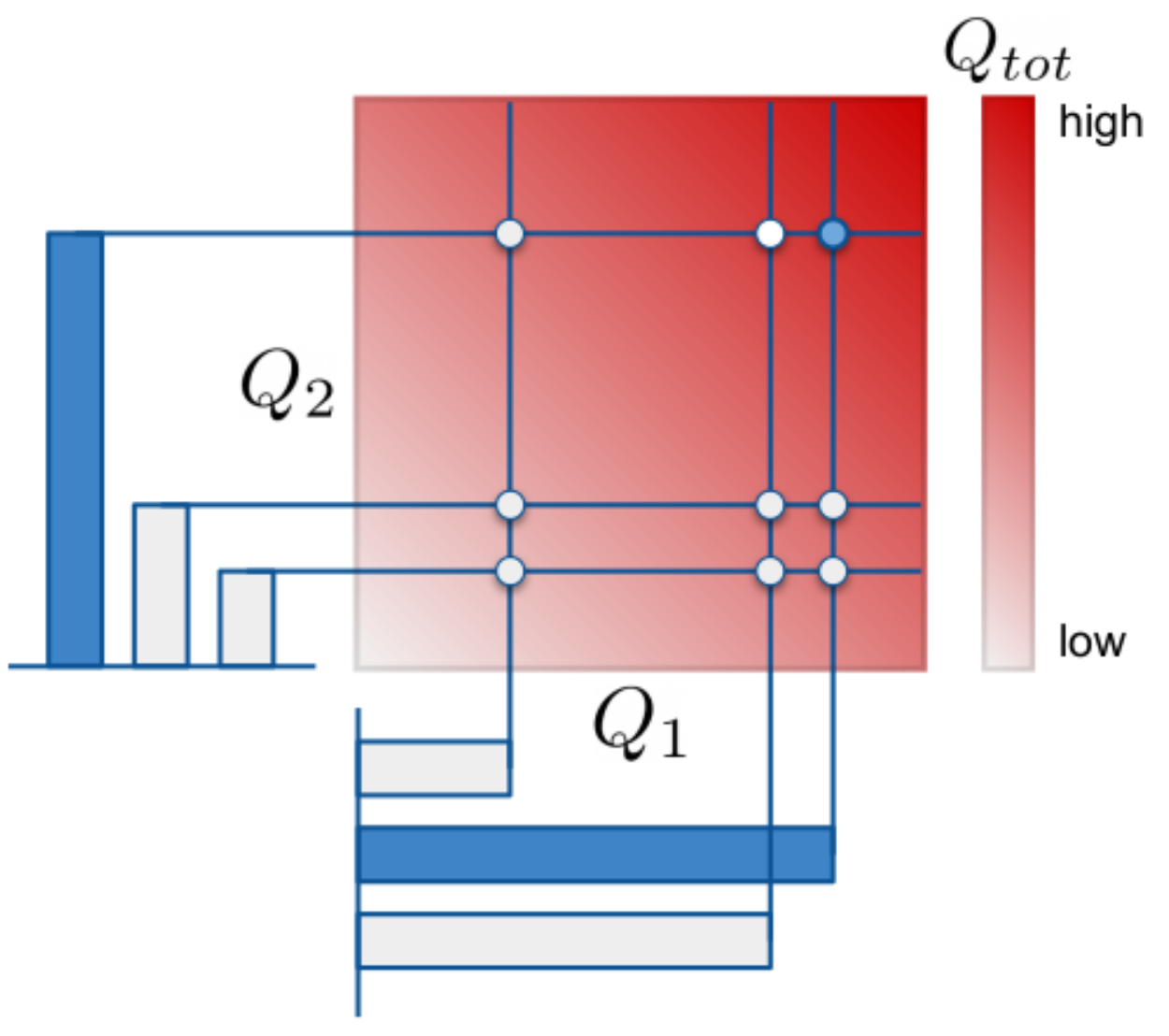}
	\caption{The discrete per-agent action-value scores $Q_a$ are fed into the monotonic 
		function $Q_{tot}(Q_1, Q_2)$. The maximum $Q_a$ for each agent is shown in blue, which corresponds to the maximum $Q_{tot}$ also shown in blue. The constraint \eqref{eq:argmax_constist} is 
		satisfied due to the monotonicity of $Q_{tot}$. 
	}
	\label{fig:monotonic}
\end{figure*}

Figure \ref{fig:monotonic} illustrates the relationship between $Q_{tot}$ and individual $Q_a$ functions, and how monotonicity leads to a decentralisable $\argmax$ in an example with two 
agents with three possible actions.
Each agent $a$ produces scores $Q_a(u_a)$ for each discrete action that it can 
take (columns with the local $\argmax$ in blue).
These are the inputs to a continuous monotonic mixing function 
$Q_{tot}(Q_1,...,Q_N)$, the output of which is represented by the monotonically 
increasing heatmap.
The intersections of the blue lines indicate the estimated $Q_{tot}$ for each 
of the discrete joint actions.
Due to the monotonicity of $Q_{tot}$, these joint-action value estimates 
maintain the ordering corresponding to each agent's $Q_a$, for that agent's 
actions when the other agents' actions remain fixed.
The result is that the global greedy joint action of $Q_{tot}$, indicated by 
the blue dot, corresponds to the set of decentralised greedy actions. 

We now describe how QMIX enforces \eqref{eq:deriv-constr} in practice, by 
representing $Q_{tot}$ using an architecture consisting of \textit{agent 
networks}, a \textit{mixing network}, and a set of \emph{hypernetworks} 
\cite{ha_hypernetworks_2016}.
Figure \ref{fig:QMIX} illustrates the overall setup.

\begin{figure*}[t]
	\centering
	\includegraphics[width=\textwidth]{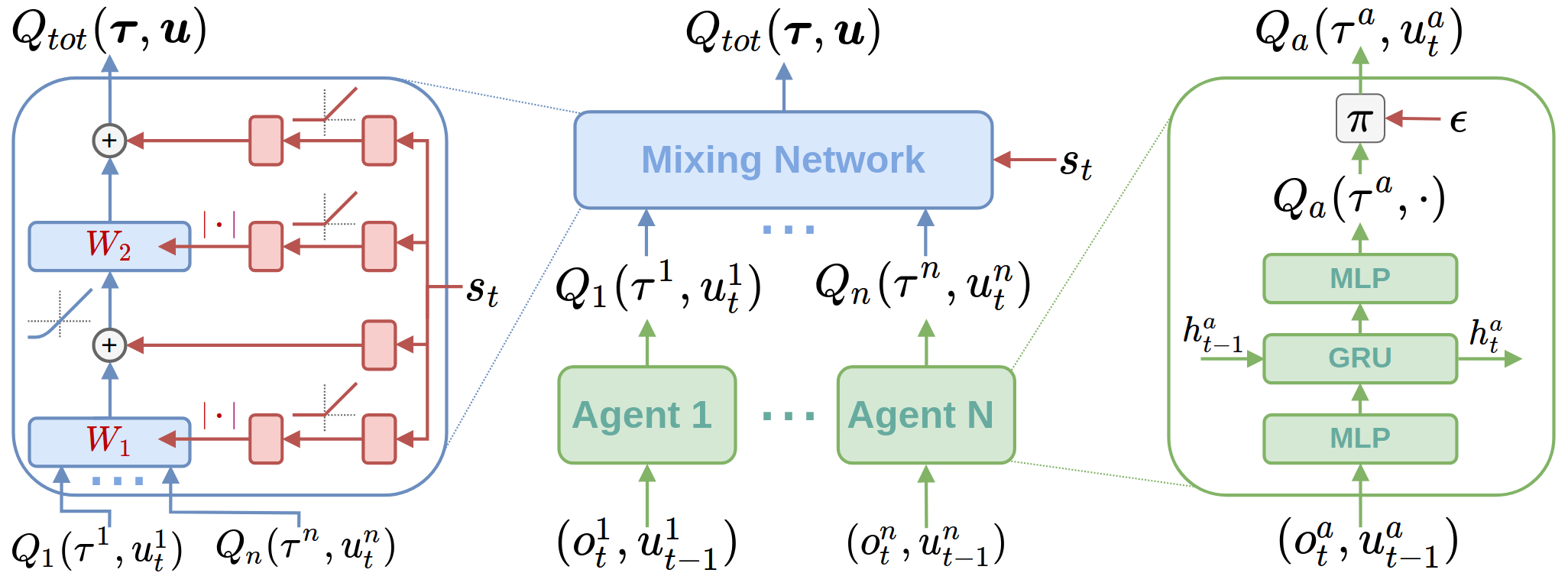}
	\text{~~~~~~~~~~~~~}(a) \hfill (b) \hfill (c) 
	\text{~~~~~~~~~~}
	\caption{(a) Mixing network structure. In red are the hypernetworks that 
	produce the weights and biases for mixing network layers shown in blue. (b) 
	The overall QMIX architecture. (c) Agent network structure. Best viewed in 
	colour.}
	\label{fig:QMIX}
\end{figure*}

For each agent $a$, there is one agent network that represents its individual value function $Q_a (\tau^a, u^a)$. We represent agent networks as DRQNs that receive the current individual observation $o^a_t$ and the last action $u^a_{t-1}$ as input at each time step, as shown in Figure \ref{fig:QMIX}c. We include the last actions since they are part of the action-observation history $\tau^a$ on which the decentralised policy can condition. Due to the use of stochastic policies during training, it is necessary to provide the actual action that was executed in the environment. If weights are shared across the agent networks in order to speed up  learning, agent IDs are included as part of the observations to allow for heterogeneous policies. %

The mixing network is a feed-forward neural network that takes the agent 
network outputs as input and mixes them monotonically, producing the values of 
$Q_{tot}$, as shown in Figure \ref{fig:QMIX}a. To enforce the monotonicity 
constraint of \eqref{eq:deriv-constr}, the weights (but not the biases) of the 
mixing network are restricted to be non-negative. This allows the mixing 
network to approximate any monotonic function arbitrarily closely in the limit 
of infinite width \citep{Dugas_2009}.

The weights of each layer of the mixing network are produced by separate 
hypernetworks. Each hypernetwork takes the state $s$ as input and generates the 
weights of one layer of the mixing network. 
Each hypernetwork consists of two fully-connected layers with a ReLU nonlinearity, followed by an absolute activation function, to ensure 
that the mixing network weights are non-negative. The output of the 
hypernetwork is then a vector, which is reshaped into a matrix of appropriate 
size. 
The biases are produced in the same manner but are not restricted to 
being non-negative. 
The first bias is produced by a hypernetwork with a single linear layer, and the final bias is produced by a two-layer hypernetwork with a 
ReLU nonlinearity. Figure \ref{fig:QMIX}a illustrates the mixing network and 
the hypernetworks.

The state is used by the hypernetworks rather than being passed directly into the mixing network because $Q_{tot}$ is allowed to depend on the extra state 
information in non-monotonic ways. Thus, it would be overly constraining to pass some function of $s$ through the monotonic network alongside the 
per-agent values.
Instead, the use of hypernetworks makes it possible to condition the 
weights of the monotonic network on $s$ in an arbitrary way, thus 
integrating the full state $s$ into the joint action-value estimates as 
flexibly as possible.
The choice of nonlinearity for the mixing network is also an important consideration due to its interaction with the non-negative weights.
A ReLU is not a good choice since a negative input to the mixing network is likely to remain negative (depending on the biases), which would then be zeroed out by the ReLU leading to no gradients for all agent networks. 
It is for this reason that we use an ELU.

QMIX is trained end-to-end to minimise the following loss:
\begin{equation}\label{eq:qmix_loss}
\mathcal{L}(\theta)=\sum\limits_{i=1}^b\left[\left(y_i^{tot} - Q_{tot}(\boldsymbol{\tau}, \mathbf{u}, s; \theta) \right)^2\right],
\end{equation} 
where $b$ is the batch size of transitions sampled from the replay buffer, the DQN target is given by $y^{tot} = r+\gamma\max_{\mathbf{u}'} Q_{tot}(\boldsymbol{\tau}', \mathbf{u}', s'; \theta^-)$, and $\theta^-$ are the parameters of a target network as in DQN. %
\eqref{eq:qmix_loss} is analogous to the standard DQN loss of \eqref{eq:dqn}. Since \eqref{eq:argmax_constist} holds, we can perform the maximisation of $Q_{tot}$ in time linear in the number of agents (as opposed to scaling exponentially in the worst case). %

The mixing network relies on centralised training, 
although it may be discarded after training to allow fully decentralised 
execution of the learned joint policy.
In a setting with limited communication, the core algorithmic mechanism of QMIX 
can still be applied: the mixing network can be viewed as a manager that 
coordinates learning, but requires only the communication of low-dimensional 
action-values from, and gradients to, the decentralised agents.
In a setting without communication constraints, we can further exploit the 
centralised training setting by, e.g., sharing parameters between agents for more 
efficient learning.

\begin{algorithm}[h!]
	\caption{QMIX}
	\label{alg:qmix}
	\begin{algorithmic}[1]
		\STATE \mbox{Initialise $\theta$, the parameters of mixing network, agent networks and hypernetwork.}
		\STATE Set the learning rate $\alpha$ and replay buffer $\mathcal{D} = \left\{ \right\}$
		\STATE $\text{step} = 0, \theta^- = \theta$  
		\WHILE{$\text{step} < \text{step}_{max}$}
		\STATE{$t = 0, s_0=\text{initial state}$}
		\WHILE{$s_t \neq terminal$ and $t < \text{episode limit}$}
		\FOR{each agent $a$}
		\STATE $\tau^a_t = \tau^a_{t-1} \cup \{(o_t, u_{t-1})\}$
		\STATE $\epsilon = \text{epsilon-schedule}(step)$
		\STATE {$u^a_t=
		\begin{cases}
			\argmax_{u^a_t}Q(\tau^a_t, u^a_t)~~~~\text{with probability }1 - \epsilon\\
			randint(1, |U|)~~~~~~~~~~\text{with probability }\epsilon\\
		\end{cases}$}
		\ENDFOR
		\STATE Get reward $r_t$ and next state $s_{t+1}$
		\STATE $\mathcal{D} = \mathcal{D} \cup \left\{(s_t, \mathbf{u}_t, r_t, s_{t+1} )\right\}$
		\STATE{$t=t+1, \text{step}=\text{step}+1$}
		\ENDWHILE
		\IF {$|\mathcal{D}| > \text{batch-size}$}
		\STATE {b $\leftarrow$ random batch of episodes from $\mathcal{{D}}$}
		\FOR {each timestep $t$ in each episode in batch $b$}
		\STATE {$Q_{tot} = \textit{Mixing-network~}(Q_1(\tau^1_t,u_t^1),\dots,Q_n(\tau^n_t,u_t^n); \textit{Hypernetwork}(s_t; \theta))$}
		\STATE {Calculate target $Q_{tot}$ using $\textit{Mixing-network}$ with $\textit{Hypernetwork}(s_t; \theta^-))$}
		\ENDFOR
		\STATE {$\Delta Q_{tot} = y^{tot} - Q_{tot}$ // Eq (\ref{eq:qmix_loss})} 
		\STATE {$\Delta \theta = \nabla_\theta(\Delta Q_{tot})^2$}
		\STATE {$\theta = \theta - \alpha \Delta \theta$}
		\ENDIF
		\IF {update-interval steps have passed}
		\STATE {$\theta^- = \theta$} 
		\ENDIF
		\ENDWHILE
	\end{algorithmic}
\end{algorithm}

In Algorithm \ref{alg:qmix} we outline the pseudocode for the particular implementation of QMIX we use for all of our experiments. 
The choice to gather rollouts from an entire episode (line 17) before executing a single gradient descent step (line 24), as well as using per-agent $\epsilon$-greedy action selection (line 10), is not a requirement for QMIX but is an implementation detail. However, the action selection based solely on agent network's $Q$-values (line 10), as well as calculation of $Q_{tot}$ and its target using the mixing network and hypernetworks (lines 19-20) are essential features of QMIX. The update of per-agent action-value functions and hypernetworks using the DQN-style loss with respect to $Q_{tot}$ and joint reward (lines 22-24) is also central to our method.

\subsection{Representational Complexity}

The value function class representable with QMIX includes any value function that can be factored into a nonlinear monotonic combination of the agents' individual value functions in the fully observable setting. 

This follows since the mixing network is a universal function approximator of monotonic functions \cite{Dugas_2009}, and hence can represent any value function that factors into a nonlinear monotonic combination of the agent's individual value functions. Additionally, we require that the agent's individual value functions order the values of the actions appropriately. By this we mean that $Q_a$ is such that $Q_a(s_t, u^a) > Q_a(s_t, u'^a) \iff Q_{tot}(s_t, (\mathbf{u}^{-a}, u^a)) > Q_{tot}(s_t, (\mathbf{u}^{-a}, u'^a))$, i.e., they can represent a function that respects the ordering of the agent's actions in the joint-action value function. Since the agents' networks are universal function approximators \cite{pinkus1999approximation}, they can represent such a $Q_a$. Hence QMIX is able to represent any value function that factors into a nonlinear monotonic combination of the agent's individual value functions. 

In a Dec-POMDP, QMIX cannot necessarily represent the true optimal $Q$-function. 
This is because each agent's observations are no longer the full state, and thus they might not be able to distinguish the true state given their local observations. If the agent's value function ordering is then wrong, i.e., $Q_a(\tau^a, u) > Q_a(\tau^a, u')$ when $Q_{tot}(s_t, (\mathbf{u}^{-a}, u)) < Q_{tot}(s_t, (\mathbf{u}^{-a}, u'))$, then the mixing network would be unable to correctly represent $Q_{tot}$ given the monotonicity constraints. 

QMIX thus expands upon the linear monotonic value functions that are representable by VDN. Table \ref{table_matrix_examples}a gives an example of a monotonic value function for the simple case of a two-agent matrix game. 
Note that VDN is unable to represent this simple monotonic value function.
Table \ref{table_matrix_examples_soln}a provides the optimal $Q_{tot}$ values for VDN, minimising a squared loss. Section \ref{sec:two_step_game} additionally provides results for the values that VDN learns in a function approximation setting.

\begin{table}[h]
    \centering
    \setlength{\extrarowheight}{3pt}
    \begin{tabular}{cc|*{2}{>{\centering\arraybackslash}p{.025\linewidth}|}}
        & \multicolumn{1}{c}{} & \multicolumn{2}{c}{Agent $2$} \\
        & \multicolumn{1}{c}{} & \multicolumn{1}{c}{$A$}  & \multicolumn{1}{c}{$B$} \\ \cline{3-4} 
        \multirow{2}{*}{\rotatebox[origin=c]{90}{Agent $1$}}  & $A$ & 0 & 1 \\ \cline{3-4}
        & $ B $ & 1 & 8  \\\cline{3-4}
        & \multicolumn{1}{c}{} & \multicolumn{2}{c}{(a)} \\
    \end{tabular}~~~~~~~
    \begin{tabular}{cc|*{2}{>{\centering\arraybackslash}p{.025\linewidth}|}}
        & \multicolumn{1}{c}{} & \multicolumn{2}{c}{Agent $2$} \\
        & \multicolumn{1}{c}{} & \multicolumn{1}{c}{$A$}  & \multicolumn{1}{c}{$B$} \\ \cline{3-4}
        \multirow{2}{*}{\rotatebox[origin=c]{90}{Agent $1$}}  & $A$ & 2 & 1 \\ \cline{3-4}
        & $ B $ & 1 & 8  \\\cline{3-4}
        & \multicolumn{1}{c}{} & \multicolumn{2}{c}{(b)} \\
    \end{tabular}
    \caption{(a) An example of a monotonic payoff matrix, (b) a nonmonotonic payoff matrix.}
    \label{table_matrix_examples}
\end{table}

\begin{table}[h]
    \centering
    \setlength{\extrarowheight}{3pt}
    \begin{tabular}{cc|*{2}{>{\centering\arraybackslash}p{.05\linewidth}|}}
        & \multicolumn{1}{c}{} & \multicolumn{2}{c}{Agent $2$} \\
        & \multicolumn{1}{c}{} & \multicolumn{1}{c}{$A$}  & \multicolumn{1}{c}{$B$} \\ \cline{3-4} 
        \multirow{2}{*}{\rotatebox[origin=c]{90}{Agent $1$}}  & $A$ & -1.5 & 2.5 \\ \cline{3-4}
        & $ B $ & 2.5 & 6.5  \\\cline{3-4}
        & \multicolumn{1}{c}{} & \multicolumn{2}{c}{(a)} \\
    \end{tabular}~~~~~~~
    \begin{tabular}{cc|*{2}{>{\centering\arraybackslash}p{.05\linewidth}|}}
        & \multicolumn{1}{c}{} & \multicolumn{2}{c}{Agent $2$} \\
        & \multicolumn{1}{c}{} & \multicolumn{1}{c}{$A$}  & \multicolumn{1}{c}{$B$} \\ \cline{3-4}
        \multirow{2}{*}{\rotatebox[origin=c]{90}{Agent $1$}}  & $A$ & 4/3 & 4/3 \\ \cline{3-4}
        & $ B $ & 4/3 & 8  \\\cline{3-4}
        & \multicolumn{1}{c}{} & \multicolumn{2}{c}{(b)} \\
    \end{tabular}
    \caption{(a) Optimal $Q_{tot}$ values for VDN on the example in Table \ref{table_matrix_examples}a. (b) Optimal $Q_{tot}$ values for QMIX on the example in Table \ref{table_matrix_examples}b.}
    \label{table_matrix_examples_soln}
\end{table}

However, the constraint in \eqref{eq:deriv-constr} prevents QMIX from representing value functions that do not factorise in such a manner. A simple example of such a value function for a two-agent matrix game is given in Table \ref{table_matrix_examples}b. Intuitively, any value function for which an agent's best action depends on the actions of the other agents \emph{at the same time step} will not factorise appropriately, and hence cannot be perfectly represented by QMIX. 
Table \ref{table_matrix_examples_soln}b provides the optimal $Q_{tot}$ values for QMIX when minimising a squared loss.
For the example in Table \ref{table_matrix_examples}b, QMIX still recovers the optimal policy and learns the correct maximum over the $Q$-values.
Formally, the class of value functions that cannot be represented by QMIX are called \emph{nonmonotonic} \citep{mahajan_maven:_2019}.

Although QMIX cannot perfectly represent all value functions, its increased representational capacity allows it to learn $Q$-value estimates that are more accurate for computing the target values to regress onto during $Q$-learning targets (bootstrapping). 
In particular, we show in Section \ref{sec:two_step_game} that the increased representational capacity allows for QMIX to learn the optimal policy in an environment with no per-timestep coordination, compared to VDN which still fails to learns the optimal policy.
In Section \ref{sec:rnd_matrix} we show that QMIX learns significantly more accurate maxima over $Q_{tot}$ in randomly generated matrix games, demonstrating that it learns better $Q$-value estimates for bootstrapping.   
Finally, we show in Section \ref{sec:results} that QMIX significantly outperforms VDN on the challenging SMAC benchmark.

\section{Two-Step Game}
\label{sec:two_step_game}

To illustrate the effects of representational complexity of VDN and QMIX, we devise a simple two-step cooperative matrix game for two agents. 

At the first step, Agent $1$ chooses which of the two matrix games to play in the next timestep. For the first time step, the actions of Agent $2$ have no effect. In the second step, both agents choose an action and receive a global reward according to the payoff matrices depicted in Table \ref{tab:2step_game}.

We train VDN and QMIX on this task for $5000$ episodes and examine the final learned value functions in the limit of full exploration ($\epsilon=1$). Full exploration ensures that each method is guaranteed to eventually explore all available game states, such that the representational capacity of the state-action value function approximation remains the only limitation.
The full details of the architecture and hyperparameters used, as well as additional results are provided in Appendix \ref{sec:2step_arch}.

\begin{table}[h!]
	\setlength{\extrarowheight}{3pt}
	\centering
	\begin{tabular}{cc|*{2}{>{\centering\arraybackslash}p{.05\linewidth}|}}
		& \multicolumn{1}{c}{} & \multicolumn{2}{c}{\bb{Agent $2$}} \\
		& \multicolumn{1}{c}{} & \multicolumn{1}{c}{\bb{$A$}}  & \multicolumn{1}{c}{\bb{$B$}} \\ 
		\cline{3-4}
        \multirow{2}{*}{\rotatebox[origin=c]{90}{\cc{Agent $1$}}} & \cc{$A$} & 7 & 7 \\ \cline{3-4}
        & \cc{$B$} & 7 & 7  \\\cline{3-4}
        & \multicolumn{1}{c}{}  & \multicolumn{2}{c}{State $2$A} \\
    \end {tabular}~~~~~~~
    \begin{tabular}{cc|*{2}{>{\centering\arraybackslash}p{.05\linewidth}|}}
    	& \multicolumn{1}{c}{} & \multicolumn{2}{c}{\bb{Agent $2$}} \\
        & \multicolumn{1}{c}{} & \multicolumn{1}{c}{\bb{$A$}}  & \multicolumn{1}{c}{\bb{$B$}} \\ 
        \cline{3-4}
		\multirow{2}{*}{\rotatebox[origin=c]{90}{\cc{Agent $1$}}} & \cc{$A$} & 0 & 1 \\ \cline{3-4}
		& \cc{$B$} & 1 & 8  \\\cline{3-4}
		& \multicolumn{1}{c}{} & \multicolumn{2}{c}{State $2$B} \\
	\end{tabular}
    \caption{Payoff matrices of the two-step game after the Agent 1 chose the first action. Action A takes the agents to State $2$A and action B takes them to State $2$B.}
    \label{tab:2step_game}
\end{table}

\begin{table}[h]
	\setlength{\extrarowheight}{3pt}
	\centering
	(a)
	\begin{tabular}{c|*{2}{>{\centering\arraybackslash}p{.08\linewidth}|}}
		\multicolumn{1}{c}{} & \multicolumn{2}{c}{State $1$} \\
		\multicolumn{1}{c}{} & \multicolumn{1}{c}{\bb{$A$}}  & \multicolumn{1}{c}{\bb{$B$}} \\ \cline{2-3}
		\cc{$A$} & 6.94 & 6.94 \\ \cline{2-3}
		\cc{$B$} & 6.35 & 6.36  \\\cline{2-3}
	\end{tabular}~
	\begin{tabular}{|*{2}{>{\centering\arraybackslash}p{.08\linewidth}|}}
		\multicolumn{2}{c}{State $2$A} \\
		\multicolumn{1}{c}{\bb{$A$}}  & \multicolumn{1}{c}{\bb{$B$}} \\ \cline{1-2}
		6.99 & 7.02 \\\cline{1-2}
		6.99 & 7.02  \\\cline{1-2}
	\end{tabular}~
	\begin{tabular}{|*{2}{>{\centering\arraybackslash}p{.08\linewidth}|}}
		\multicolumn{2}{c}{State $2$B} \\
		\multicolumn{1}{c}{\bb{$A$}}  & \multicolumn{1}{c}{\bb{$B$}} \\\cline{1-2}
		\text{-1.87} & 2.31 \\\cline{1-2}
		2.33 & 6.51  \\\cline{1-2}
	\end{tabular}\\\bigskip

	(b)
	\begin{tabular}{c|*{2}{>{\centering\arraybackslash}p{.08\linewidth}|}}
		\multicolumn{1}{c}{} & \multicolumn{1}{c}{\bb{$A$}}  & \multicolumn{1}{c}{\bb{$B$}} \\ \cline{2-3}
		\cc{$A$} & 6.93 & 6.93  \\ \cline{2-3}
		\cc{$B$} & 7.92 & 7.92  \\\cline{2-3}
	\end{tabular}~
	\begin{tabular}{|*{2}{>{\centering\arraybackslash}p{.08\linewidth}|}}
		\multicolumn{1}{c}{\bb{$A$}}  & \multicolumn{1}{c}{\bb{$B$}} \\ \cline{1-2}
		7.00 & 7.00 \\ \cline{1-2}
		7.00 & 7.00  \\\cline{1-2}
	\end{tabular}~
	\begin{tabular}{|*{2}{>{\centering\arraybackslash}p{.08\linewidth}|}}
		\multicolumn{1}{c}{\bb{$A$}}  & \multicolumn{1}{c}{\bb{$B$}} \\\cline{1-2}
		0.00 & 1.00 \\\cline{1-2}
		1.00 & 8.00 \\\cline{1-2}
	\end{tabular}\\
    \caption{$Q_{tot}$ on the two-step game for (a) VDN and (b) QMIX.}
    \label{qmix_2step_game_main}
\end{table}

Table \ref{qmix_2step_game_main}, which shows the learned values for $Q_{tot}$, demonstrates that QMIX's higher representational capacity allows it to accurately represent the joint-action value function whereas VDN cannot. This directly translates into VDN learning the suboptimal strategy of selecting Action A at the first step and receiving a reward of 7, whereas QMIX recovers the optimal strategy from its learnt joint-action values and receives a reward of 8. 

The simple example presented in this section demonstrates the importance of accurate $Q$-value estimates, especially for the purpose of bootstrapping.
In Section \ref{sec:rnd_matrix} we provide further evidence that QMIX's increased representational capacity allows it to learn more accurate $Q$-value, that directly lead to more accurate bootstrapped estimates.

\section{Random Matrix Games}
\label{sec:rnd_matrix}

To demonstrate that QMIX learns more accurate $Q$-values than VDN for the purpose of bootstrapping, we compare 
the learnt values of $\max_{\mathbf{u}} Q_{tot}(s, \mathbf{u})$
on randomly generated matrix games.
Single-step matrix games allow us to compare the learnt maximum $Q$-values across both methods in isolation.
We focus on $\max_{\mathbf{u}} Q_{tot}(s, \mathbf{u})$ because it is the only learnt quantity involved in the bootstrapping for 1-step $Q$-learning.

The maximum return for each matrix is $10$, and all other values are drawn uniformly from $[0,10)$.
Each individual seed has different values for the payoff matrix, but within each run they remain fixed.
We set $\epsilon=1$ to ensure all actions are sampled and trained on equally.

\begin{figure*}[h!]
	\centering
	\includegraphics[width=0.32\textwidth]{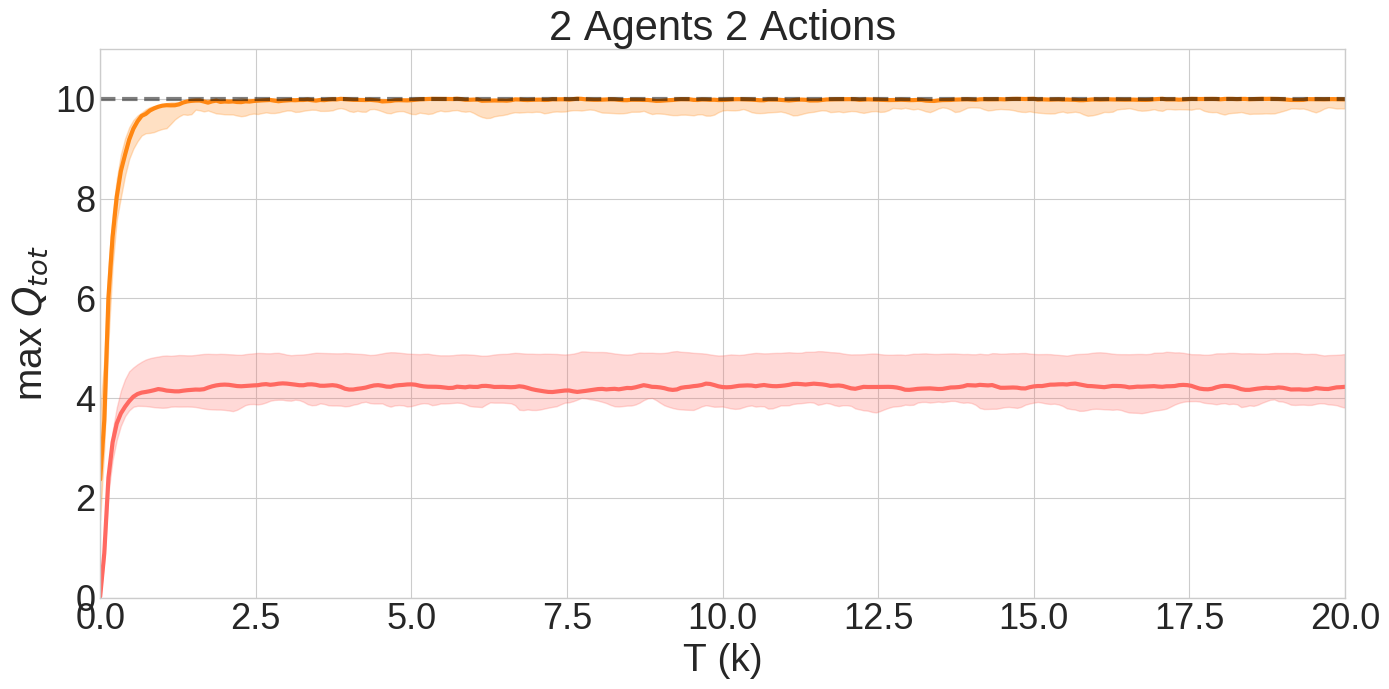}
	\includegraphics[width=0.32\textwidth]{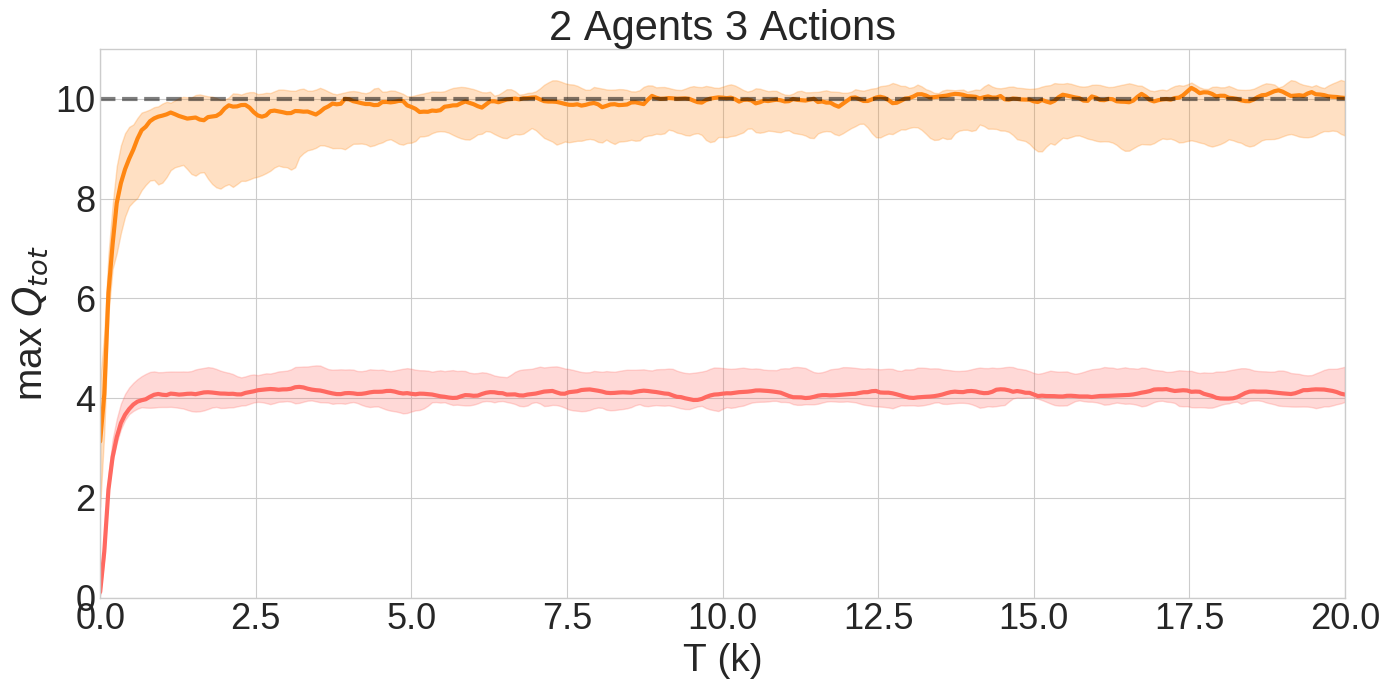}
	\includegraphics[width=0.32\textwidth]{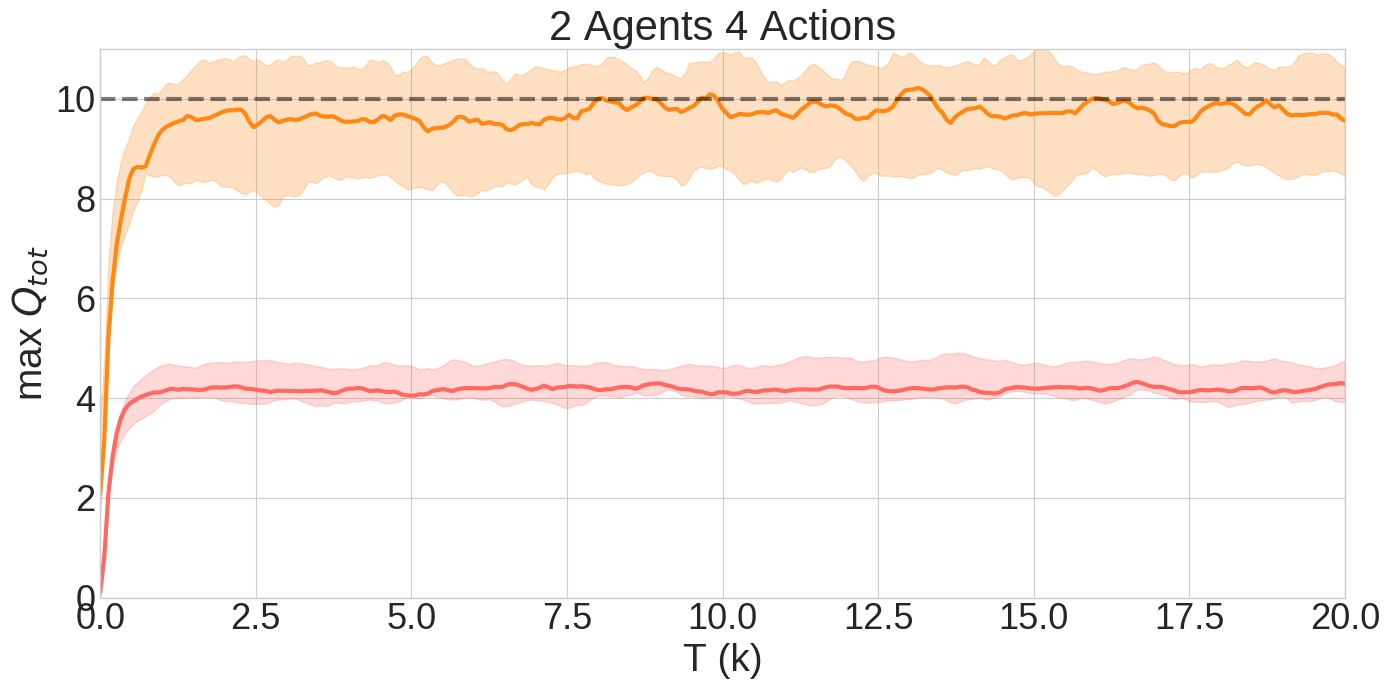}
	\includegraphics[width=0.32\textwidth]{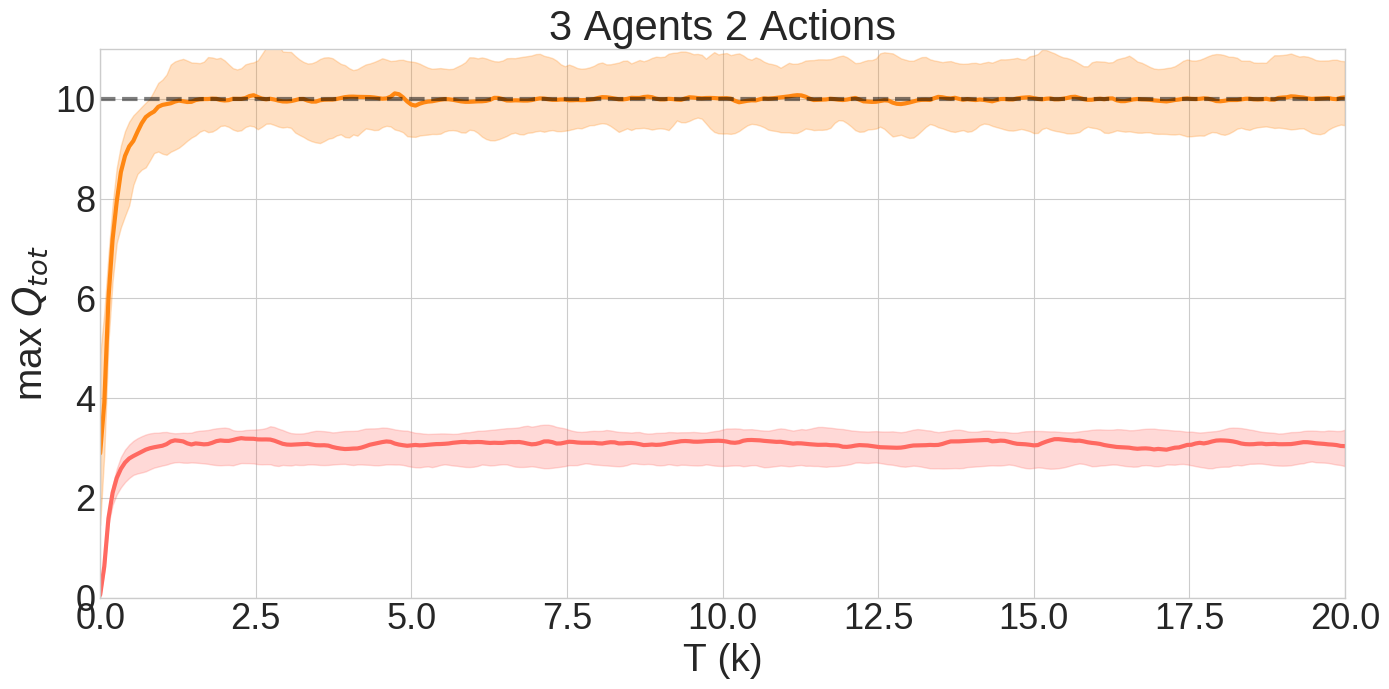}
	\includegraphics[width=0.32\textwidth]{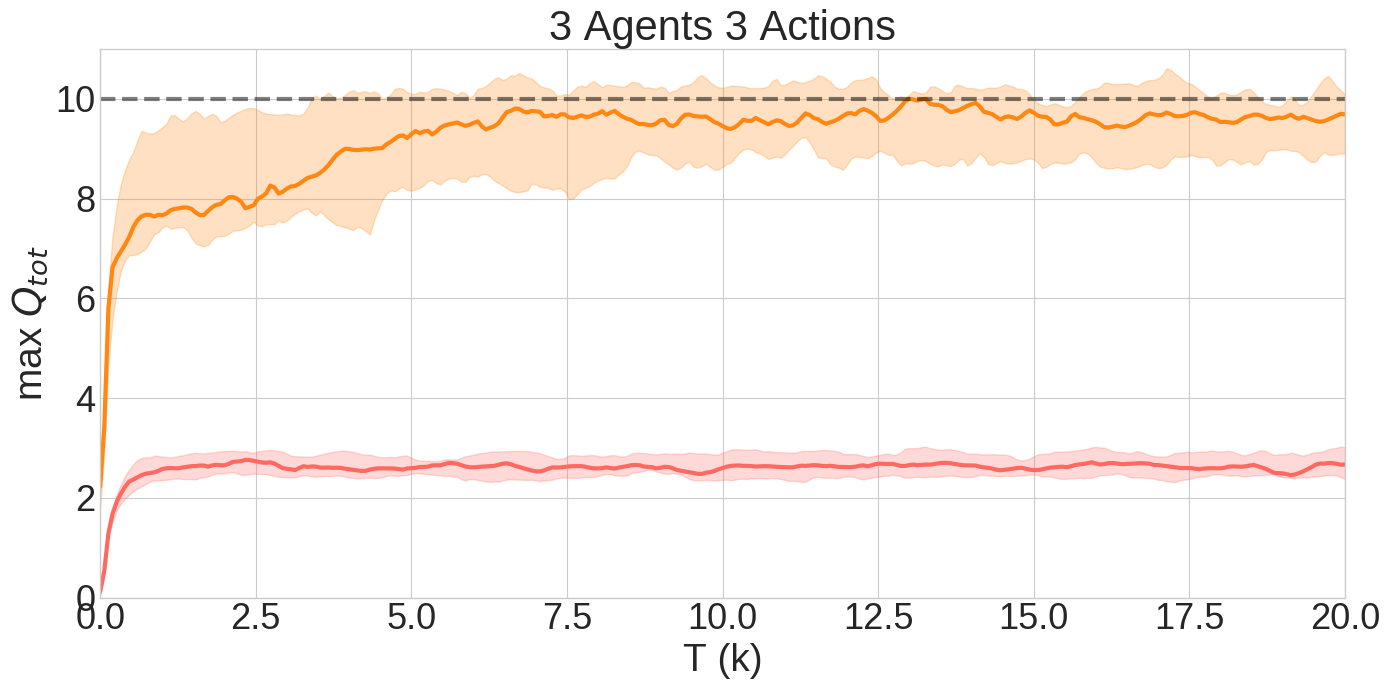}
	\includegraphics[width=0.32\textwidth]{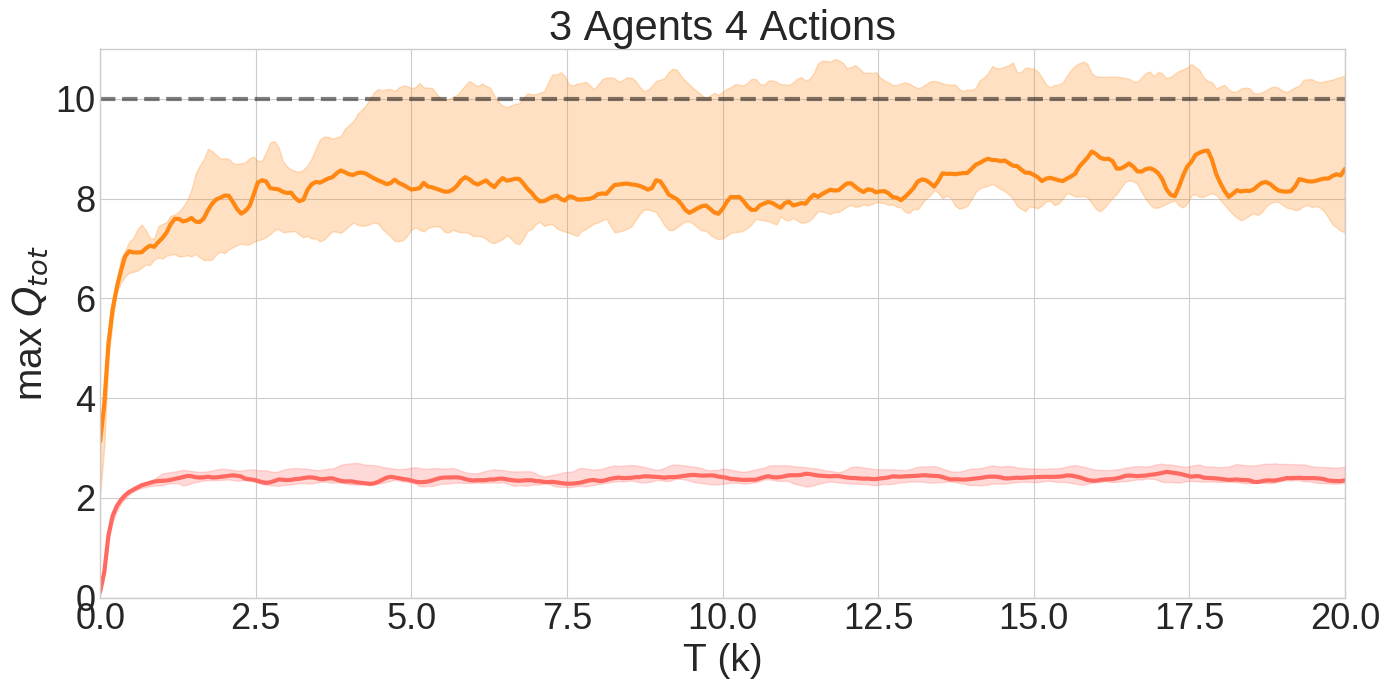}
	\includegraphics[width=0.32\textwidth]{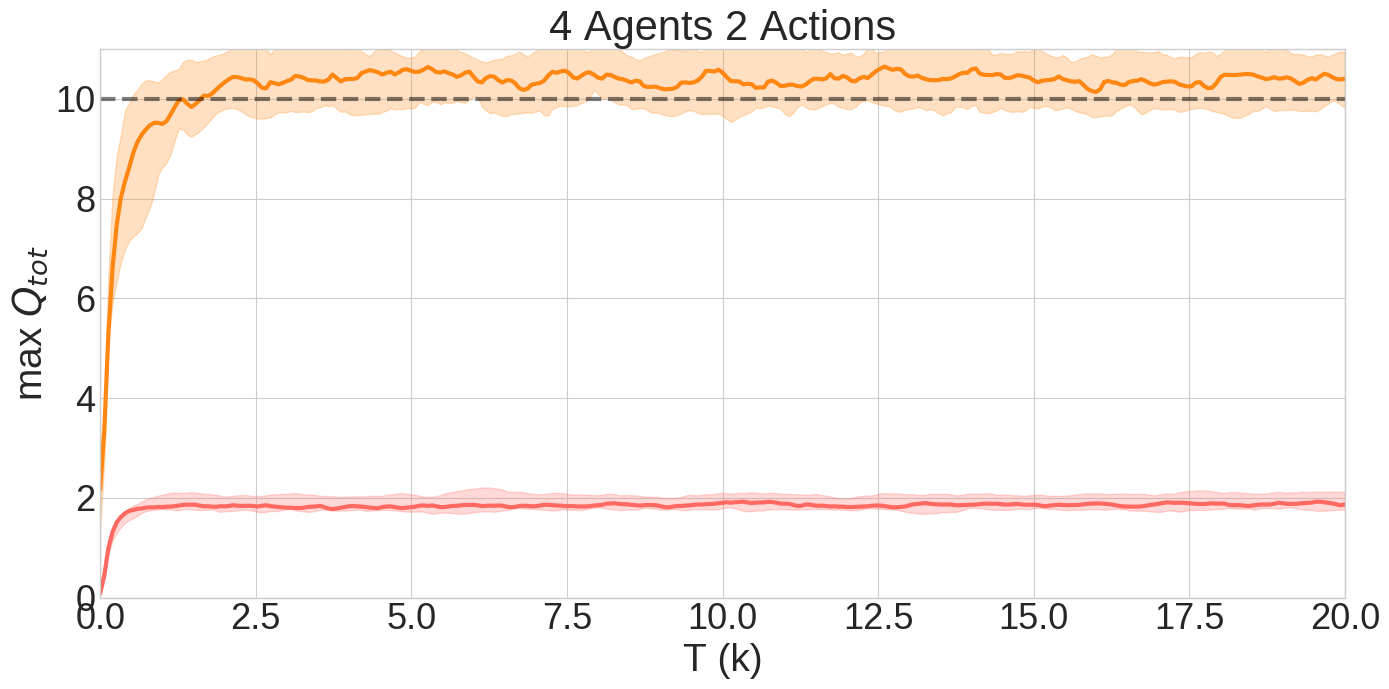}
	\includegraphics[width=0.32\textwidth]{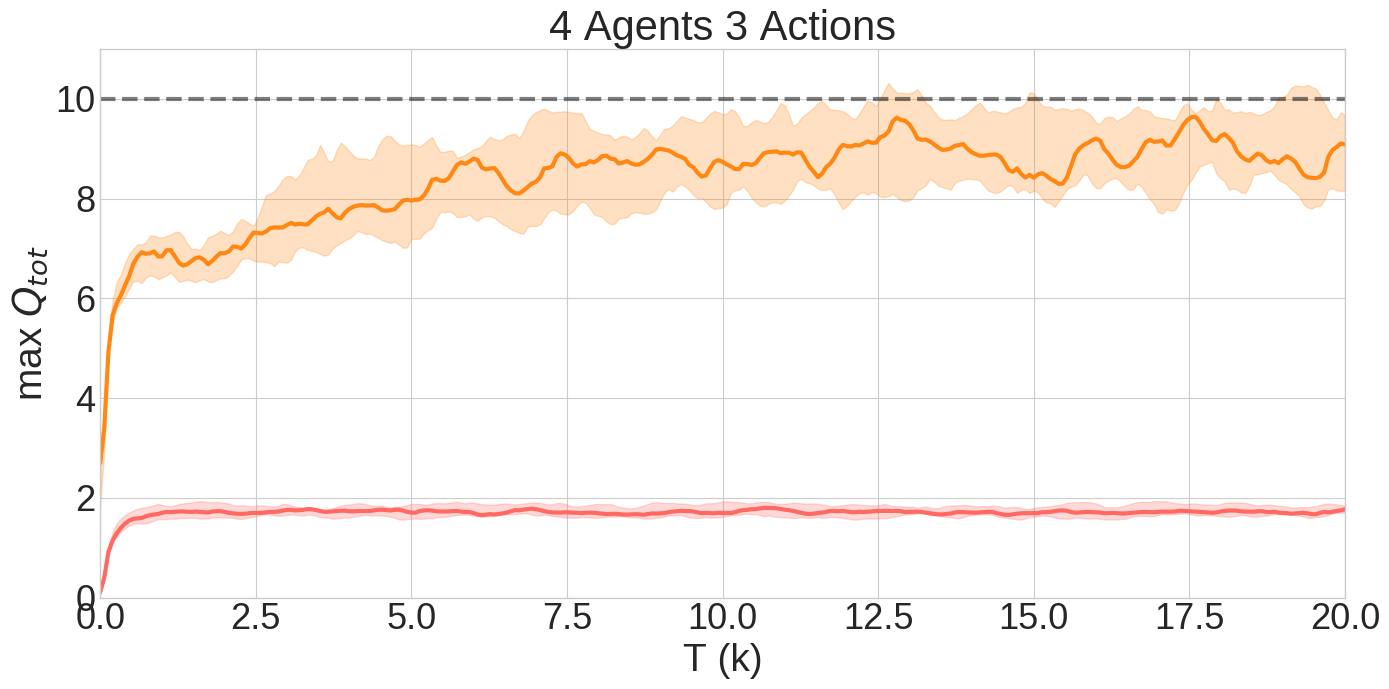}
	\includegraphics[width=0.32\textwidth]{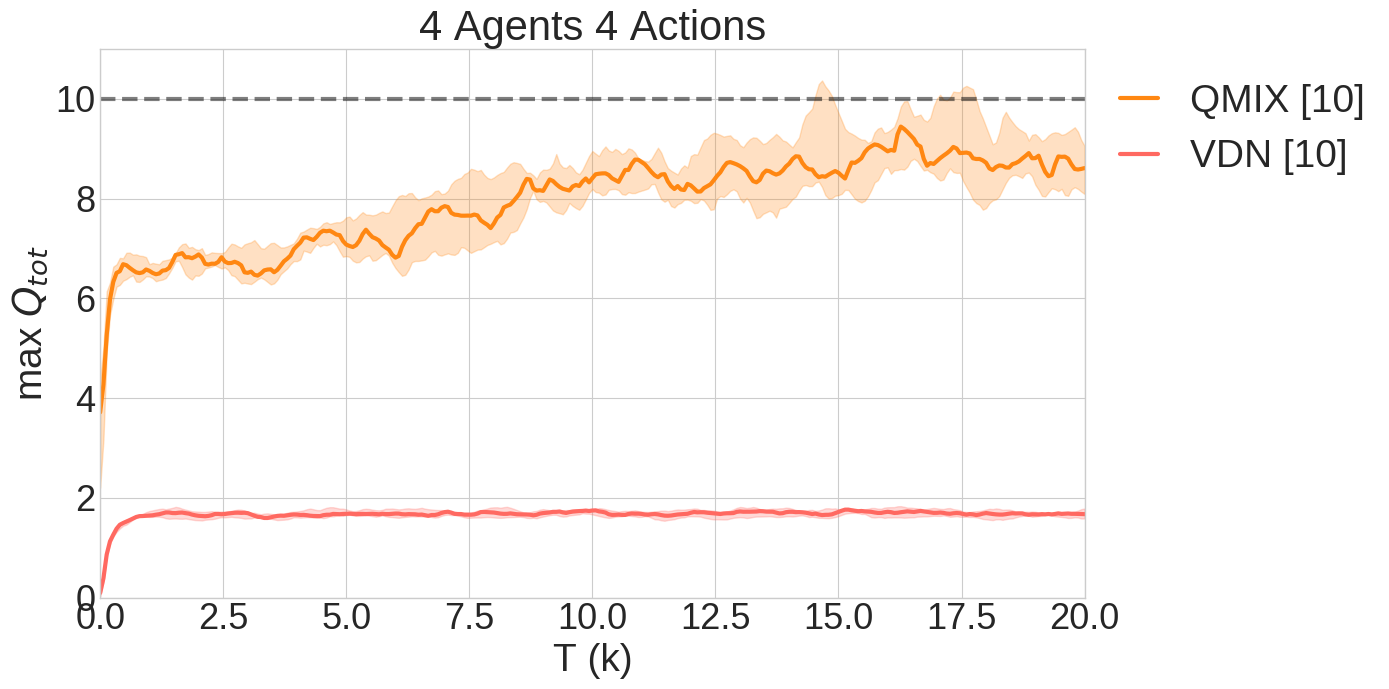}
	\caption{The median $\max_{\mathbf{u}} Q_{tot}(s, \mathbf{u})$ for $\{2,3,4\}$ agents with $\{2,3,4\}$ actions across 10 runs for VDN and QMIX. 25\%-75\% quartile is shown shaded. The dashed line at 10 indicates the correct value.}
	\label{fig:rnd_matrix}
\end{figure*}

Figure \ref{fig:rnd_matrix} shows the results for a varying number of agents and actions. 
We can see that QMIX learns significantly more accurate maxima than VDN due to its larger representational capacity.  

\section{The StarCraft Multi-Agent Challenge}
\label{sec:setting}

In this section, we describe the StarCraft Multi-Agent Challenge (SMAC) to which we apply QMIX and a number of other methods.
SMAC is based on the popular real-time strategy (RTS) game StarCraft II.
In a regular full game of StarCraft II, one or more humans compete against each other or against a built-in game AI to gather resources, construct buildings, and build armies of units to defeat their opponents.

Akin to most RTSs, StarCraft has two main gameplay components: macromanagement and micromanagement. \emph{Macromanagement} refers to high-level strategic considerations, such as economy and resource management.
\emph{Micromanagement} (micro), on the other hand, refers to fine-grained control of individual units.

StarCraft has been used as a research platform for AI, and more recently, RL. Typically, the game is framed as a competitive problem: an agent takes the role of a human player, making macromanagement decisions and performing micromanagement as a puppeteer that issues orders to individual units from a centralised controller.

In order to build a rich multi-agent testbed, we instead focus solely on micromanagement.
Micro is a vital aspect of StarCraft gameplay with a high skill ceiling, and is practiced in isolation by amateur and professional players.
For SMAC, we leverage the natural multi-agent structure of micromanagement by proposing a modified version of the problem designed specifically for decentralised control.
In particular, we require that each unit be controlled by an independent agent that conditions only on local observations restricted to a limited field of view centred on that unit (see Figure \ref{fig:obs}).
Groups of these agents must be trained to solve challenging combat scenarios, battling an opposing army under the centralised control of the game's built-in scripted AI.

\begin{figure}[t!]
	\centering
	\includegraphics[width=.6\linewidth]{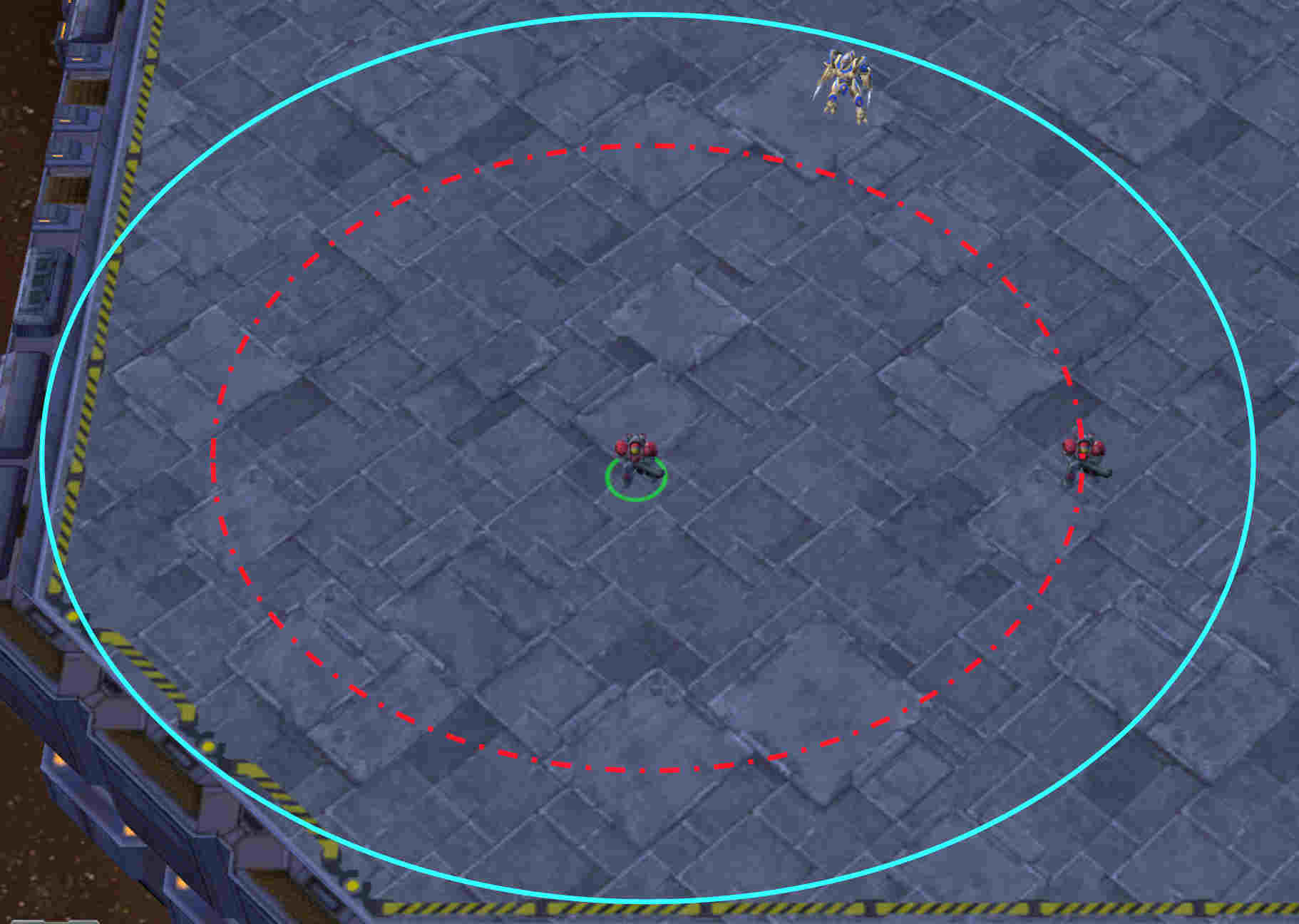}
	\caption{The cyan and red circles respectively border the sight and shooting range of the agent.}
	\label{fig:obs}
\end{figure}

Proper micro of units during battles maximises the damage dealt to enemy units while minimising damage received, and requires a range of skills.
For example, one important technique is \textit{focus fire}, i.e., ordering units to jointly attack and kill enemy units one after another. When focusing fire, it is important to avoid \textit{overkill}: inflicting more damage to units than is necessary to kill them.
Other common micro techniques include: assembling units into formations based on their armour types, making enemy units give chase while maintaining enough distance so that little or no damage is incurred (\textit{kiting},  Figure~\ref{fig:starcraft_screenshots}a), coordinating the positioning of units to attack from different directions or taking advantage of the terrain to defeat the enemy.

\begin{figure}[t!]
	\centering
	\subfigure[\texttt{MMM2}]{
		\includegraphics[width=0.4\columnwidth]{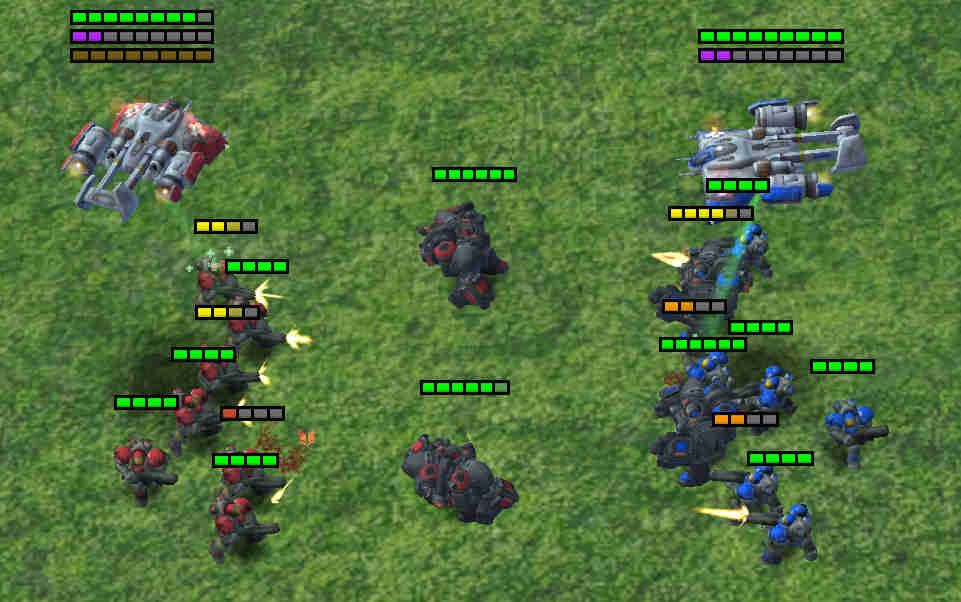}}
	\subfigure[\texttt{corridor}]{
		\includegraphics[width=0.4\columnwidth]{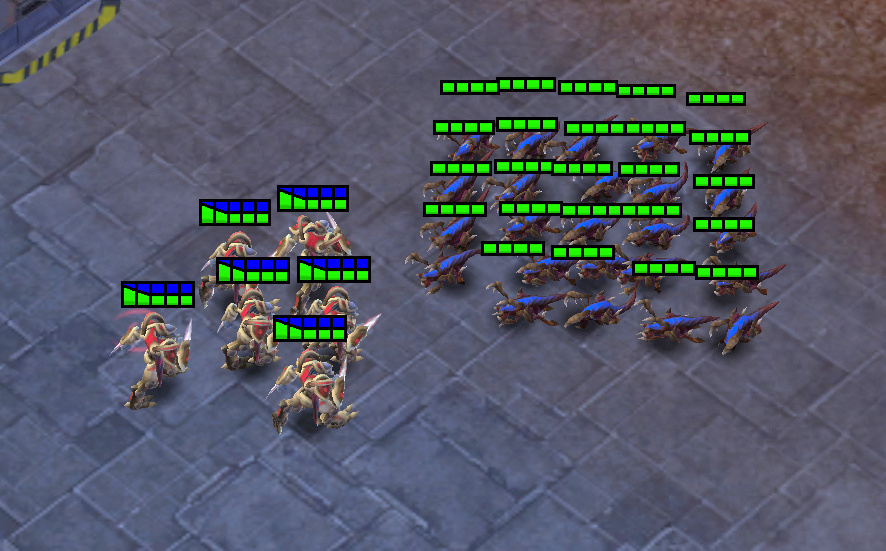}}
	\caption{\label{fig:SC2maps_2}Screenshots of two SMAC scenarios.}
\end{figure}

SMAC thus provides a convenient environment for evaluating the effectiveness of MARL algorithms. The simulated StarCraft II environment and carefully designed scenarios require learning rich cooperative behaviours under partial observability, which is a challenging task. The simulated environment also provides an additional state information during training, such as information on all the units on the entire map. This is crucial for facilitating algorithms to take full advantage of the centralised training regime and assessing all aspects of MARL methods.

SMAC features the following characteristics that are common in many real-word multi-agent systems:
\begin{itemize}
	\setlength{\itemsep}{0pt}
	\setlength{\parskip}{2pt}
	\item Partial observability: Like, e.g., 
	self-driving cars and autonomous drones, agents in SMAC receive only 
	limited information about the environment.
	\item Large number of agents: the scenarios in SMAC include up to 27 learning agents.
	\item Diversity: many scenarios feature heterogeneous units with resulting 
	diversity in optimal strategies.
	\item Long-term planning: defeating the enemy in SMAC often requires the agents to perform a long sequence of actions.
	\item High-dimensional observation spaces: the inclusion of diverse units and the large size of the map yield an immense state space.
	\item Large per-agent action space: the agents can perform up to 70 actions 
	at each time step.
	\item Coordinated teamwork: micromanagement of units requires the individual agents to execute strictly coordinated actions, which is essential to many MARL problems.
	\item Stochasticity: the behaviour of the enemy differs across individual runs. Also, the amount of time that the agents must wait until being able to shoot again is stochastic.
\end{itemize}
SMAC is one of the first MARL benchmarks that includes all of these features, 
making it useful for evaluating the effectiveness of methods for many aspects 
of learning decentralised multi-agent control.
We hope that it will become a standard benchmark for measuring the progress and a grand challenge for pushing the boundaries of MARL.
\subsection{Scenarios}

SMAC consists of a set of StarCraft II micro scenarios which aim to evaluate how well independent agents are able to learn coordination to solve complex tasks. 
These scenarios are carefully designed to necessitate the learning of one or more micro techniques to defeat the enemy.
Each scenario is a confrontation between two armies of units.
The initial position, number, and type of units in each army varies from scenario to scenario, as does the presence or absence of elevated or impassable terrain. Figures \ref{fig:starcraft_screenshots}  and \ref{fig:SC2maps_2} include screenshots of several SMAC micro scenarios.

The first army is controlled by the learned allied agents.
The second army consists of enemy units controlled by the built-in game AI, which uses carefully handcrafted non-learned heuristics.
At the beginning of each episode, the game AI instructs its units to attack the allied agents using its scripted strategies.
An episode ends when all units of either army have died or when a pre-specified time limit is reached (in which case the game is counted as a defeat for the allied agents).
The goal is to maximise the win rate, i.e., the ratio of games won to games played.

The complete list of challenges is presented in Table~\ref{tab:scenario}. More specifics on the SMAC scenarios and environment settings can be found in Appendices \ref{appendix:SMAC_scenarios} and \ref{appendix:SMAC_evn_setting} respectively.

\begin{table*}
	\scalebox{.9}{
		\begin{tabular}{ccc}
			\toprule
			Name & Ally Units & Enemy Units \\
			\midrule
			\texttt{2s3z} &  2 Stalkers \& 3 Zealots &  2 Stalkers \& 3 Zealots \\
			\texttt{3s5z} &  3 Stalkers \&  5 Zealots &  3 Stalkers \&  5 Zealots \\
			\texttt{1c3s5z} &  1 Colossus, 3 Stalkers \&  5 Zealots &  1 Colossus, 3 Stalkers \&  5 Zealots \\
			\hline
			\texttt{5m\_vs\_6m} & 5 Marines & 6 Marines \\
			\texttt{10m\_vs\_11m} & 10 Marines & 11 Marines \\
			\texttt{27m\_vs\_30m} & 27 Marines & 30 Marines \\
			\texttt{3s5z\_vs\_3s6z} & 3 Stalkers \& 5 Zealots & 3 Stalkers \& 6 Zealots \\
			\texttt{MMM2} &  1 Medivac, 2 Marauders \& 7 Marines
			&  1 Medivac, 3 Marauders \& 8 Marines  \\
			\hline
			\texttt{2s\_vs\_1sc}& 2 Stalkers  & 1 Spine Crawler \\
			\texttt{3s\_vs\_5z} & 3 Stalkers & 5 Zealots \\
			\texttt{6h\_vs\_8z} & 6 Hydralisks  & 8 Zealots \\
			\texttt{bane\_vs\_bane} & 20 Zerglings \& 4 Banelings  & 20 Zerglings \& 4 Banelings \\		
			\texttt{2c\_vs\_64zg}& 2 Colossi  & 64 Zerglings \\		
			\texttt{corridor} & 6 Zealots  & 24 Zerglings \\
			\bottomrule
	\end{tabular}}
	\caption{SMAC challenges. Note that list of SMAC scenarios has been updated from the earlier version. All scenarios, however, are still available in the repository.}
	\label{tab:scenario}
\end{table*}

\subsection{State and Observations}\label{section:state_and_obs}

At each timestep, agents receive local observations drawn within their field of view. This  encompasses information about the map within a circular area around each unit and with a radius equal to the \textit{sight range} (Figure \ref{fig:obs}). The sight range makes the environment partially observable from the standpoint of each agent. Agents can only observe other agents if they are both alive and located within the sight range. Hence, there is no way for agents to distinguish between teammates that are far away from those that are dead.

The feature vector observed by each agent contains the following attributes for both allied and enemy units within the sight range: \texttt{distance}, \texttt{relative x}, \texttt{relative y}, \texttt{health}, \texttt{shield}, and \texttt{unit\_type}. Shields serve as an additional source of protection that needs to be removed before any damage can be done to the health of units.
All Protoss units have shields, which can regenerate if no new damage is dealt.
In addition, agents have access to the last actions of allied units that are in the field of view. Lastly, agents can observe the terrain features surrounding them, in particular, the values of eight points at a fixed radius indicating height and walkability.

The global state, which is only available to agents during centralised training, contains information about all units on the map. Specifically, the state vector includes the coordinates of all agents relative to the centre of the map, together with unit features present in the observations. Additionally, the state stores the \texttt{energy} of Medivacs and \texttt{cooldown} of the rest of the allied units, which represents the  minimum delay between attacks. Finally, the last actions of all agents are attached to the central state.

All features, both in the state as well as in the observations of individual agents, are normalised by their maximum values. The sight range is set to nine for all agents. The feature vectors for both local observations and global state have a fixed size, where information about the ally/enemy is represented as a sequence ordered by the unit ID number.

\subsection{Action Space}

The discrete set of actions that agents are allowed to take consists of \texttt{move[direction]}\footnote{Four directions: north, south, east, or west.}, \texttt{attack[enemy\_id]}, \texttt{stop} and \texttt{no-op}.\footnote{Dead agents can only take \texttt{no-op} action while live agents cannot.}
As healer units, Medivacs must use \texttt{heal[agent\_id]} actions instead of \texttt{attack[enemy\_id]}. The maximum number of actions an agent can take ranges between 7 and 70, depending on the scenario.

To ensure decentralisation of the task, agents can use the \texttt{attack[enemy\_id]} action only on enemies in their \textit{shooting range} (Figure \ref{fig:obs}).
This additionally constrains the ability of the units to use the built-in \emph{attack-move} macro-actions on enemies that are far away. We set the shooting range to six for all agents. Having a larger sight range than a shooting range forces agents to use move commands before starting to fire.

To ensure that agents only execute valid actions, a mask is provided indicating which actions are valid and invalid at each timestep.

\subsection{Rewards}

The overall goal is to maximise the win rate for each battle scenario.
The default setting is to use the \textit{shaped reward}, which produces a reward based on the hit-point damage dealt and enemy units killed, together with a special bonus for winning the battle.
The exact values and scales for each of these events can be configured using a range of flags. 
To produce fair comparisions we encourage using this default reward function for all scenarios.
We also provide another \textit{sparse reward} option, in which the reward is +1 for winning and -1 for losing an episode.

\subsection{Evaluation Methodology}
To ensure the fairness of the challenge and comparability of results, performance should be evaluated under standardised conditions. 
One should not undertake any changes to the environment used for evaluating the policies. 
This includes the observation and state spaces, action space, the game mechanics, and settings of the environment (e.g., frame-skipping rate).
One should not modify the StarCraft II map files in any way or change the difficulty of the game AI. Episode limits of each scenario should also remain unchanged. %

SMAC restricts the execution of the trained models to be decentralised, i.e., during testing each agent must base its policy solely on its own action-observation history and cannot use the global state or the observations of other agents. %
It is, however, acceptable to train the decentralised policies in centralised fashion. Specifically, agents can exchange individual observations, model parameters and gradients during training as well as make use of the global state. %

\subsection{Evaluation Metrics}

Our main evaluation metric is the mean win percentage of evaluation episodes as a function of environment steps observed, over the course of training. 
Such progress can be estimated by periodically running a fixed number of evaluation episodes (in practice, 32) with any exploratory behaviours disabled. 
Each experiment is repeated using a number of independent training runs and the resulting plots include the median performance as well as the 25-75\% percentiles. 
We use five independent runs for this purpose in order to strike a balance between statistical significance and the computational requirements.
We recommend using the median instead of the mean in order to avoid the effect of any outliers. 
We report the number of independent runs, as well as environment steps used in training. %

We also include the computational resources used, as well as the wall clock time for running each experiment. SMAC provides functionality for saving StarCraft II replays, which can be viewed using a freely available client. The resulting videos can be used to comment on interesting behaviours observed. Each independent run takes between 8 to 16 hours, depending on the exact scenario, using Nvidia Geforce GTX 1080 Ti graphics cards.

\section{PyMARL}
\label{sec:pymarl}

To make it easier to develop algorithms for SMAC, we have also open-sourced our 
software engineering framework PyMARL\footnote{PyMARL is available at 
\url{https://github.com/oxwhirl/pymarl}.}.
Compared to many other frameworks for deep reinforcement learning  
\citep{stooke2019rlpyt,stable-baselines,hoffman2020acme}, PyMARL is designed 
with a focus on the multi-agent setting.
It is also intended for fast, easy development and experimentation for 
researchers, relying on fewer abstractions than e.g., RLLib 
\cite{liang2018rllib}.
To maintain this lightweight development experience, PyMARL does not support 
distributed training.

PyMARL's codebase is organized in a modular fashion in order to enable the rapid development of new algorithms, as well as provide implementations of current deep MARL algorithms to benchmark against. It is built on top of PyTorch to facilitate the fast execution and training of deep neural networks, and take advantage of the rich ecosystem built around it. PyMARL's modularity makes it easy to extend, and components can be readily isolated for testing purposes.

Since the implementation and development of deep MARL algorithms come with a number of additional challenges beyond those posed by single-agent deep RL, it is crucial to have simple and understandable code. In order to improve the readability of code and simplify the handling of data between components, PyMARL encapsulates all data stored in the buffer within an easy to use data structure. This encapsulation provides a cleaner interface for the necessary handling of data in deep MARL algorithms, whilst not obstructing the manipulation of the underlying PyTorch Tensors. In addition, PyMARL aims to maximise the batching of data when performing inference or learning so as to provide significant speed-ups over more naive implementations. 

PyMARL features implementations of the following algorithms: QMIX, QTRAN \cite{son_qtran:_2019}, COMA \cite{foerster_counterfactual_2017}, VDN \cite{sunehag_value-decomposition_2017}, and IQL \cite{tan_multi-agent_1993} as baselines.

\section{SMAC Results}
\label{sec:results}

In this section, we present the results of our experiments of QMIX and other existing multi-agent RL methods using the new SMAC benchmark.

The evaluation procedure is similar to the one in \cite{rashid2018qmix}. The training is paused after every $10000$ timesteps during which $32$ test episodes are run with agents performing action selection greedily in a decentralised fashion. The percentage of episodes where the agents defeat all enemy units within the permitted time limit is referred to as the \textit{test win rate}.

The architectures and training details are presented in Appendix~\ref{sec:smac_setup}.
A table of the results is included in Appendix~\ref{sec:table_results}.
Data from the individual runs are available at the SMAC repository (\url{https://github.com/oxwhirl/smac}).

\begin{figure*}[h!]
	\centering
	\includegraphics[width=0.45\textwidth]{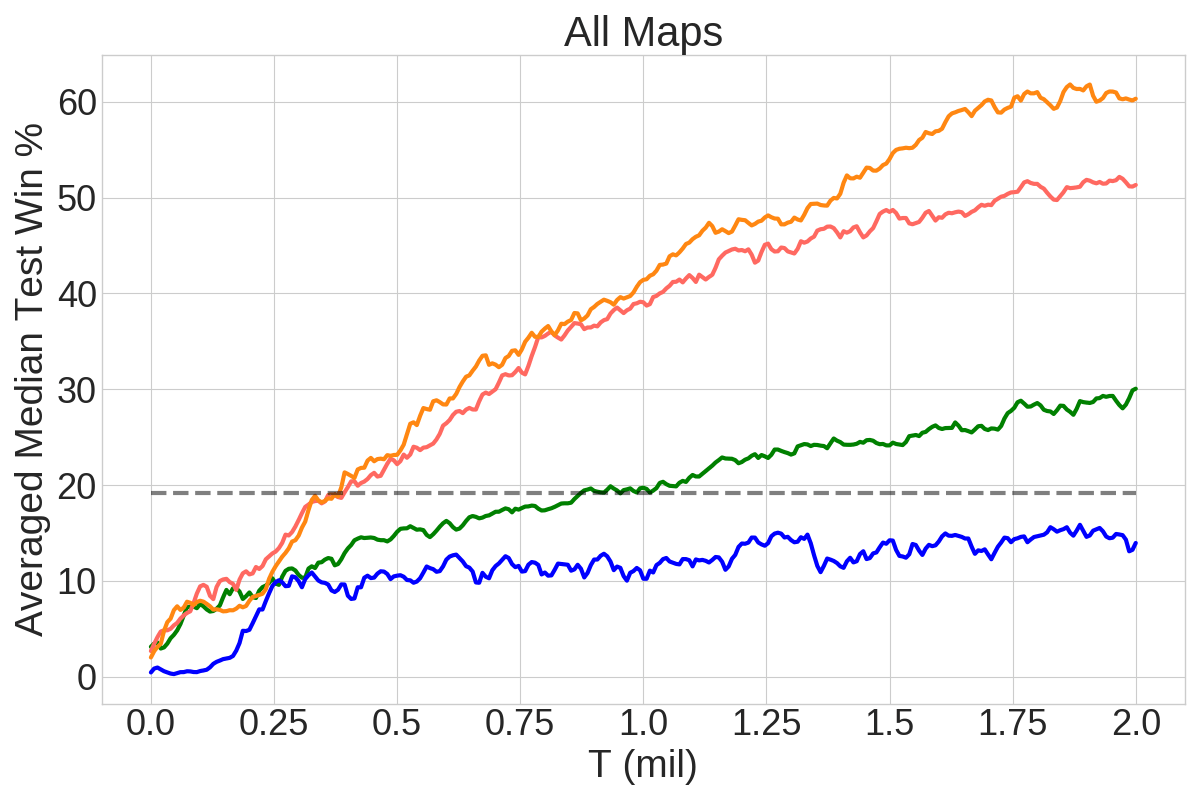}
	\includegraphics[width=0.45\textwidth]{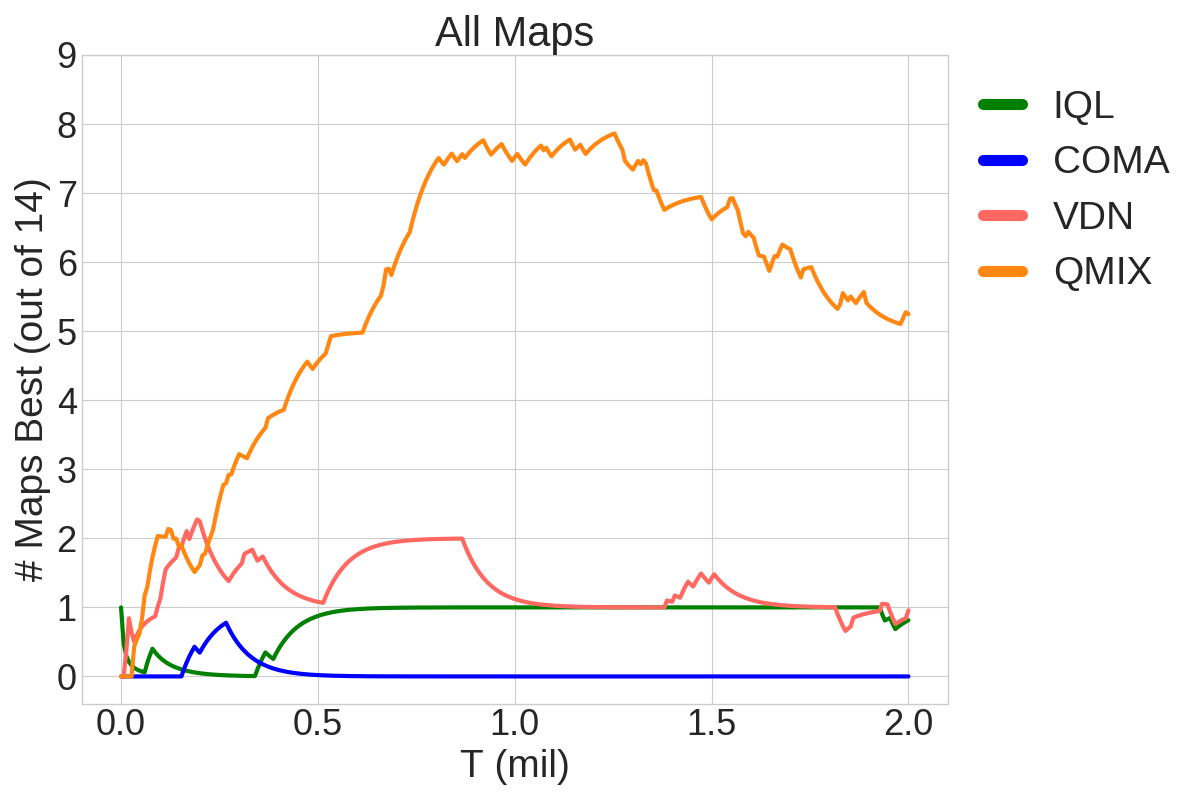}
	\caption{Left: The median test win \%, averaged across all 14 scenarios. Heuristic's performance is shown as a dotted line. Right: The number of scenarios in which the algorithm's median test win \% is the highest by at least $1/32$ (smoothed).}
	\label{fig:median_test_win}
\end{figure*}

Figure \ref{fig:median_test_win} plots the median test win percentage averaged across all scenarios in order to compare the algorithms across the entire SMAC suite. 
We also plot the performance of a simple heuristic AI that selects the closest enemy unit (ignoring partial observability) and attacks it with the entire team until it is dead, upon which the next closest enemy unit is selected. This is a basic form of \textit{focus-firing}, which is a crucial tactic for achieving good performance in micromanagement scenarios.
The relatively poor performance of the heuristic AI shows that the suite of SMAC scenarios requires more complex behaviour than naively focus-firing the closest enemy, making it an interesting and challenging benchmark.

Overall QMIX achieves the highest test win percentage and is the best performer on up to eight scenarios during training. 
Additionally, IQL, VDN, and QMIX all significantly outperform COMA, demonstrating the sample efficiency of off-policy value-based methods over on-policy policy gradient methods. 

\begin{figure*}[h!]
	\centering
	\includegraphics[width=0.45\textwidth]{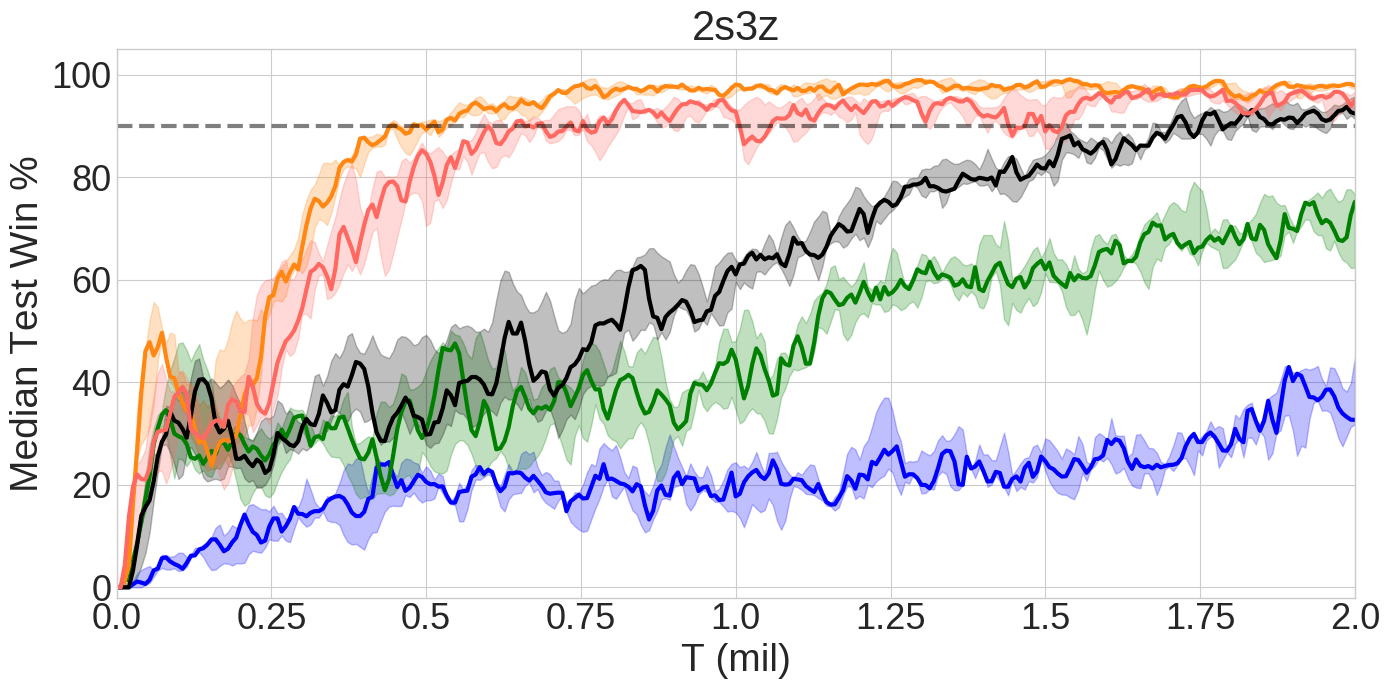}
	\includegraphics[width=0.45\textwidth]{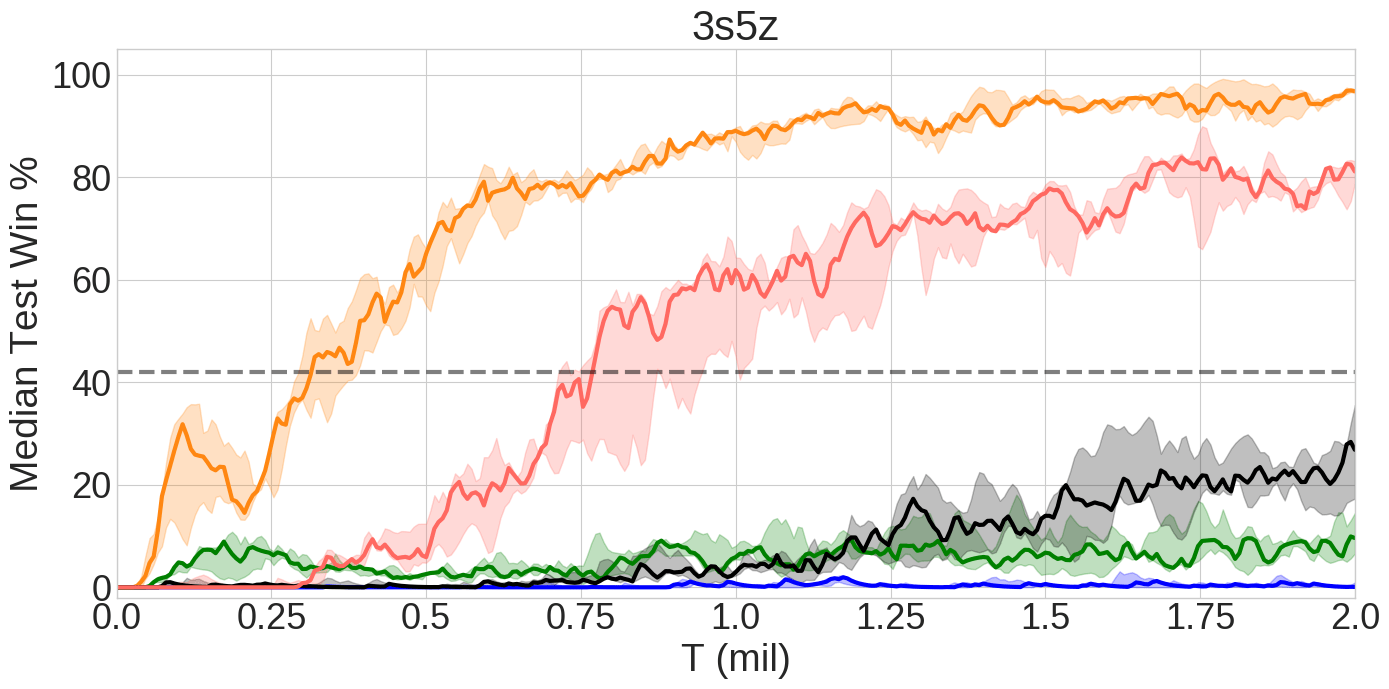}
	\includegraphics[width=0.45\textwidth]{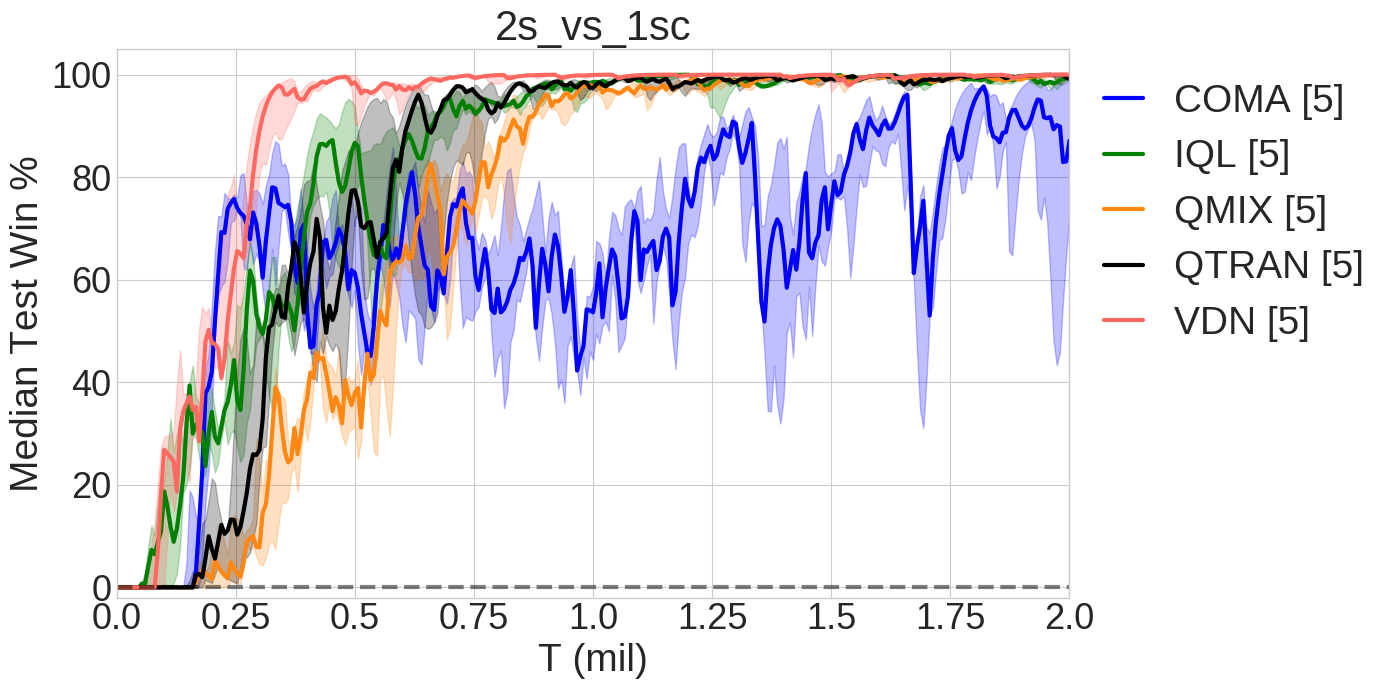}
	\caption{Three scenarios including QTRAN.}
	\label{fig:qtran_results}
\end{figure*}

We also compare to QTRAN on 3 scenarios in Figure \ref{fig:qtran_results}. 
We can see that QTRAN fails to achieve good performance on \textit{3s5z} and takes far longer to reach the performance of VDN and QMIX on \textit{2s3z}. 
Since it barely beats IQL on relatively easy scenarios, we do not perform a more comprehensive benchmarking of QTRAN.
In preliminary experiments, we found the QTRAN-Base algorithm slightly more performant and more stable than QTRAN-Alt. 
For more details on the hyperparameters and architectures considered, please see the Appendix \ref{sec:smac_setup}. 

Based on the overall performances of all algorithms, we broadly group the scenarios into three categories: 
\textit{Easy},
\textit{Hard}, and
\textit{Super-Hard}.

\begin{figure*}[h!]
	\centering
	\includegraphics[width=0.45\textwidth]{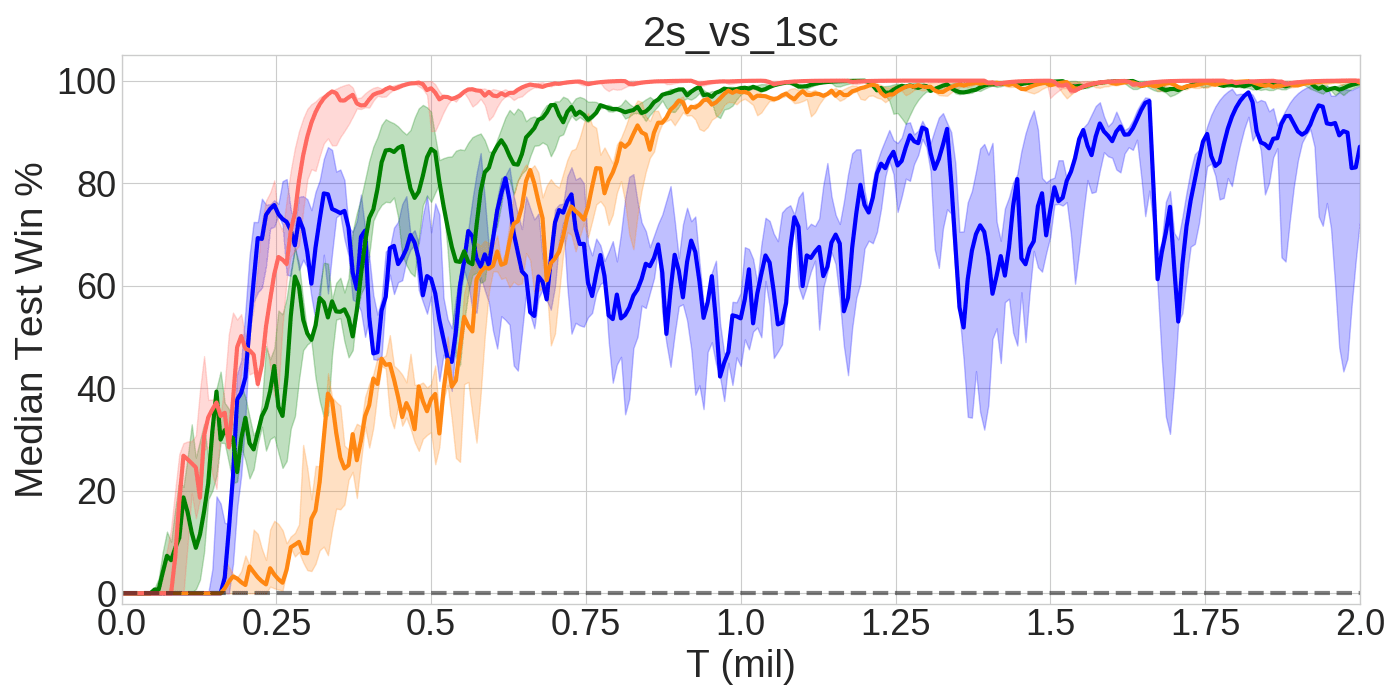}
	\includegraphics[width=0.45\textwidth]{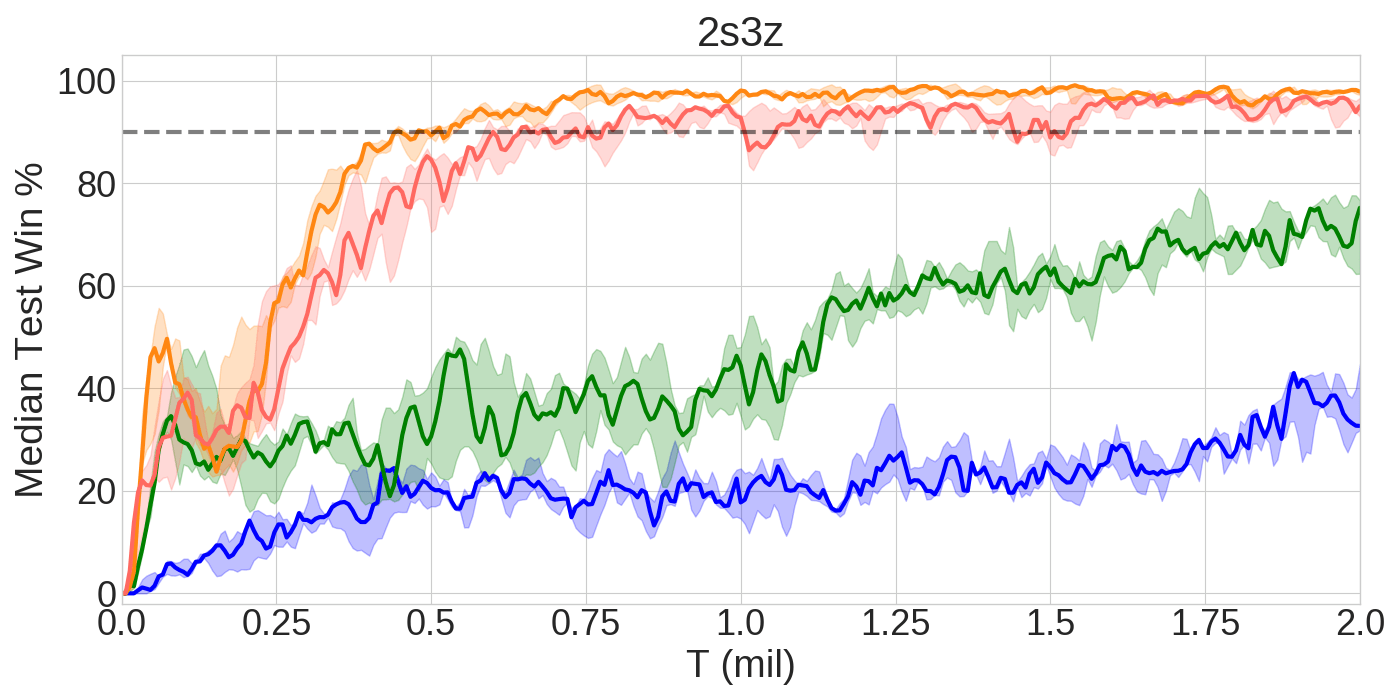}
	\includegraphics[width=0.45\textwidth]{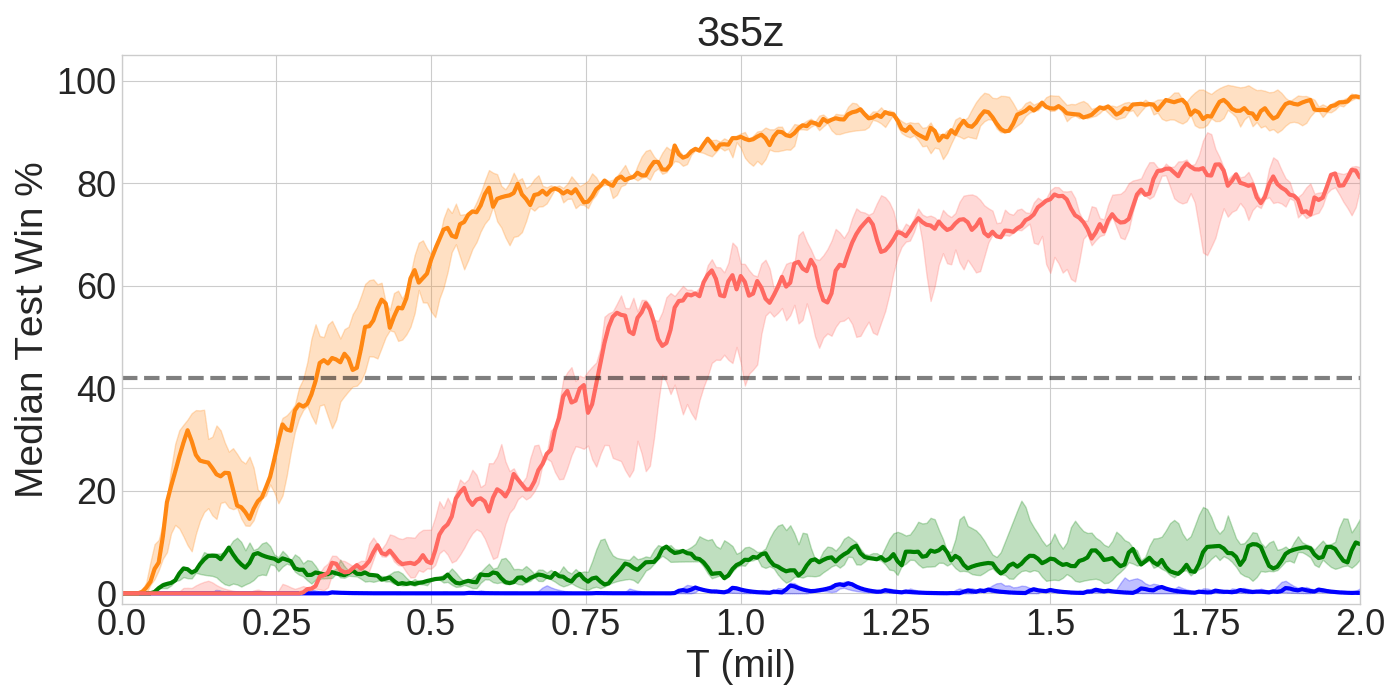}
	\includegraphics[width=0.45\textwidth]{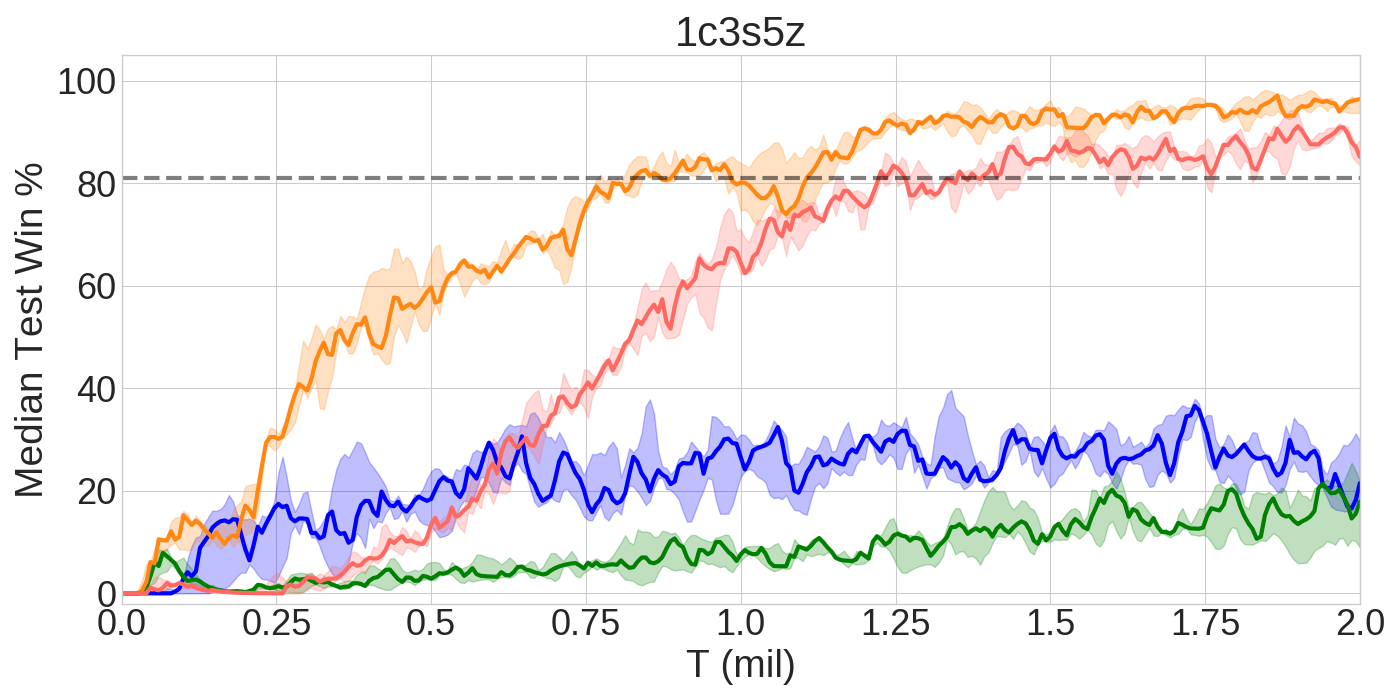}
	\includegraphics[width=0.45\textwidth]{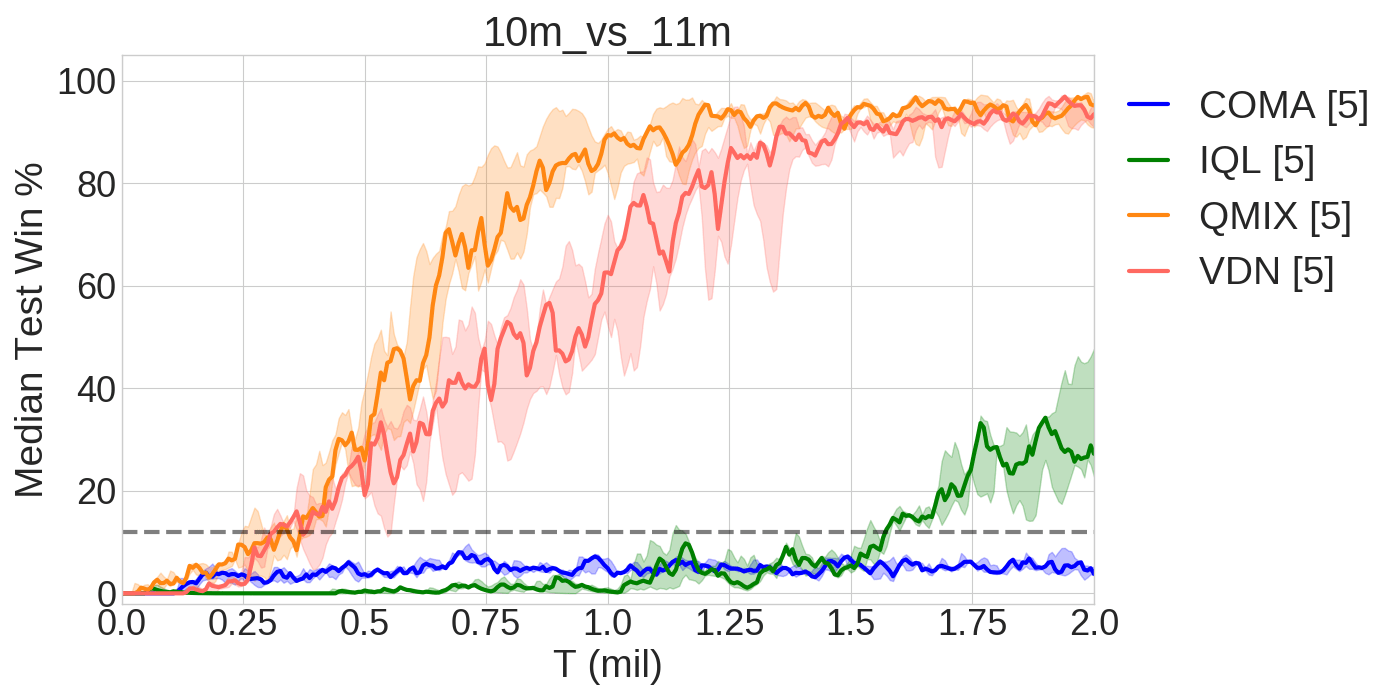}
	\caption{Easy scenarios. The heuristic AI's performance shown as a dotted black line. 
	}
	\label{fig:easy_map_results}
\end{figure*}

Figure \ref{fig:easy_map_results} shows that IQL and COMA struggle even on the \emph{Easy} scenarios, performing poorly on four of the five scenarios in this category. This shows the advantage of learning a centralised but factored centralised $Q_{tot}$. 
Even though QMIX exceeds $95\%$ test win rate on all of five \emph{Easy} scenarios, they serve an important role in the benchmark as sanity checks when implementing and testing new algorithms. 

\begin{figure*}[h!]
	\centering
	\includegraphics[width=0.45\textwidth]{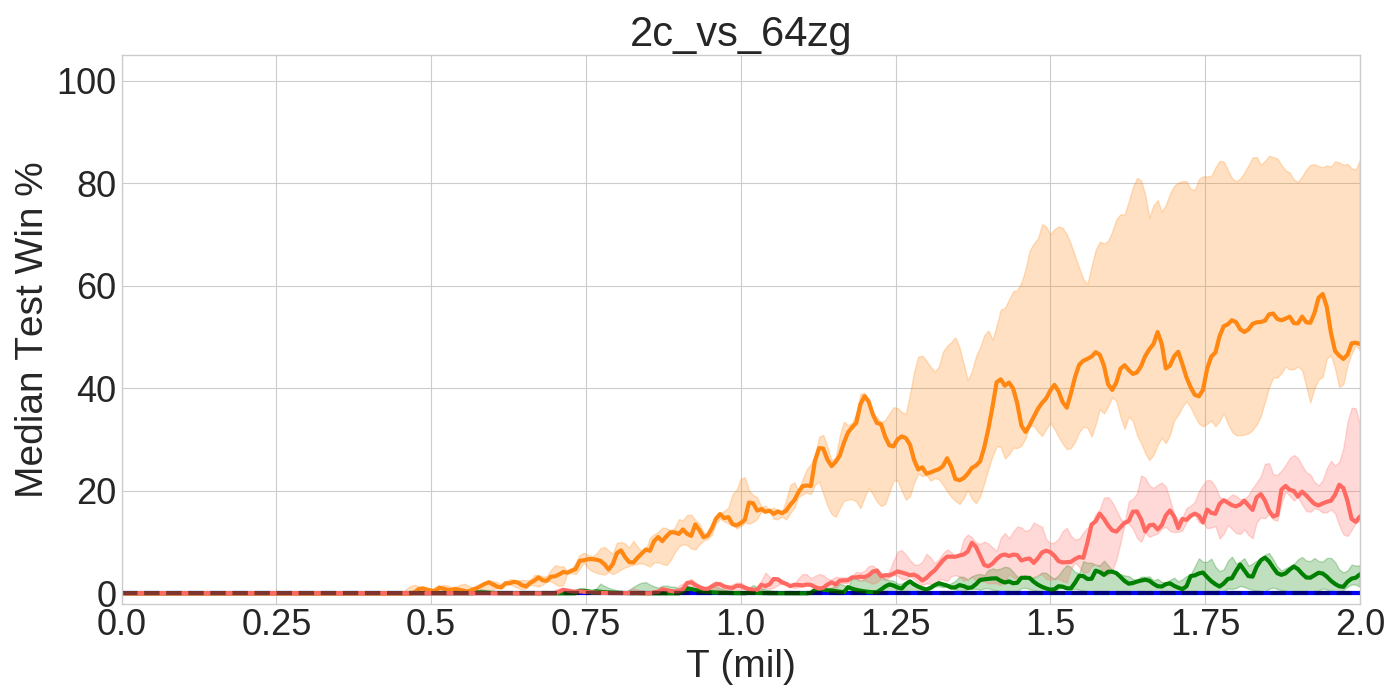}
	\includegraphics[width=0.45\textwidth]{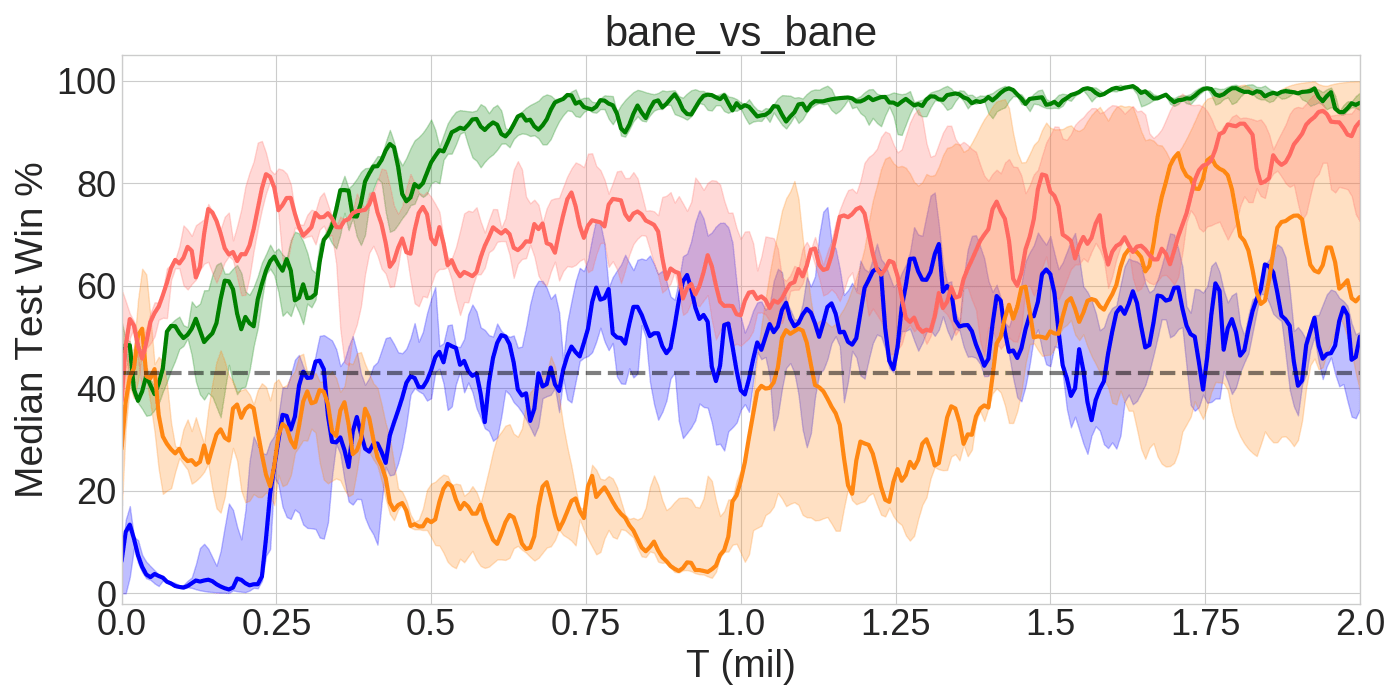}
	\\
	\includegraphics[width=0.45\textwidth]{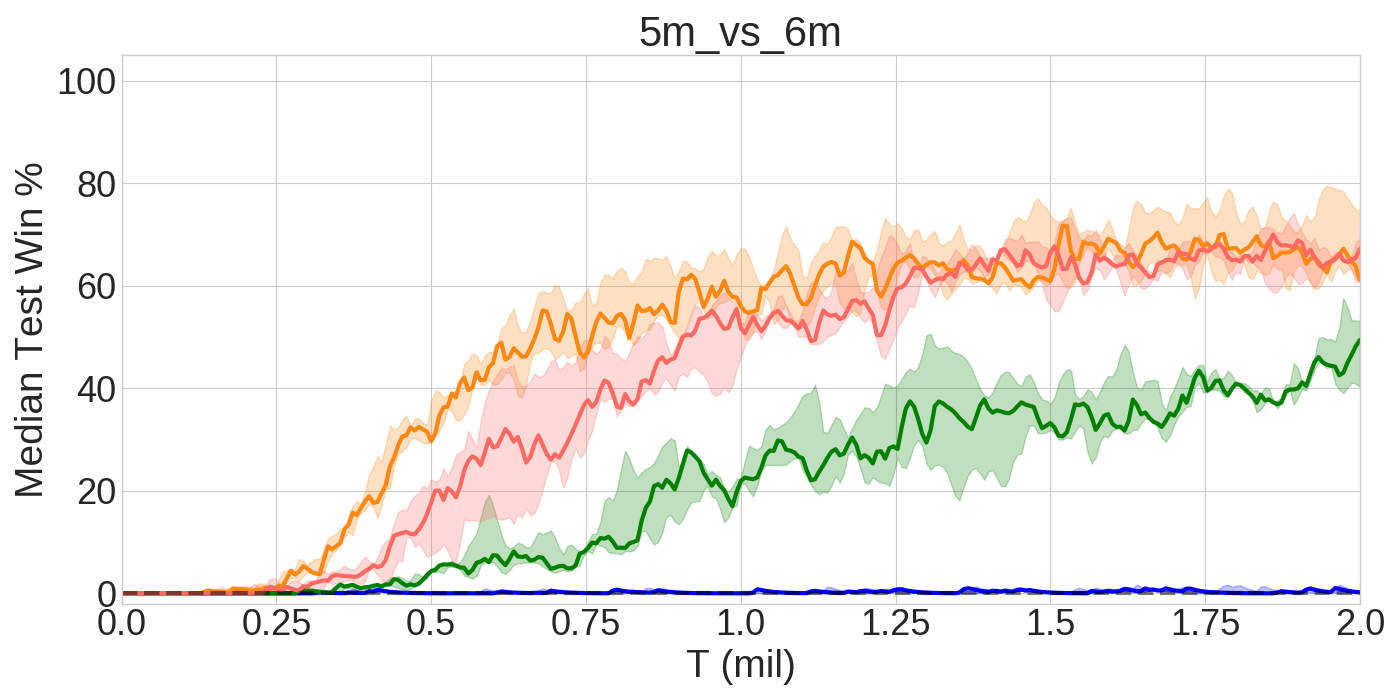}
	\includegraphics[width=0.45\textwidth]{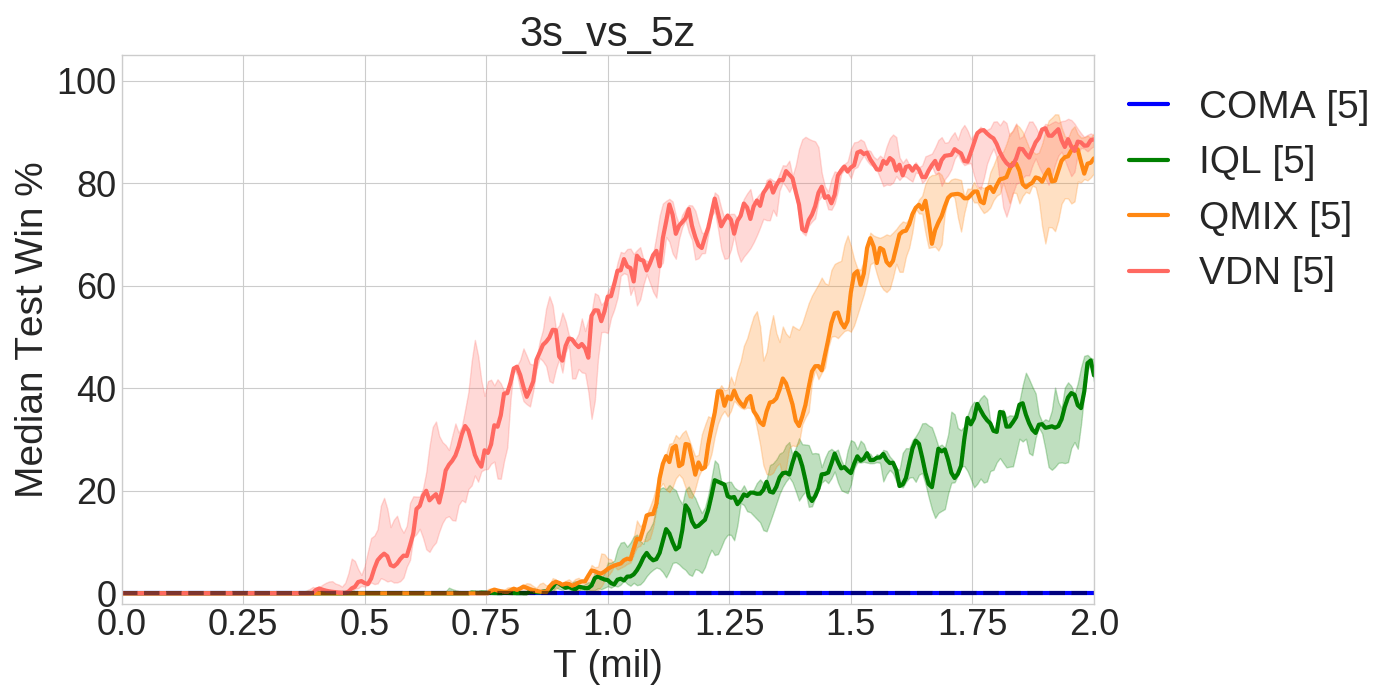}
	\caption{Hard scenarios. The heuristic AI's performance shown as a dotted black line.}
	\label{fig:hard_map_results}
\end{figure*}

The \textit{Hard} scenarios in Figure \ref{fig:hard_map_results} each present their own unique problems.
\textit{2c\_vs\_64zg} only contains 2 allied agents, but 64 enemy units (the largest in the SMAC benchmark) making the action space of the agents much larger than the other scenarios.
\textit{bane\_vs\_bane} contains a large number of allied and enemy units, but the results show that IQL easily finds a winning strategy whereas all other methods struggle and exhibit large variance.
\textit{5m\_vs\_6m} is an asymmetric scenario that requires precise control to win consistently, and in which the best performers (QMIX and VDN) have plateaued in performance.
Finally, \textit{3s\_vs\_5z} requires the three allied stalkers to \textit{kite} the enemy zealots for the majority of the episode (at least 100 timesteps), which leads to a delayed reward problem.

\begin{figure*}[h!]
	\centering
	\includegraphics[width=0.45\textwidth]{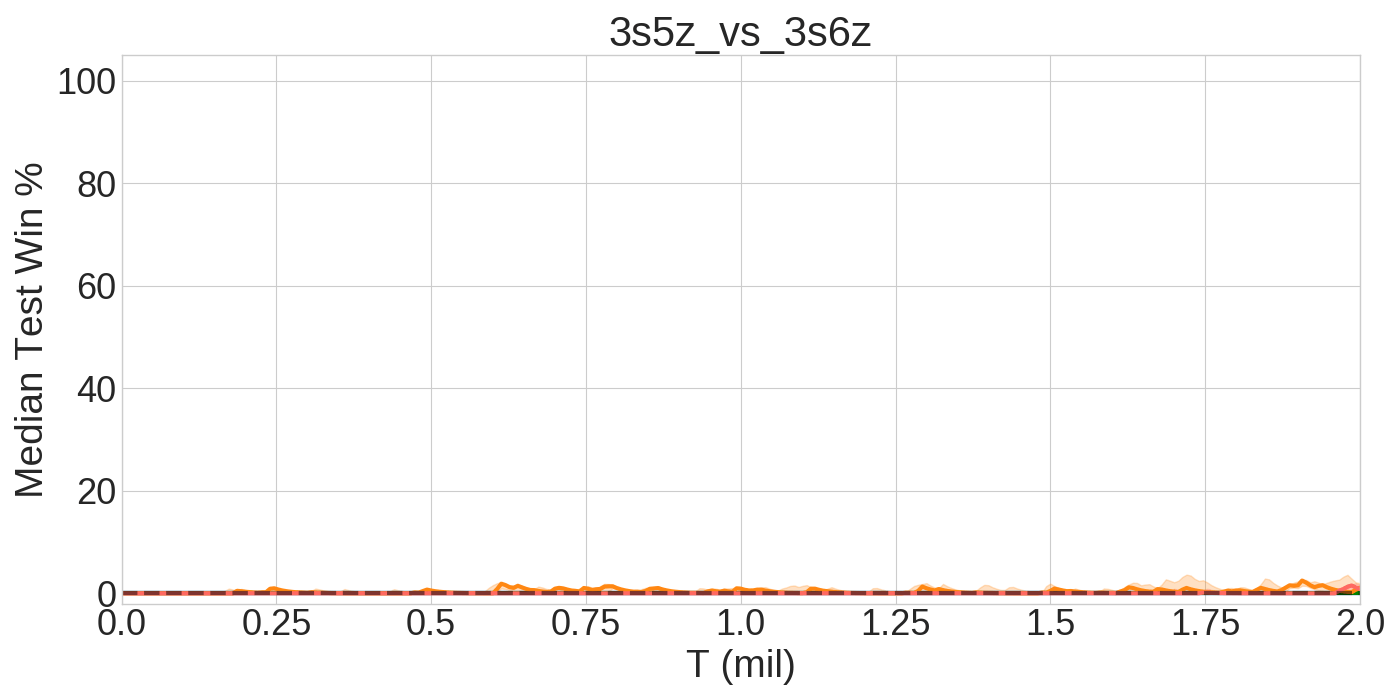}
	\includegraphics[width=0.45\textwidth]{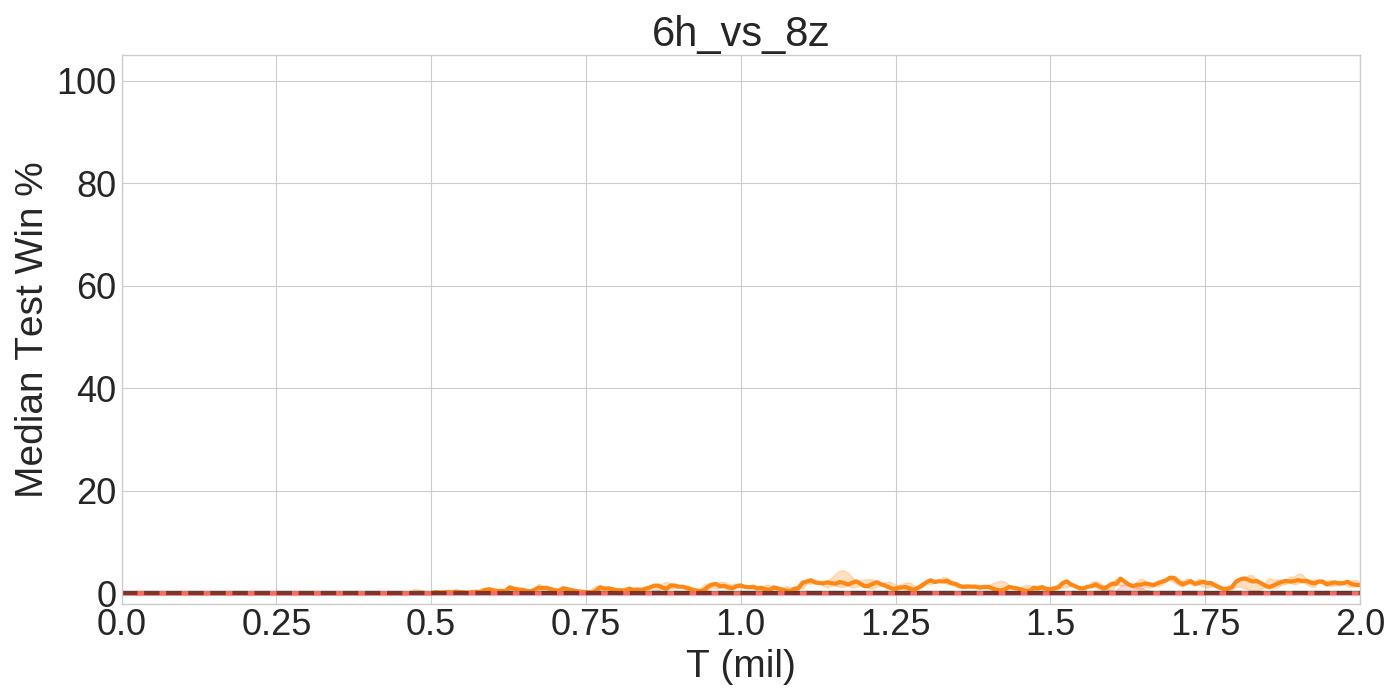}
	\includegraphics[width=0.45\textwidth]{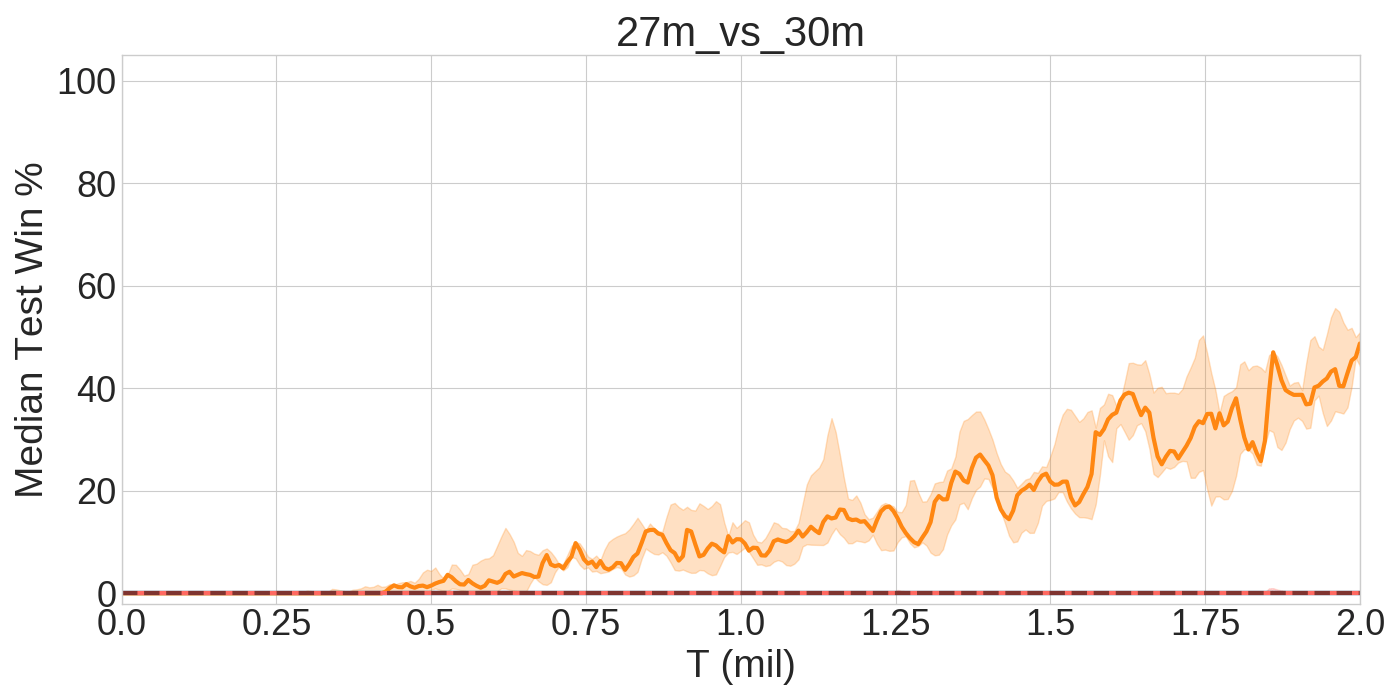}
	\includegraphics[width=0.45\textwidth]{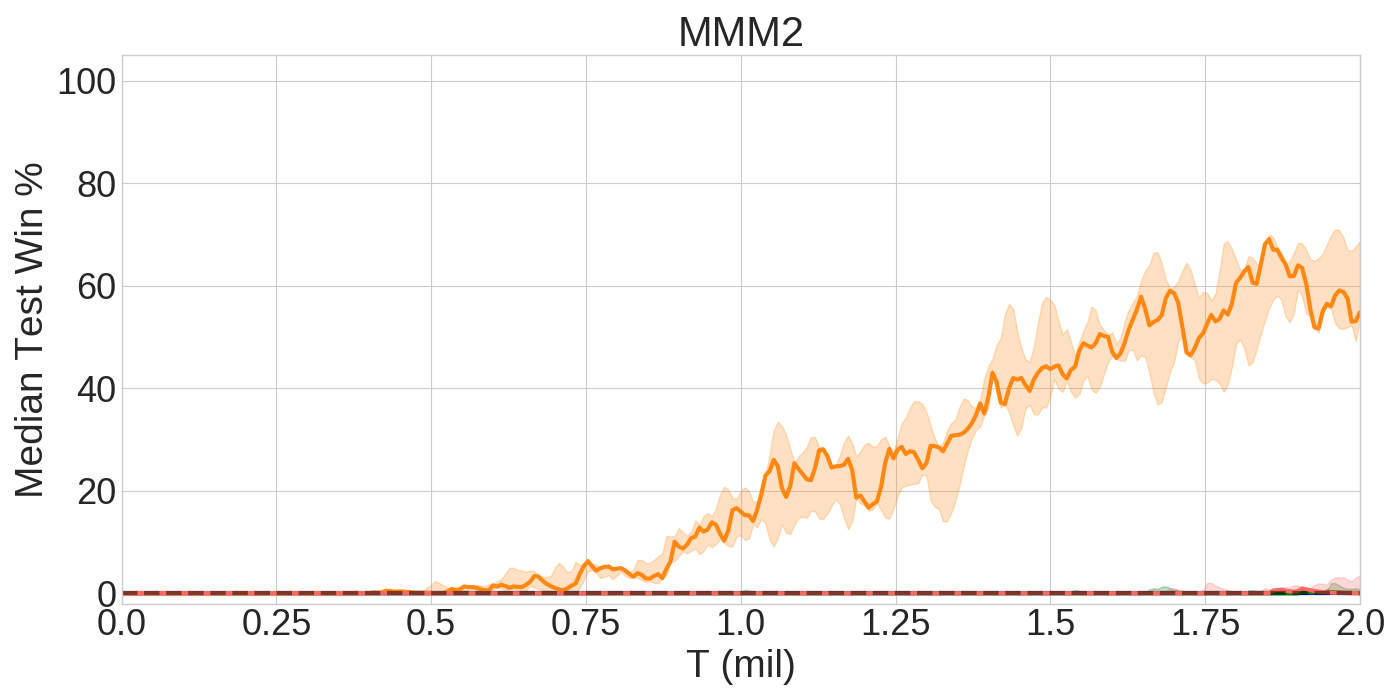}
	\includegraphics[width=0.45\textwidth]{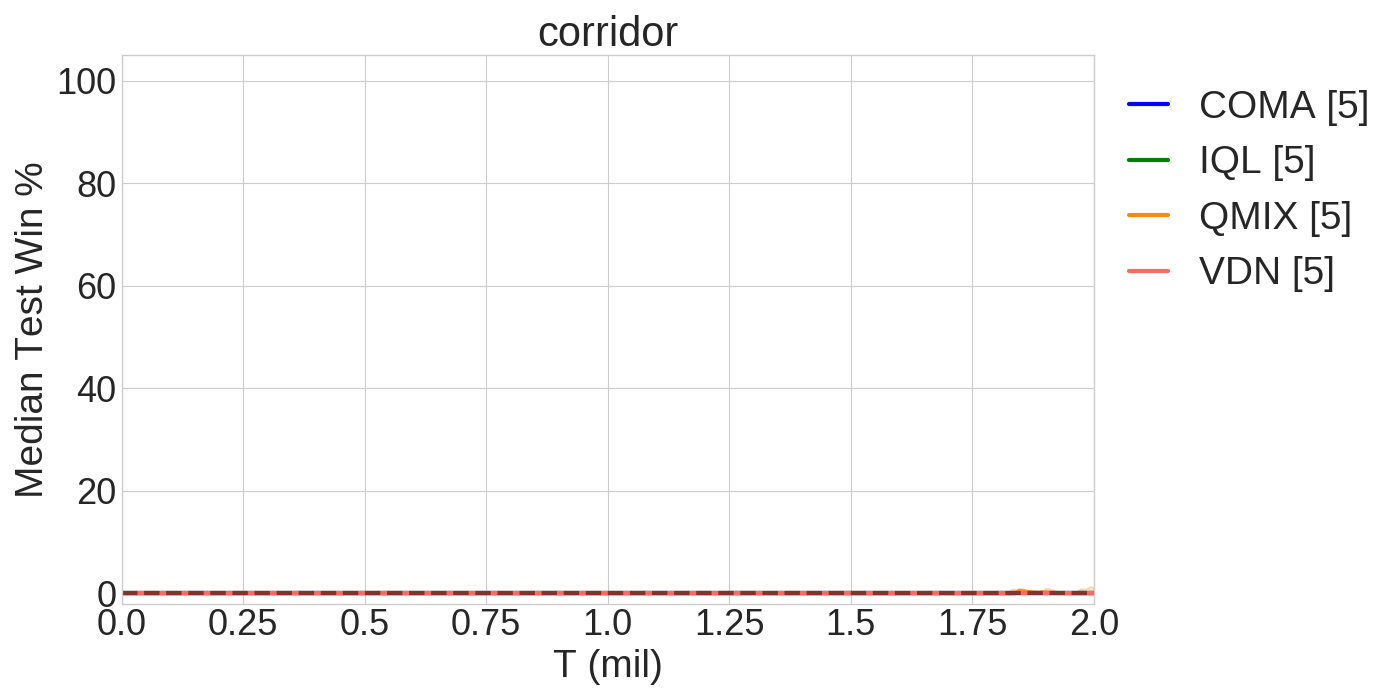}
	\caption{Super Hard scenarios. The heuristic AI's performance shown as a dotted black line.}
	\label{fig:super_hard_map_results}
\end{figure*}

The scenarios shown in Figure \ref{fig:super_hard_map_results} are categorised as \textit{Super Hard} because of the poor performance of all algorithms, with
only QMIX  making meaningful progress on two of the five.
We hypothesise that exploration is a bottleneck in many of these scenarios, providing a nice testbed for future research in this domain.

\section{Analysis}
\label{sec:analysis}

In this section, we aim to understand the reasons why QMIX outperforms VDN in 
our experiments. In particular, we consider two possible causes: 1) the 
inclusion of the central state in the mixing network and 2) the nonlinearity of 
the mixing network.

We find that inclusion of the central state is extremely important for 
performance but that it alone is not sufficient to explain QMIX's increased 
performance.
We further find that the mixing network learns nonlinear functions when 
required, but achieves good performance on the SMAC benchmark even when the 
learned solutions are mostly linear combinations of the agent's utilities.
Studying this phenomenon further, we conclude that it is the flexibility of the 
mixing network's parameterisation, but not its nonlinearity, that accounts for 
QMIX's superior performance in these tasks.

Our ablation experiments focus on three scenarios, one for each of the three difficulty categories: \texttt{3s5z}, \texttt{2c\_vs\_64zg}, and \texttt{MMM2}.
We chose these three scenarios because they exhibit a large gap in performance between QMIX and VDN, in order to better understand the contribution of the components in QMIX towards that increased performance.

\subsection{Role of the Central State}

In order to disentangle the effects of the various components of the QMIX architecture, we consider the following two ablations:

\begin{itemize}
    \item \textbf{VDN-S:} The output of VDN is augmented by a state-dependent bias.
    \item \textbf{QMIX-NS:} The weights of the mixing network are not a function of the state. The state-dependent bias remains.
\end{itemize}

\begin{figure*}[h!]
    \centering
    \includegraphics[width=0.45\textwidth]{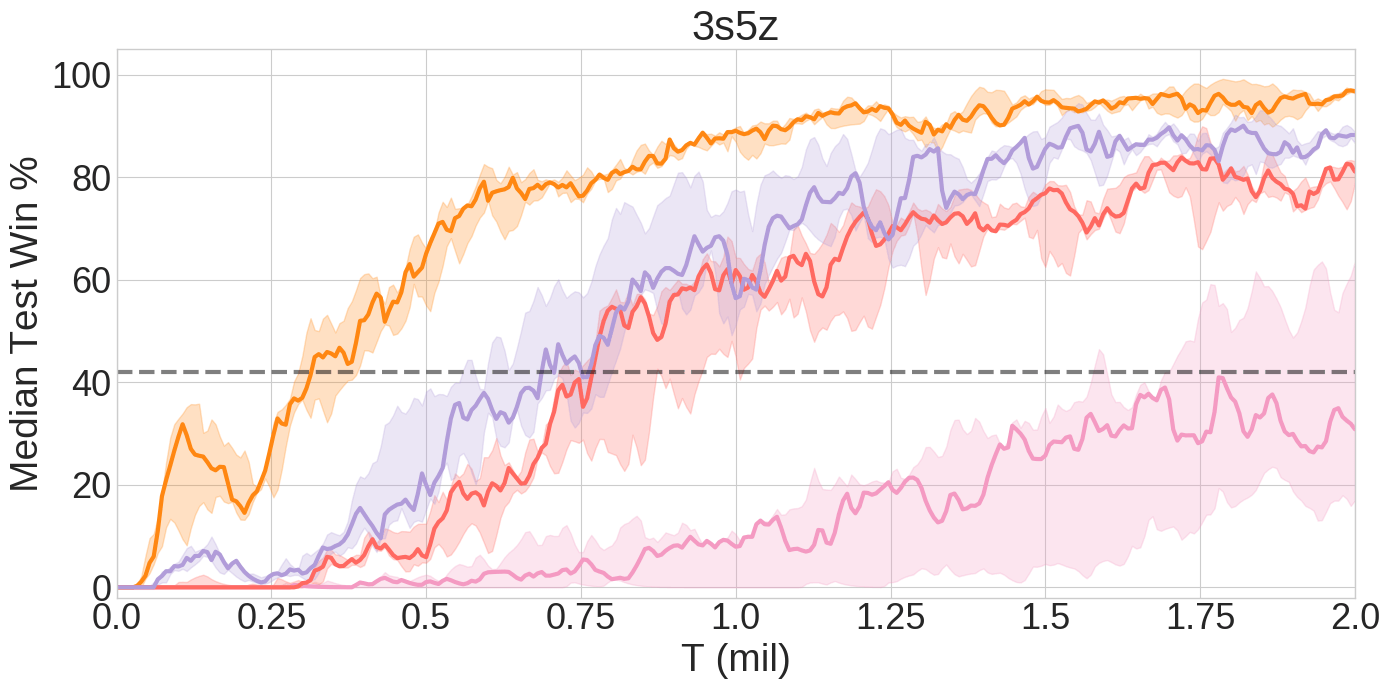}
    \includegraphics[width=0.45\textwidth]{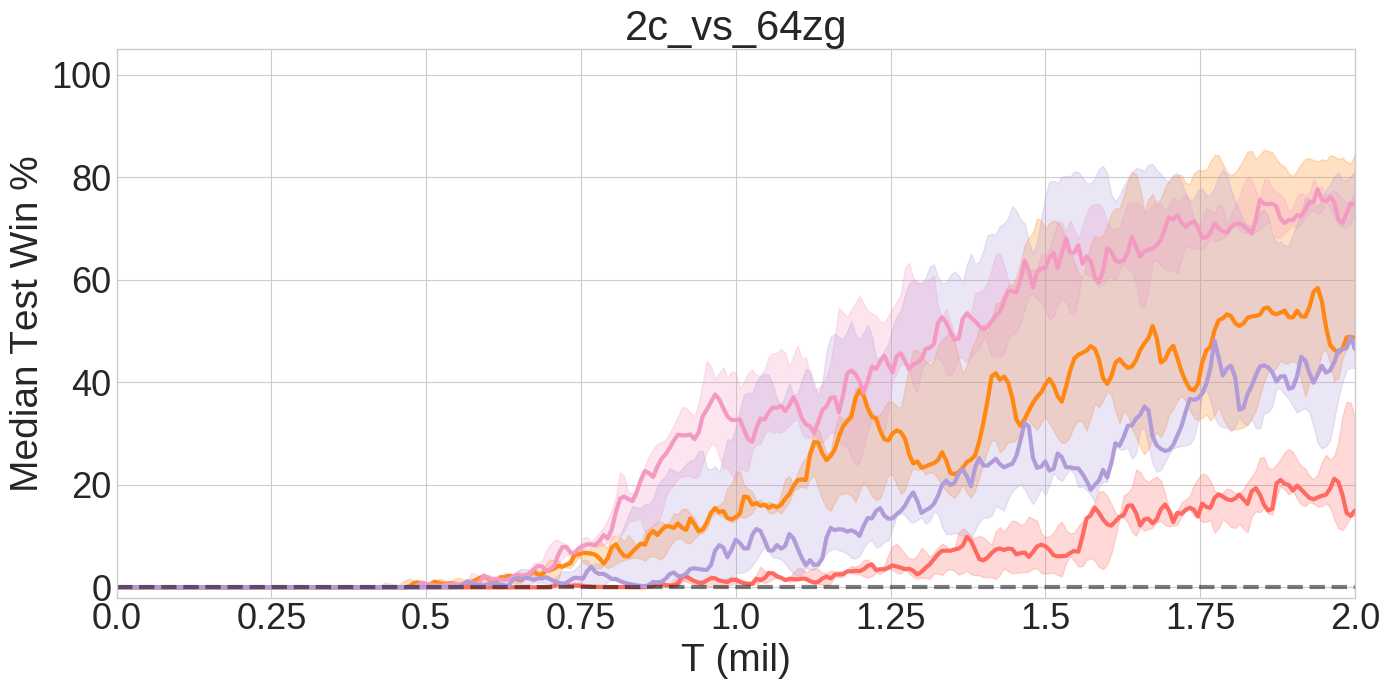}
    \includegraphics[width=0.45\textwidth]{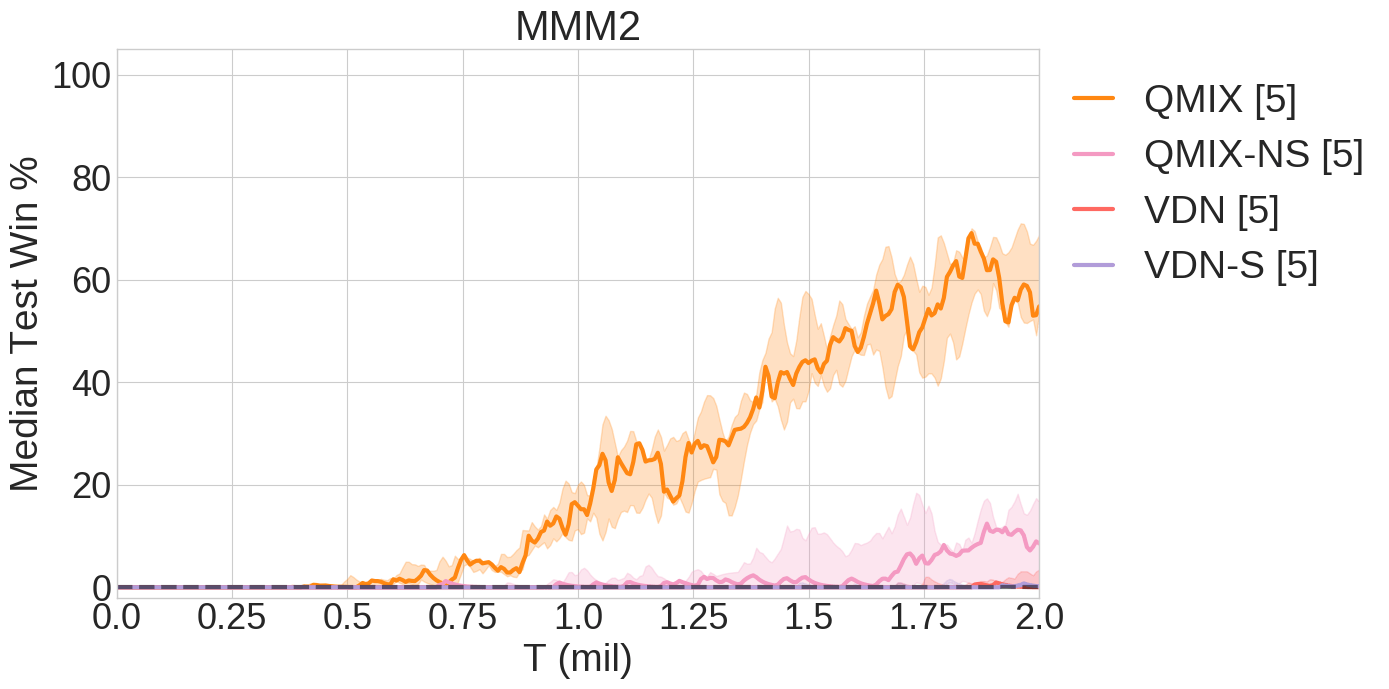}
    \caption{Ablations for state experiments.}
    \label{fig:state_ablations}
\end{figure*}

Figure \ref{fig:state_ablations} shows that VDN-S performs better than VDN, but still worse than QMIX, indicating that inclusion of the state accounts for some but not all of QMIX's performance. 
On the easy \textit{3s5z} scenario, QMIX-NS performs very poorly, which is surprising since it is a strict generalisation of VDN-S. 
However, on the other two scenarios it outperforms VDN-S, indicating that the extra flexibility afforded by a learnt mixing over a fixed summation can be beneficial.

\subsection{Role of Nonlinear Mixing}

The results in Figure \ref{fig:state_ablations} show that both a state dependent bias on VDN (VDN-S) and a learnt state-independent mixing (QMIX-NS) do not match the performance of QMIX. 
Thus, we consider another ablation of QMIX:

\begin{itemize}
    \item \textbf{QMIX-Lin:} We remove the final layer and nonlinearity of the mixing network, making it a linear network. The state-dependant bias remains.
\end{itemize}

\begin{figure*}[h!]
    \centering
    \includegraphics[width=0.45\textwidth]{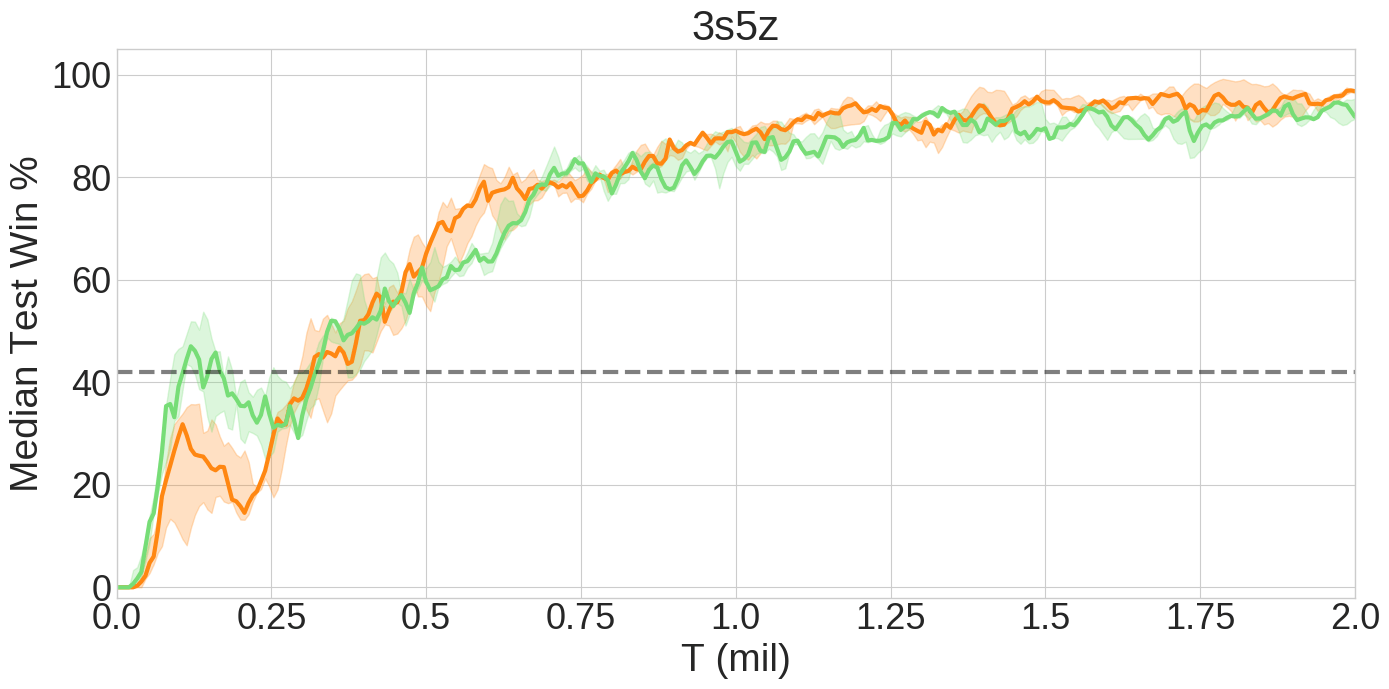}
    \includegraphics[width=0.45\textwidth]{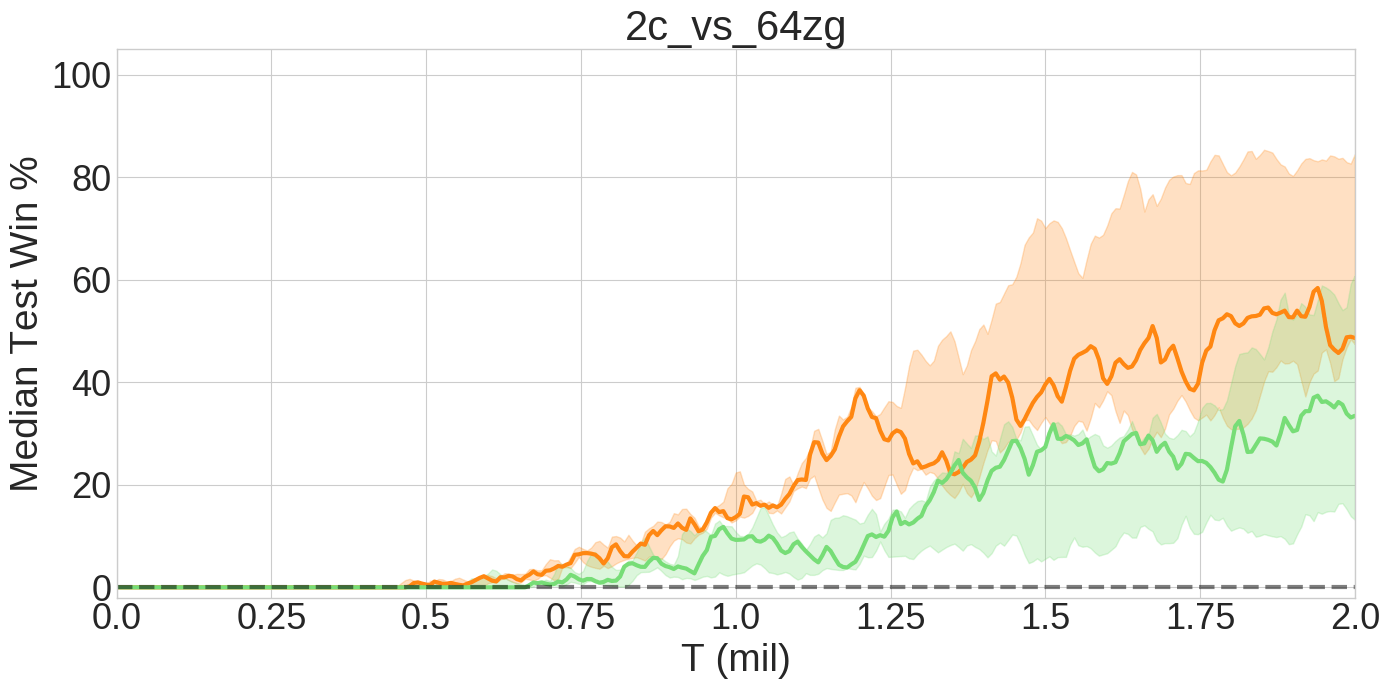}
    \includegraphics[width=0.45\textwidth]{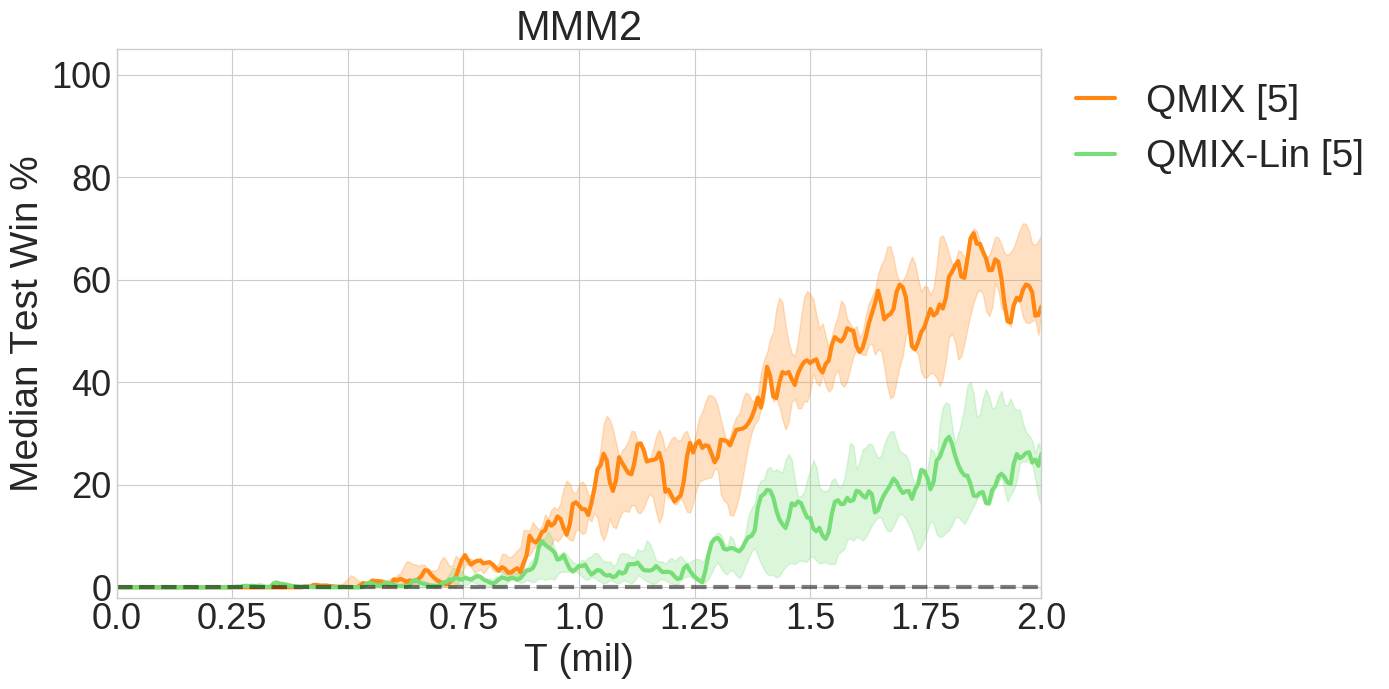}
    \caption{Ablations for mixing network linearity experiments.}
    \label{fig:state_extra_ablations}
\end{figure*}

Figure \ref{fig:state_extra_ablations} shows that whilst QMIX-Lin matches QMIX 
on the easy \textit{3s5z} scenario, it underperforms QMIX on the other two 
scenarios we considered.
This suggests that the ability to perform a nonlinear mixing, through the inclusion of a nonlinearity and additional hidden layer in the mixing network, plays an important role QMIX's performance.
To confirm this, we examine the function the mixing network learns.

\begin{figure*}[h!]
    \centering
    \includegraphics[width=0.45\textwidth]{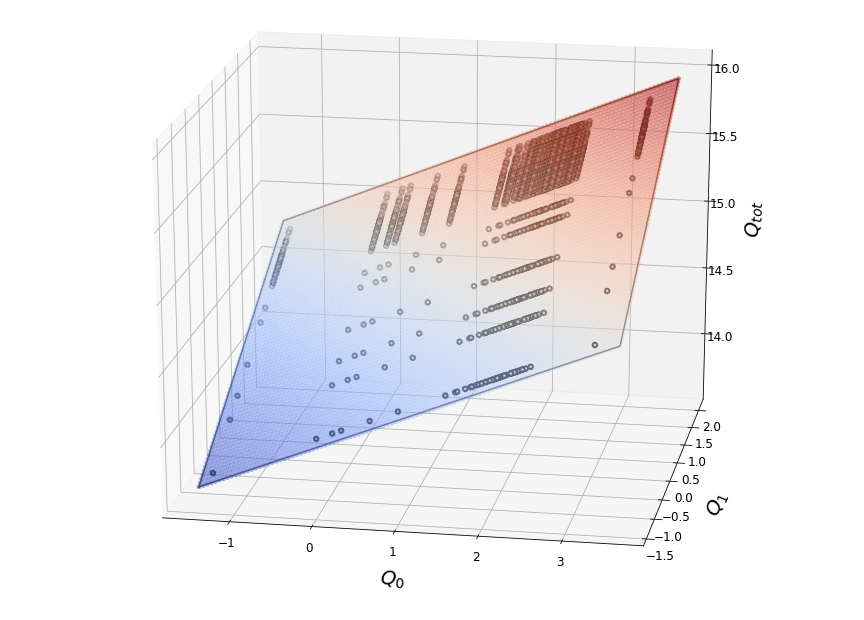}
    \includegraphics[width=0.45\textwidth]{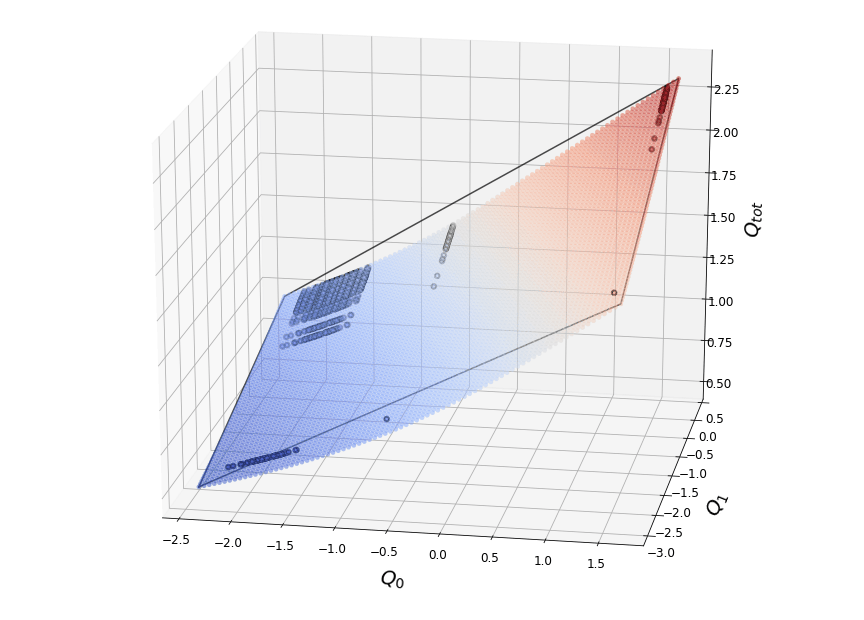}
    \caption{The learnt mixing of QMIX on \textit{2c\_vs\_64zg} at the end of training for timesteps 0 (left) and 50 (right). Circles indicate the $Q_{tot}$-values for the discrete joint-action space.}
    \label{fig:2c_mixing_elu}
\end{figure*}

Figure \ref{fig:2c_mixing_elu} shows the learnt mixing of the network at the end of training for the \textit{2c\_vs\_64zg} scenario with two agents trained using QMIX.
We generated a sample trajectory by greedily selecting actions and plotted the mixing function at different timesteps.
The learnt function is approximately linear for the vast majority of the timesteps 
in the agent's $Q$-values, similarly to the first timestep of the episode shown in Figure \ref{fig:2c_mixing_elu} (left).
The only timesteps in which we observe a noticeable
nonlinearity are at the end of the episode (53 timesteps total). 
When trained for longer we still observe that the the mixing function remains linear for the vast majority of timesteps. 

\begin{figure*}[h!]
    \centering
    \includegraphics[width=0.35\textwidth]{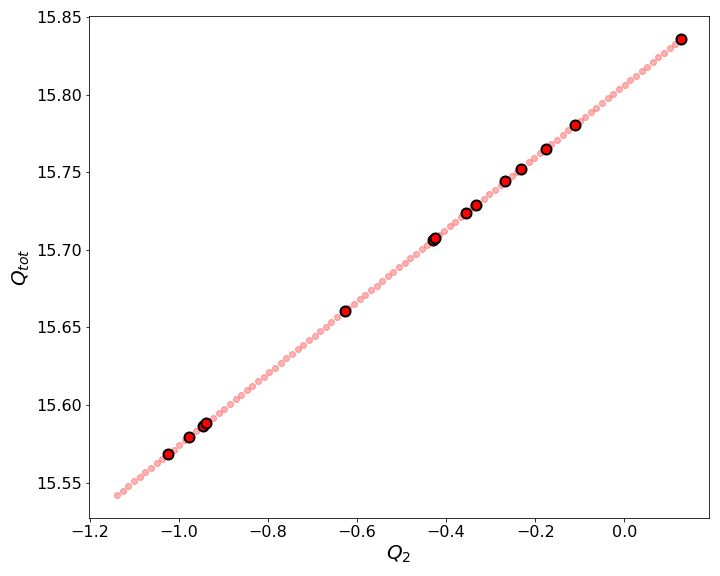}
    \includegraphics[width=0.35\textwidth]{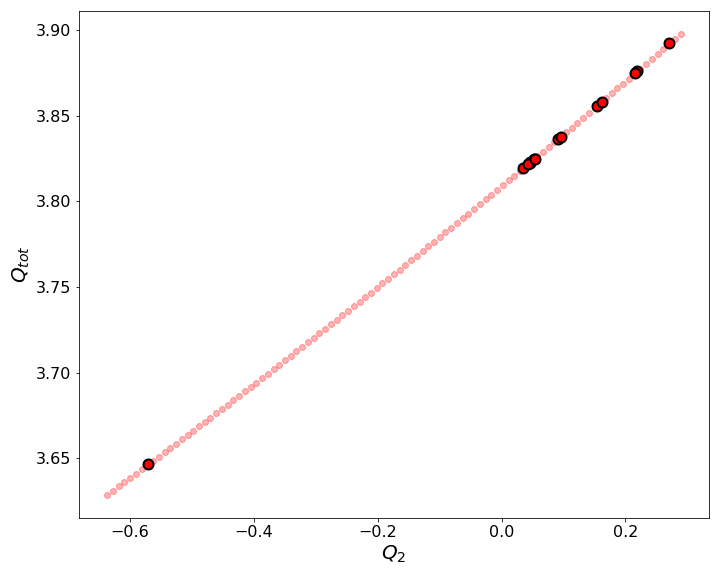}
    \includegraphics[width=0.35\textwidth]{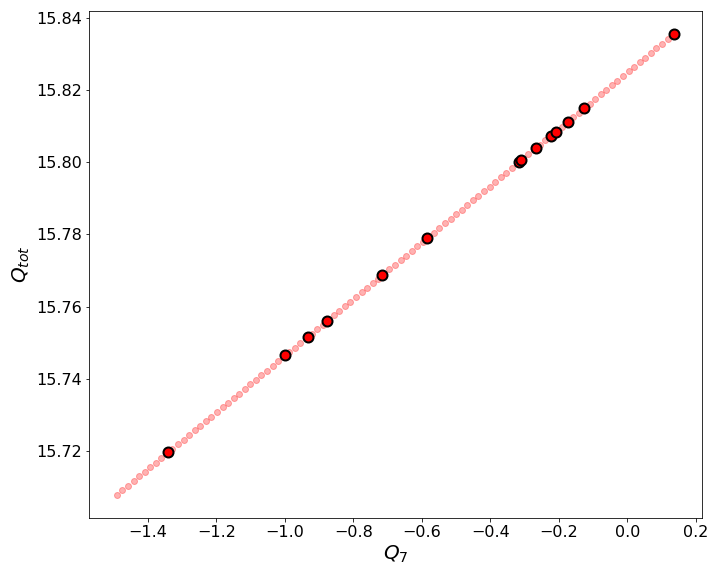}
    \includegraphics[width=0.35\textwidth]{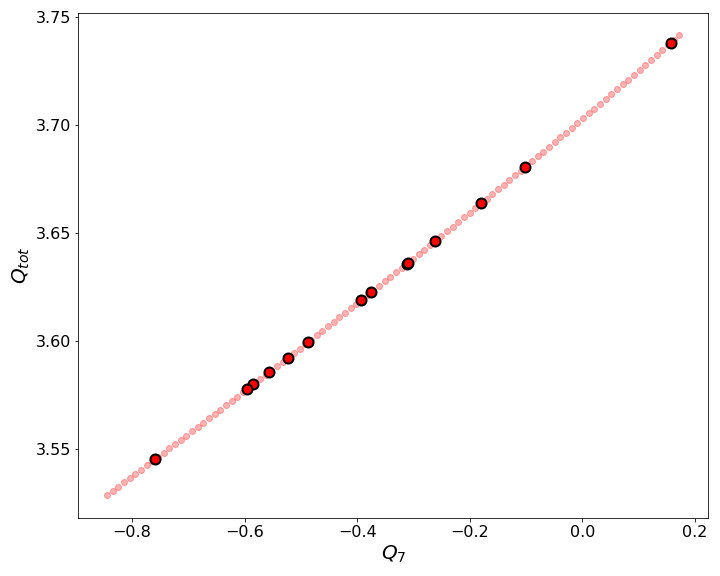}
    \caption{The learnt mixing of QMIX on \textit{3s5z} at the end of training for timesteps 0 (left) and 50 (right), for agents 2 (top) and 7 (bottom).}
    \label{fig:3s5z_mixing_elu}
\end{figure*}

Figure \ref{fig:3s5z_mixing_elu} shows the learnt mixing for \textit{3s5z} as a function of a single $Q_a$, keeping the $Q$-values for the other agent's fixed.
The mixing network clearly learns an approximately linear mixing, which is 
consistent across all timesteps and agents (only timesteps 0 and 50 for agents 
2 and 7 are shown since they are representative of all other timesteps and 
agents).

This is a surprising result. QMIX is certainly capable of learning nonlinear 
mixing functions in practice, and thereby outperforming the more restrictive 
VDN decomposition.
As an example, Figure \ref{fig:two_step_mixing} visualises the learnt mixing 
function for the didactic two-step game from section \ref{sec:two_step_game}, 
clearly showing how a nonlinear fit allows QMIX to represent the true optimal 
value function.

\begin{figure*}[h!]
	\centering
	\includegraphics[width=0.6\textwidth]{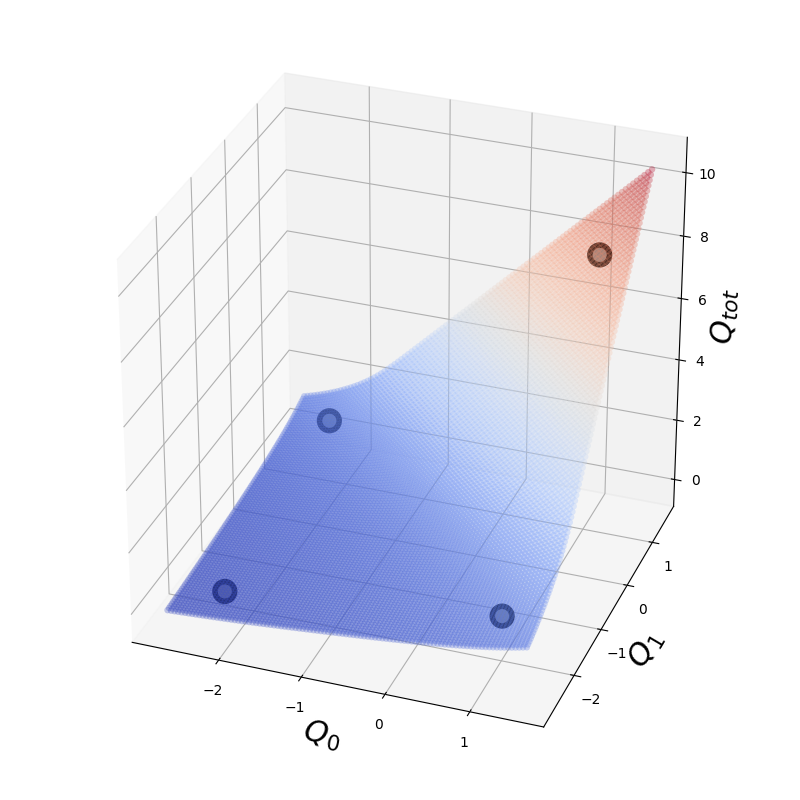}
	\caption{The learnt mixing of QMIX on the two-step game from Section 
	\ref{sec:two_step_game}.}
	\label{fig:two_step_mixing}
\end{figure*}

Yet, we observe mostly linear mixing functions in solutions to the SMAC 
benchmark tasks.
If QMIX is learning a mostly linear mixing, then why does it outperform QMIX-Lin?
One difference is that QMIX's mixing network is compromised of two 
fully-connected layers, rather than the single linear layer of QMIX-Lin.
If the nonlinearity is not essential, it may be that the extra layer in the 
mixing network is responsible for the performance increase.
To test this hypothesis, we consider another ablation:

\begin{itemize}
    \item \textbf{QMIX-2Lin:} QMIX, without a nonlinear activation function in the mixing network.
\end{itemize}

QMIX-2Lin's mixing network is compromised of two linear layers without any non-linearity between them. Thus, it can only learn a linear transformation of $Q_a$ into $Q_{tot}$. 
QMIX-Lin's mixing network is a single linear layer, thus both QMIX-Lin and QMIX-2Lin's mixing networks share the same representational capacity.
QMIX's mixing network, on the other hand, has an ELU non-linearity between its 
two weight layers and thus can represent non-linear transformations of $Q_a$ to 
$Q_{tot}$. 

We first consider a simple regression experiment in which the states, agent $Q$-values, and targets are randomly generated and fixed. 
Further details on the regression task are included in Appendix \ref{sec:regression_exp}.

\begin{figure*}[h!]
    \centering
    \includegraphics[width=0.6\textwidth]{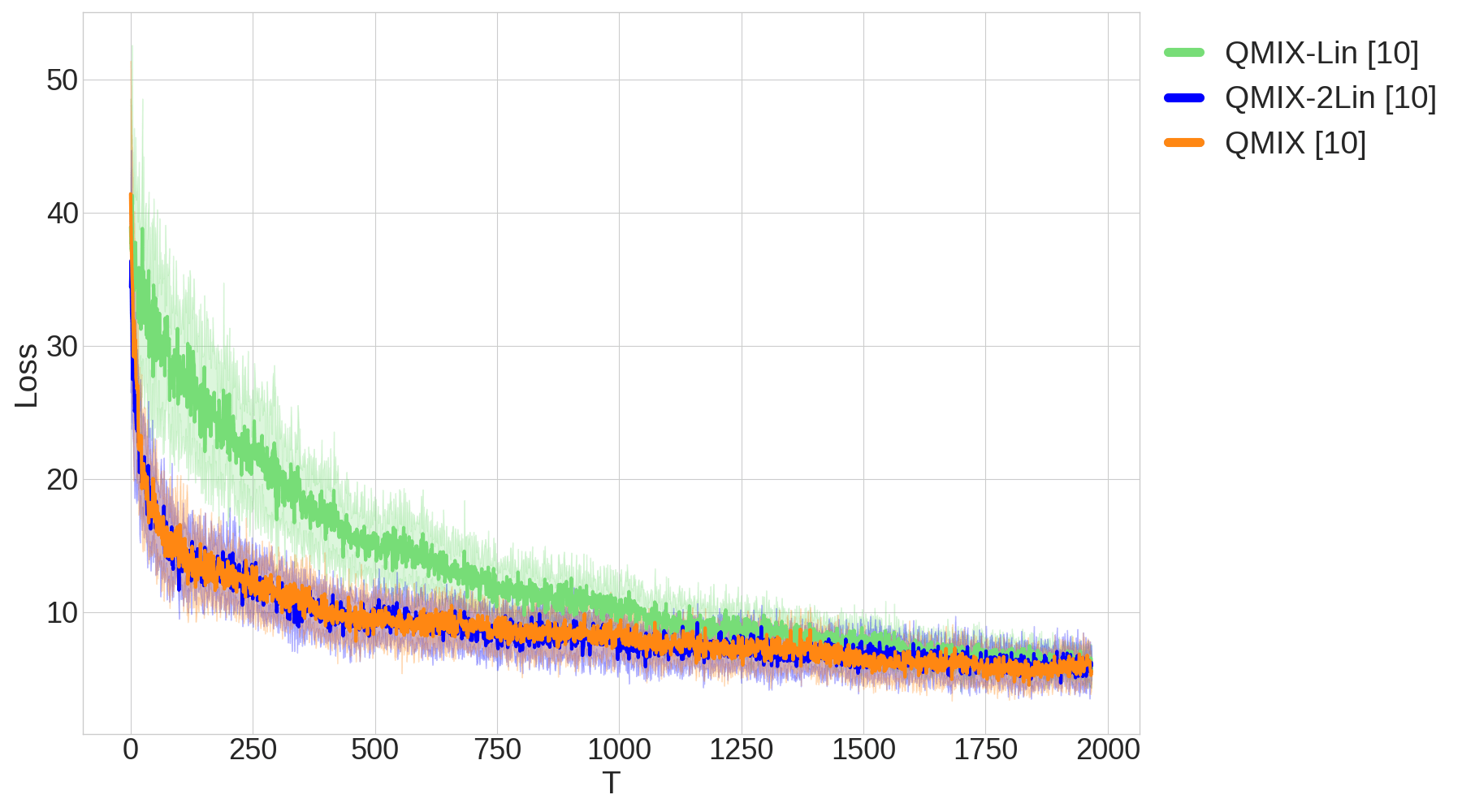}
    \caption{The loss on a regression task.}
    \label{fig:regression_loss}
\end{figure*}

We compare the loss of QMIX (with an ELU non-linearity), QMIX-Lin, and QMIX-2Lin in Figure \ref{fig:regression_loss}, which illustrates that QMIX and QMIX-2Lin perform similarly, and both perform a faster optimisation of the loss than QMIX-Lin.
Even though QMIX-Lin and QMIX-2Lin can both learn only linear mixing functions, their parametrisation of such a linear function are different.
Having an extra layer is a better parametrisation in this scenario because it allows for gradient descent to optimise the loss faster \citep{arora2018convergence}.

\begin{figure*}[h!]
    \centering
    \includegraphics[width=0.45\textwidth]{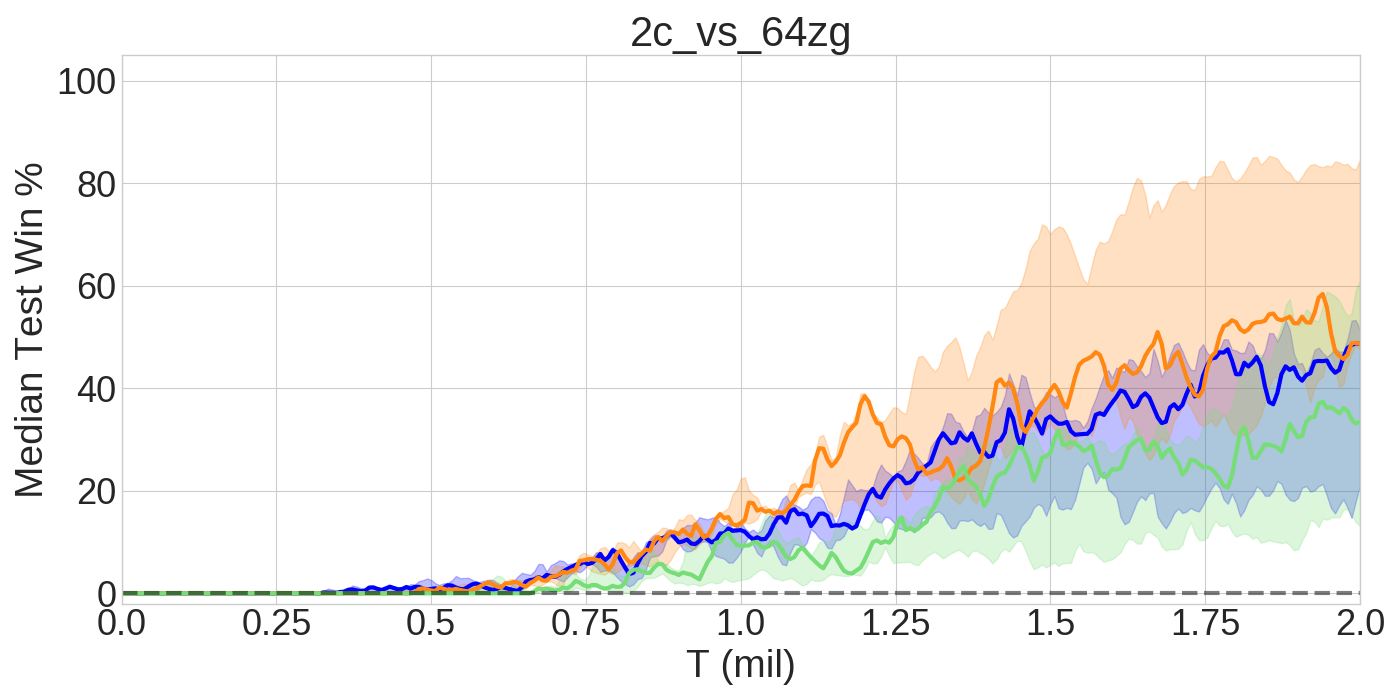}
    \includegraphics[width=0.45\textwidth]{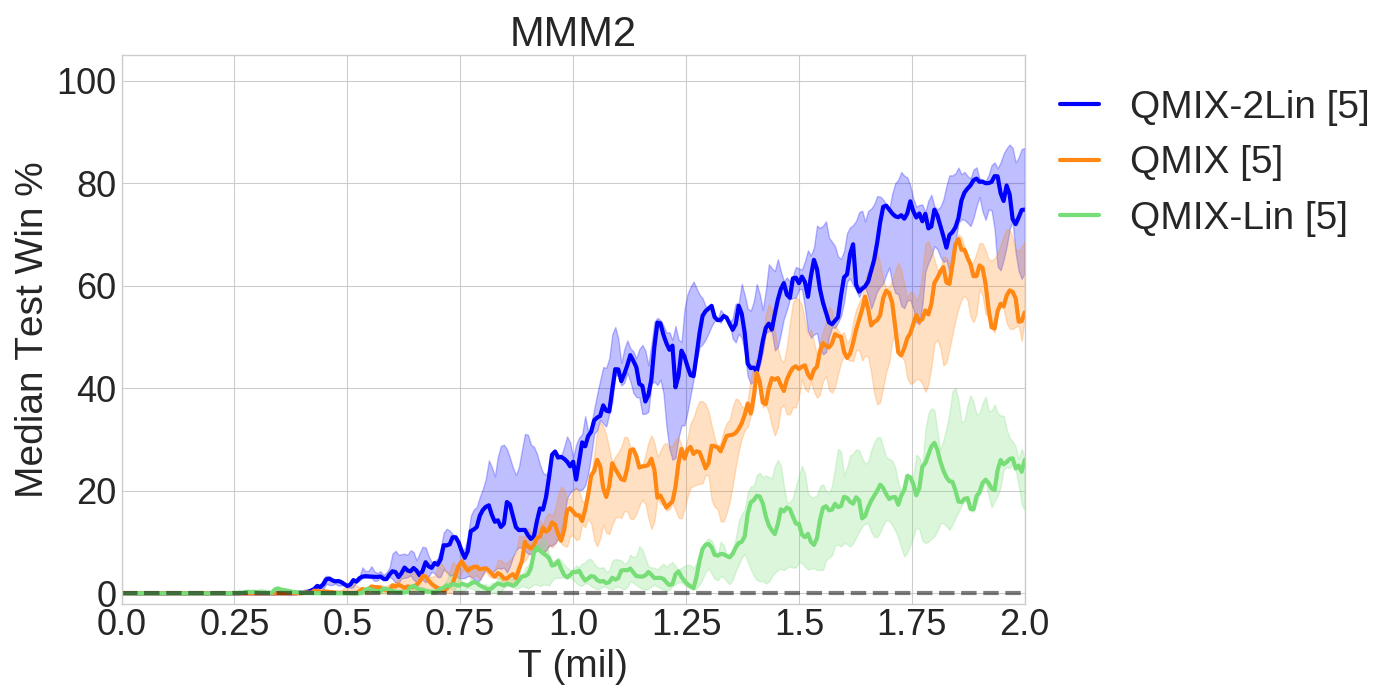}
    \caption{Ablations for linear experiments.}
    \label{fig:linear_ablations}
\end{figure*}

Figure \ref{fig:linear_ablations} confirms that the more flexible 
parameterisation of the mixing network is largely responsible for the increased 
performance. 
However, the results on \textit{2c\_vs\_64zg} show that QMIX performs slightly 
better than QMIX-2Lin, indicating that in some scenarios nonlinear mixing can 
still be beneficial. 
Figure \ref{fig:2c_mixing_elu} (right) shows an example of when such a nonlinearity is learnt on this task.

We also investigate changing the nonlinearity in the mixing network from ELU, which is largely linear, to Tanh.

\begin{figure*}[h!]
    \centering
    \includegraphics[width=0.45\textwidth]{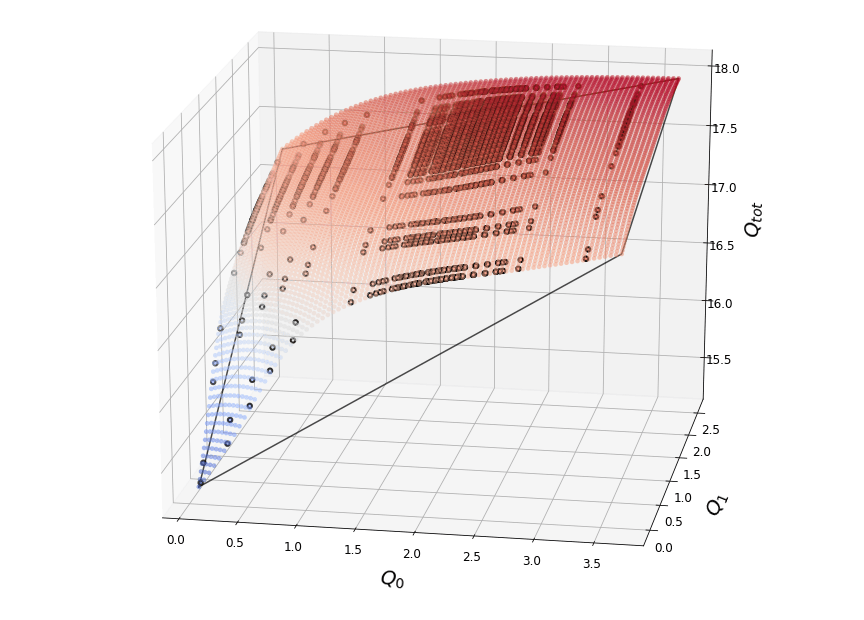}
    \includegraphics[width=0.45\textwidth]{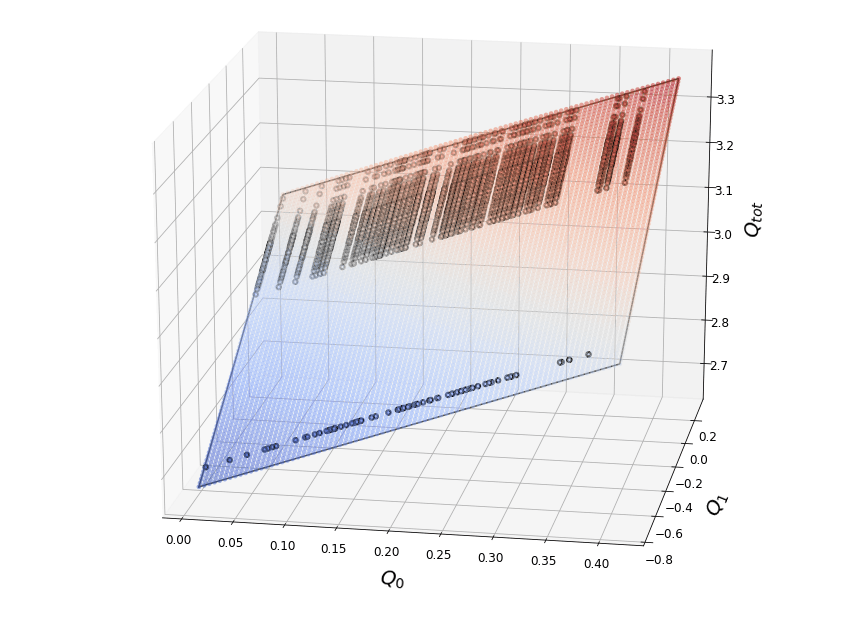}
    \caption{The learnt mixing of QMIX on \textit{2c\_vs\_64zg} using a tanh nonlinearity at the end of training for timesteps 0 (left) and 50 (right).}
    \label{fig:2c_mixing_tanh}
\end{figure*}

\begin{figure*}[h!]
    \centering
    \includegraphics[width=0.35\textwidth]{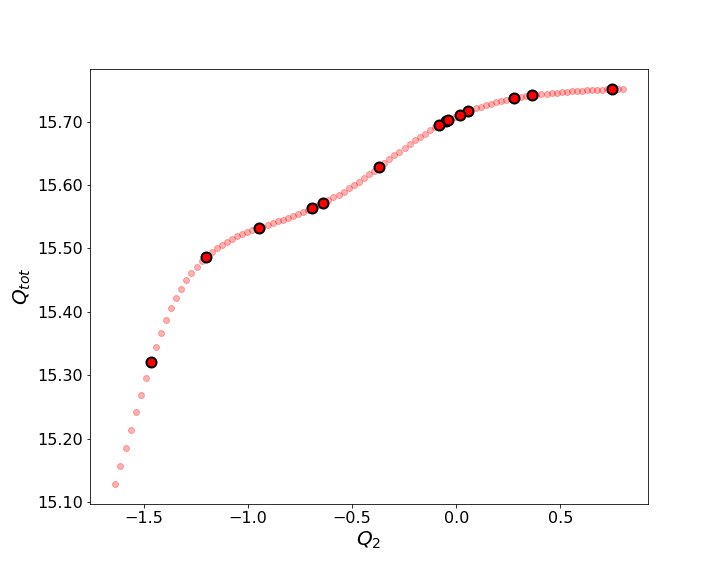}
    \includegraphics[width=0.35\textwidth]{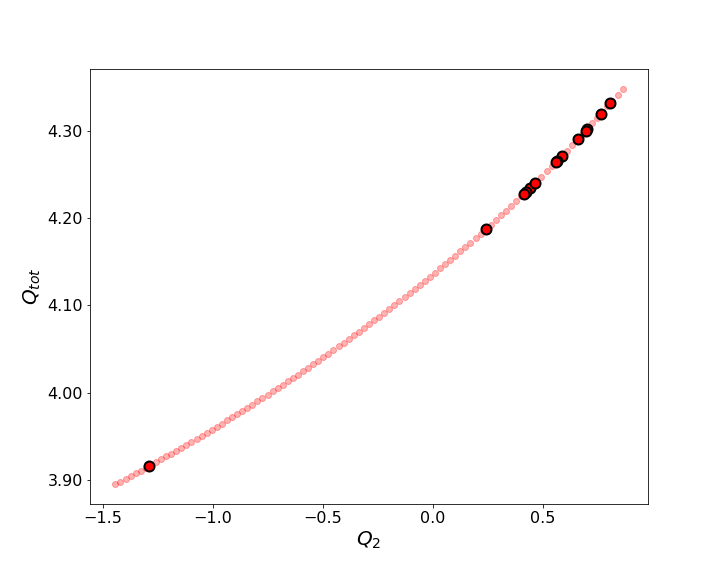}
    \includegraphics[width=0.35\textwidth]{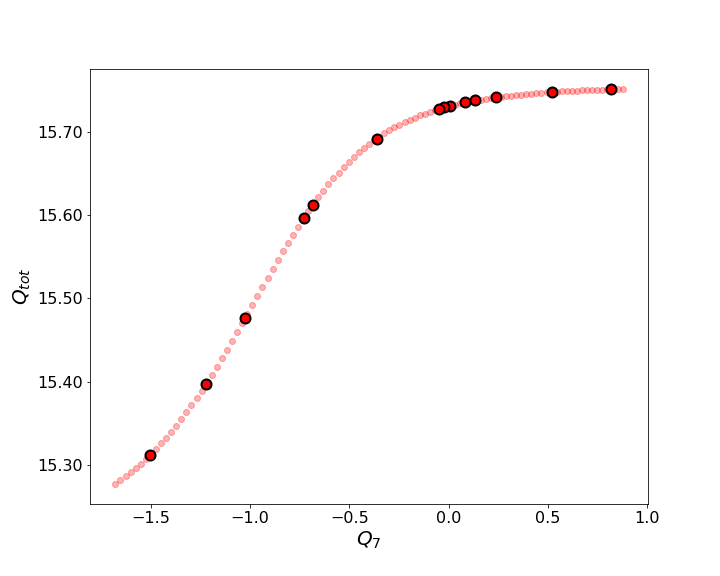}
    \includegraphics[width=0.35\textwidth]{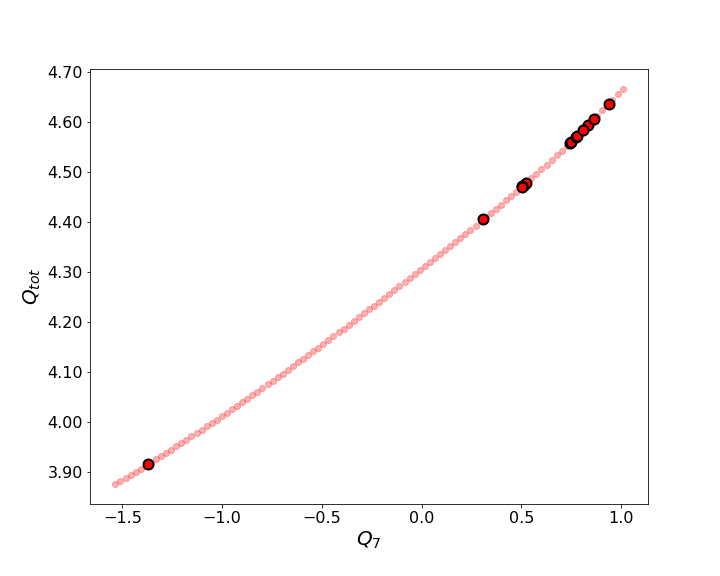}
    \caption{The learnt mixing of QMIX on \textit{3s5z}, using a tanh nonlinearity, at the end of training for timesteps 0 (left) and 50 (right), for agents 2 (top) and 7 (bottom).
    }
    \label{fig:3s5z_mixing_tanh}
\end{figure*}

\begin{figure*}[h!]
    \centering
    \includegraphics[width=0.4\textwidth]{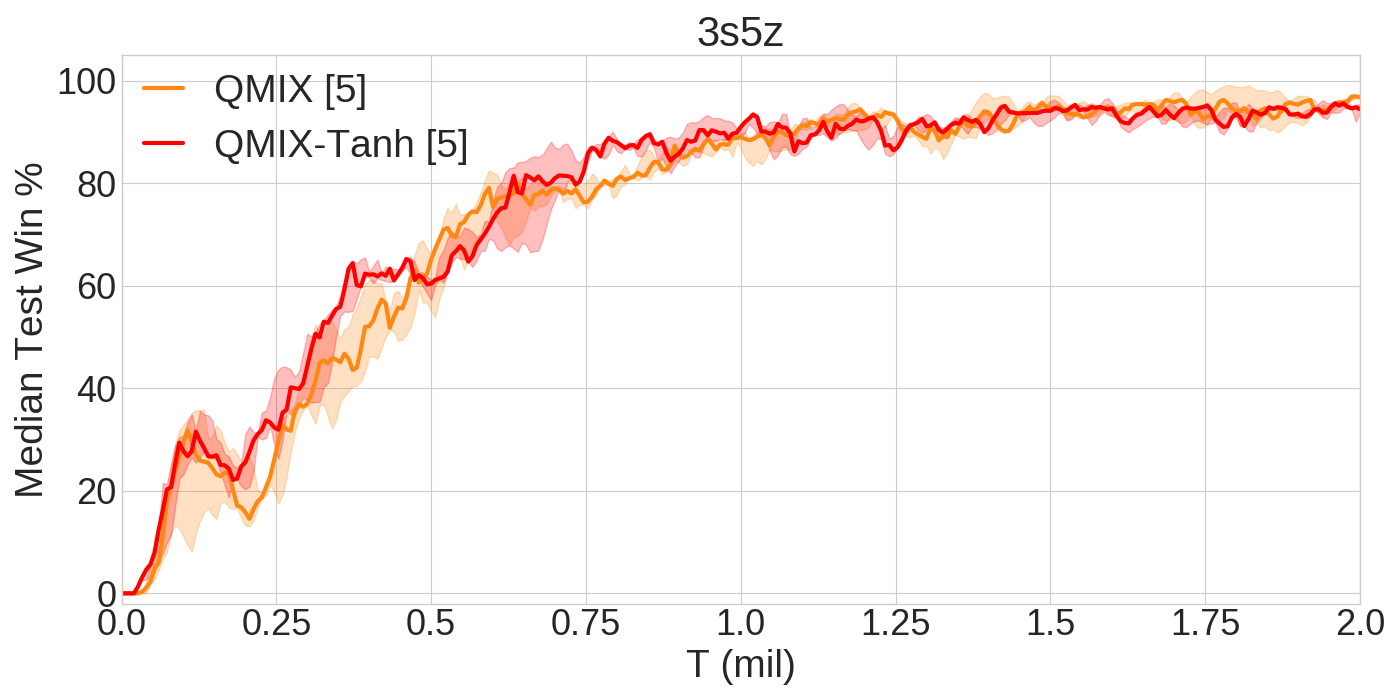}
    \includegraphics[width=0.4\textwidth]{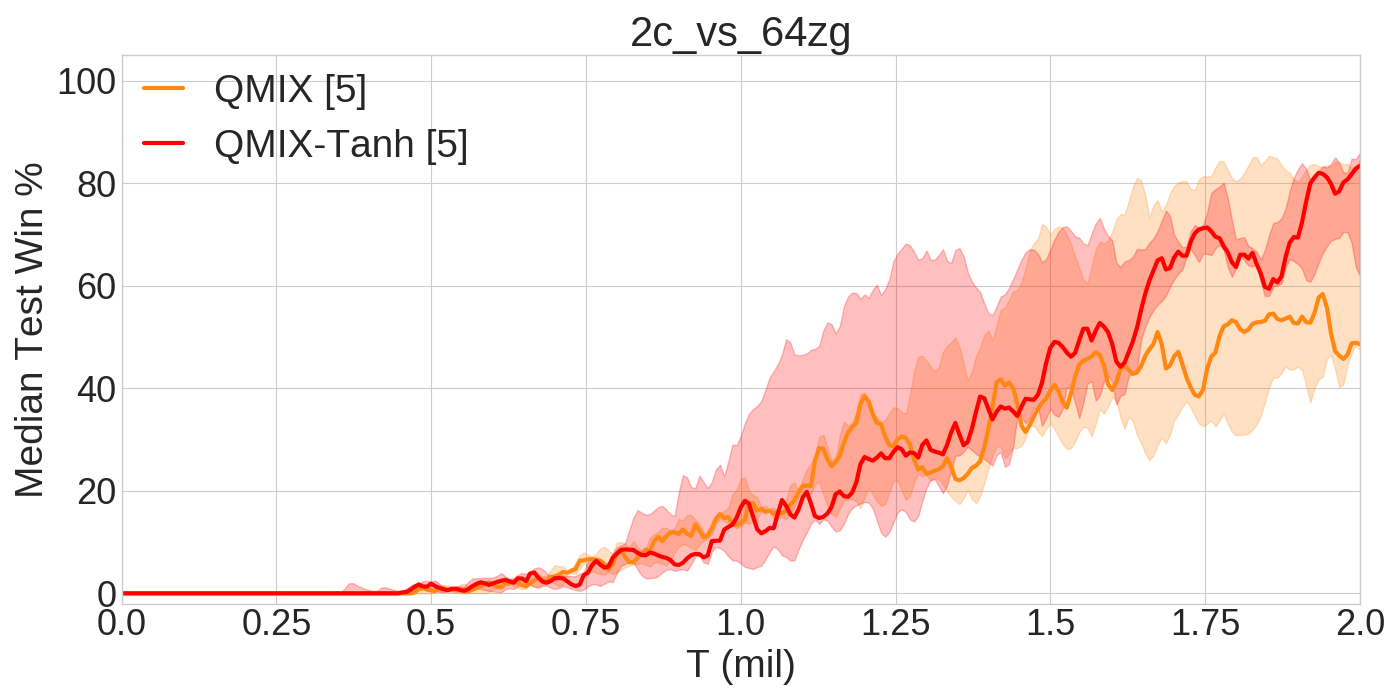}
    \caption{The Median test win \% comparing QMIX with an ELU nonlinearity (QMIX) and QMIX with a tanh nonlinearity in the mixing network (QMIX-Tanh).}
    \label{fig:elu_vs_tanh}
\end{figure*}

Figures \ref{fig:2c_mixing_tanh} and \ref{fig:3s5z_mixing_tanh} show the learnt mixings when using a tanh non-linearity.
The mixing network using tanh learns a more nonlinear mixing, especially for 
\textit{3s5z}, than with ELU.
Figure \ref{fig:elu_vs_tanh} shows how this translates to performance: there 
are perhaps some small gains in learning speed for some random seeds when using 
tanh.
However, the effect of an additional layer (in improving the optimisation 
dynamics) is much more significant than encouraging a more non-linear mixing 
through the choice of nonlinearity.

\subsection{Role of RNN in Agent Network}

\begin{figure*}[h!]
    \centering
    \includegraphics[width=0.45\textwidth]{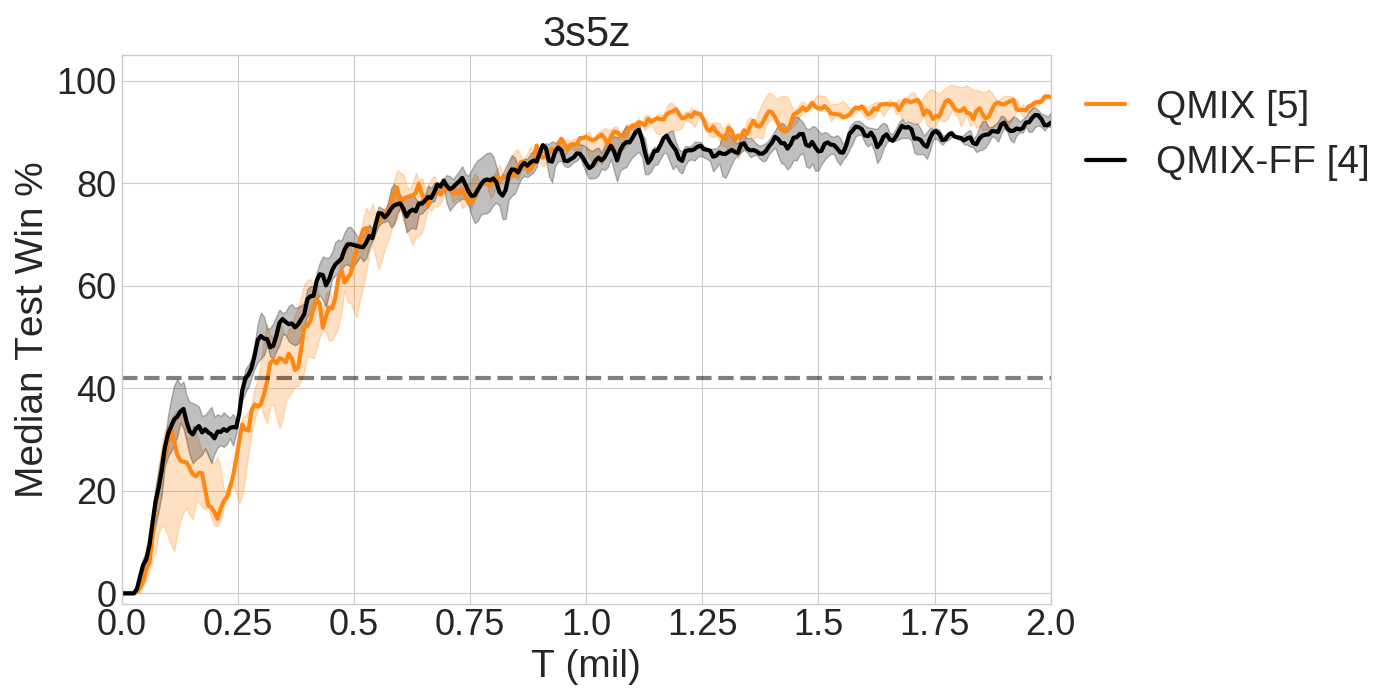}
    \includegraphics[width=0.45\textwidth]{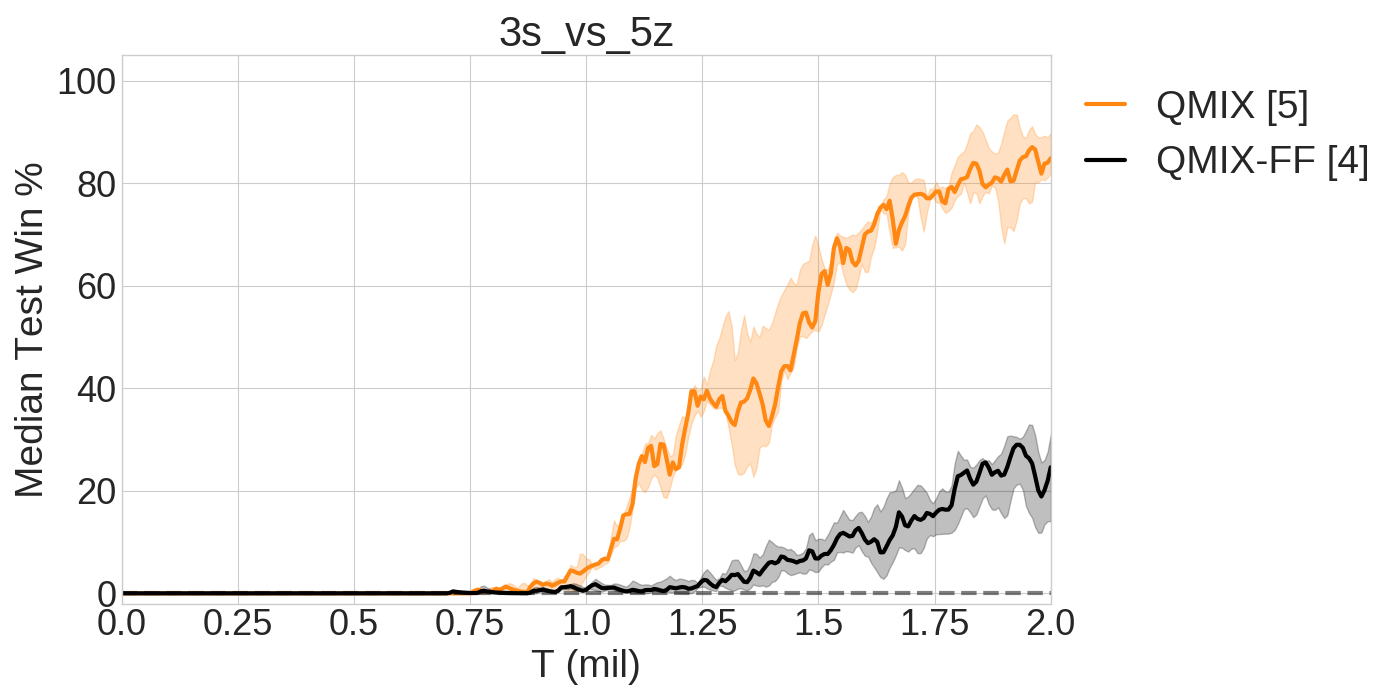}
    \caption{Comparing agent networks with and without RNNs (QMIX-FF) on two scenarios.}
    \label{fig:ff_vs_rnn}
\end{figure*}

Finally, we compare the necessity of using an agent's action-observation history by comparing the performance of an agent network with and without an RNN in Figure \ref{fig:ff_vs_rnn}.
On the easy scenario of \textit{3s5z} we can see that an RNN is not required to use action-observation information from previous timesteps, but on the harder scenario of \textit{3s\_vs\_5z} it is crucial to learning how to kite effectively. 

\subsection{Role of the COMA Critic Architecture}

In addition to the algorithmic differences between QMIX and COMA (QMIX is an off-policy $Q$-learning algorithm; COMA is an on-policy actor-critic algorithm), the architecture used to estimate the relevant $Q$-values differs.
COMA's critic is a feed-forward network, whereas QMIX's architecture for estimating $Q_{tot}$ contains RNNs (the agent networks).
The results of Figure \ref{fig:ff_vs_rnn} raise the question: To what extent do the architectural differences between QMIX and COMA affect final performance?
To test this, we change the architecture of the COMA critic to match that of QMIX more closely.
Specifically it now consists of per-agent components with the same architecture and inputs as the agent networks used by QMIX, which feed  $Q_a$ values for their chosen action into a mixing network. 
Since there is no requirement for monotonicity in this mixing network, we simplify its architecture significantly to a feed-forward network with ReLU non-linearities and two hidden layers of 128 units.
We test two variants of the mixing network, one with access to the state and one without.
The former takes the state as input in addition to the agent's chosen action $Q_a$-values.
The output of the mixing network is $|U|$ units, and we mask out the $Q_a$-value for the agent whose $Q$-values are being produced.

\begin{figure*}[h!]
    \centering
    \includegraphics[width=0.45\textwidth]{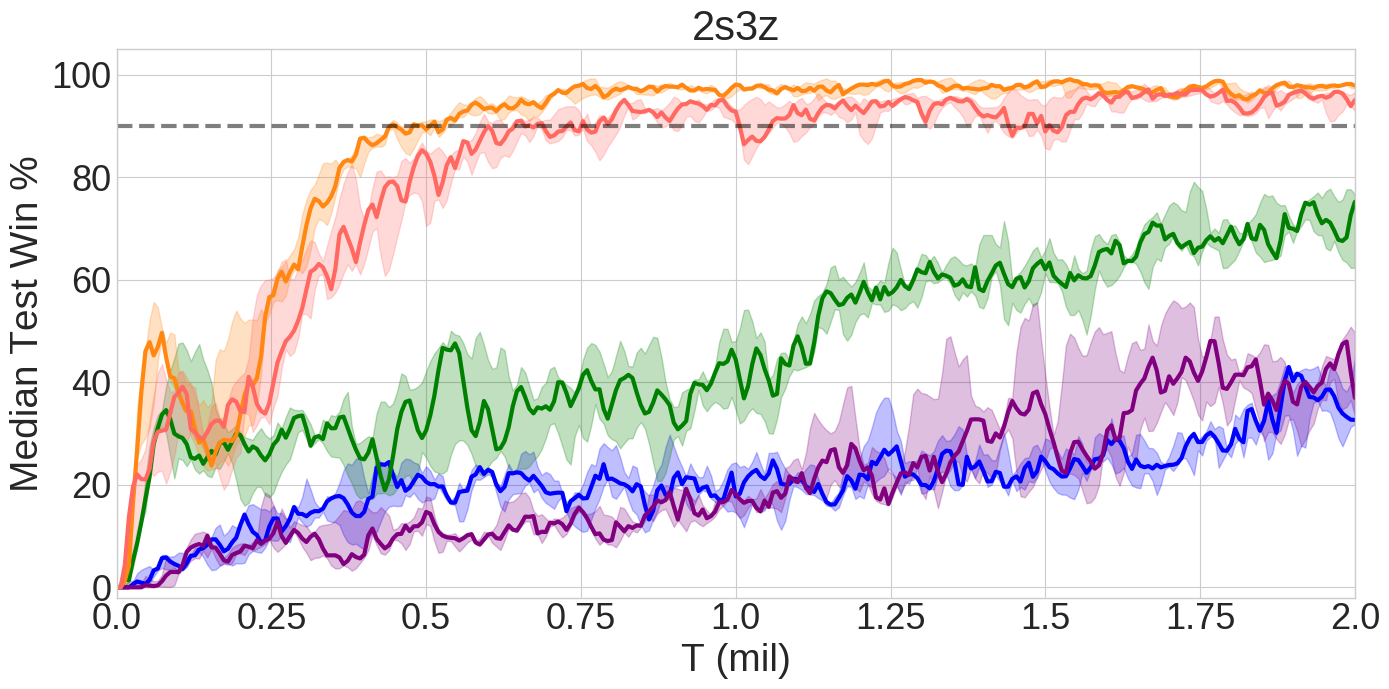}
    \includegraphics[width=0.45\textwidth]{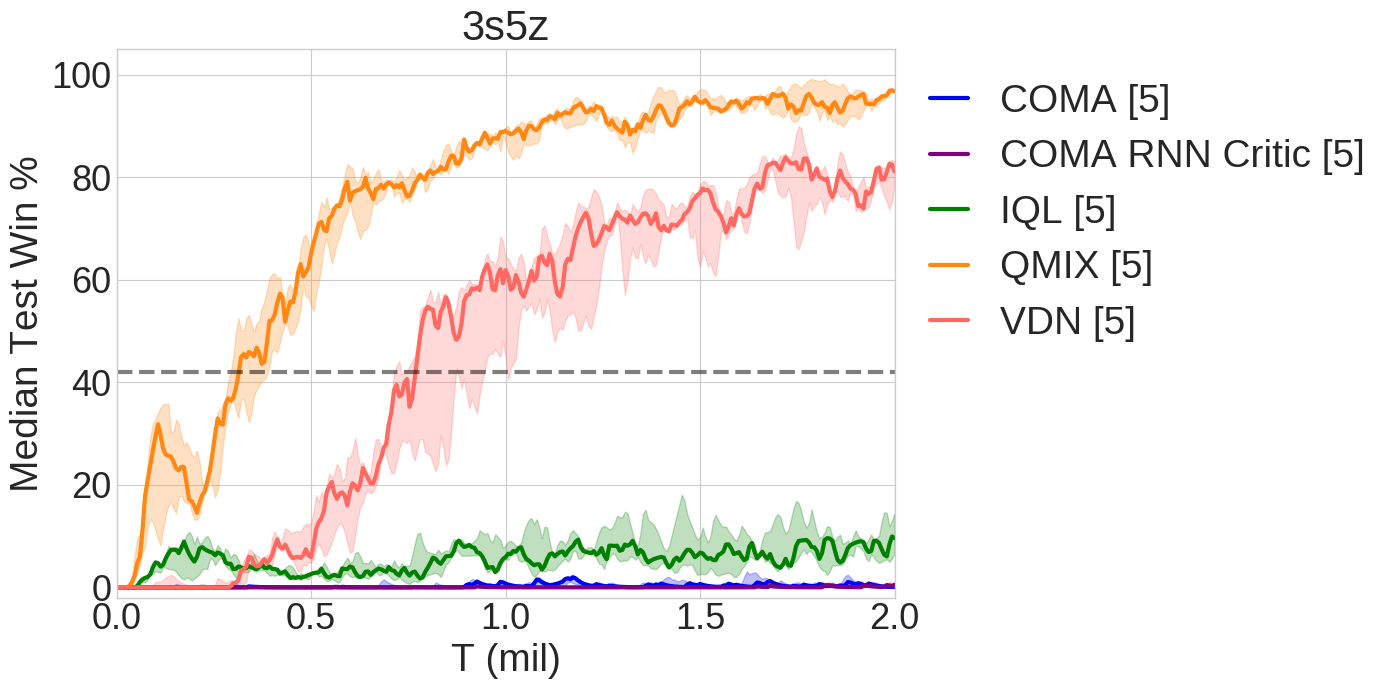}
    \caption{Comparing the different architectures for the COMA critic without the state, on two scenarios.}
    \label{fig:coma_rnn}
\end{figure*}

\begin{figure*}[h!]
    \centering
    \includegraphics[width=0.45\textwidth]{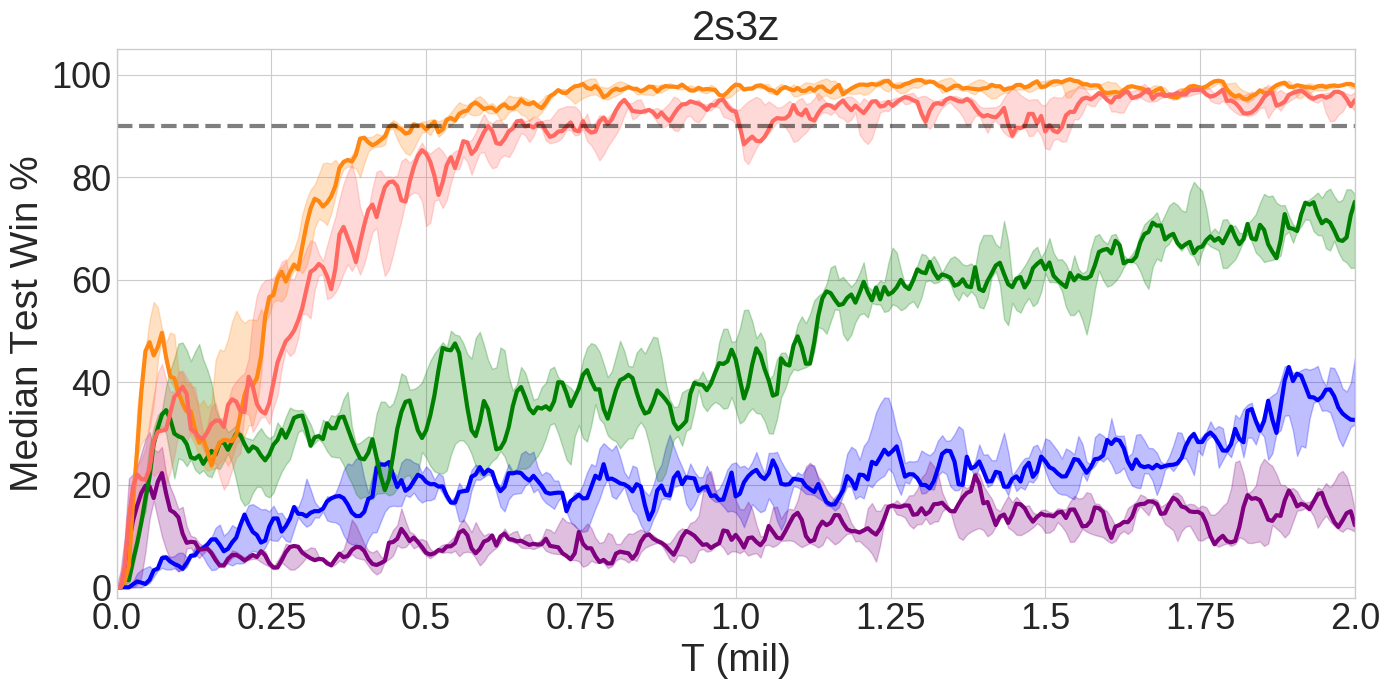}
    \includegraphics[width=0.45\textwidth]{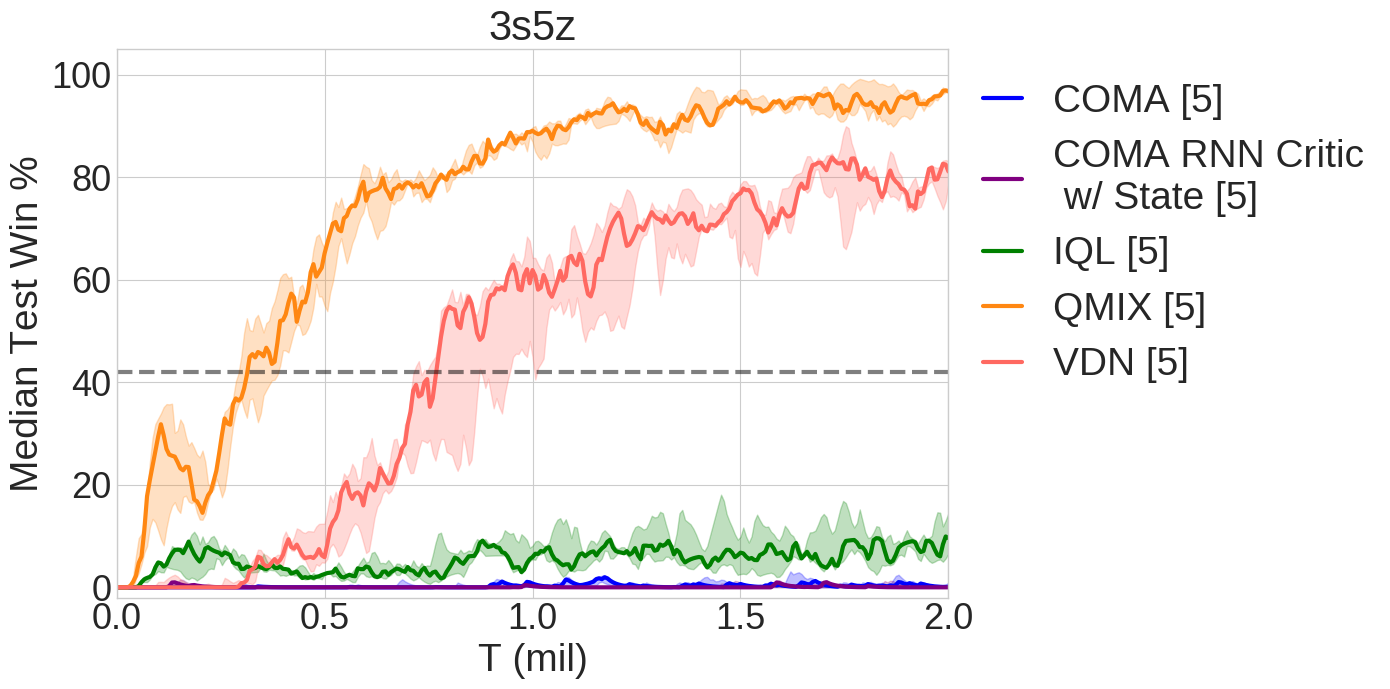}
    \caption{Comparing the different architectures for the COMA critic with the state on two scenarios.}
    \label{fig:coma_rnn_state}
\end{figure*}

Figures \ref{fig:coma_rnn} and \ref{fig:coma_rnn_state} show that the architectural change has little effect on the performance of COMA, with the addition of the state actually degrading performance in \textit{2s3z}. 

\subsection{Role of Network Size and Depth}

Another architectural consideration is the overall size of the network used to estimate $Q_{tot}$ for VDN and VDN-S.
Since QMIX has an additional mixing network, the overall architecture is deeper 
during training.
In order to understand the effect of this depth on performance, we consider the following variants of VDN and VDN-S.
\begin{itemize}
    \item \textbf{VDN Wider:} We increase the size of the agent's hidden layer from 64 to 128. 
    \item \textbf{VDN Deeper:} We add an additional layer to the agent network after the GRU, with a ReLU non-linearity. 
    \item \textbf{VDN-S Bigger:} We increase the size of the state-dependent bias to 2 hidden layers of 64 units.
\end{itemize}

In our other experiments, we have used the same agent networks for each method, 
to ensure a fair comparison of the performance of the decentralised policies.
By increasing the capacity of the agent networks for VDN Wider and VDN Deeper, 
we are now using larger agent networks than all other methods.
In this sense, the comparisons to these variants are no longer fair.
Figure \ref{fig:vdn_agent} shows the results for VDN Wider and VDN Deeper.
As expected, increasing the capacity of the agent networks increases 
performance. 
In particular, increasing the depth results in a large performance increase on 
the challenging MMM2. 
However, despite the larger agent networks, VDN Wider and VDN Deeper still do 
not match the performance of QMIX with smaller, less expressive agent networks.
Figure \ref{fig:vdn_state_bigger} shows the results for VDN-S Bigger, in which 
we increase the size of the state-dependent bias.  The results show that this has a minimal impact on performance over VDN-S. 

\begin{figure*}[h!]
    \centering
    \includegraphics[width=0.45\textwidth]{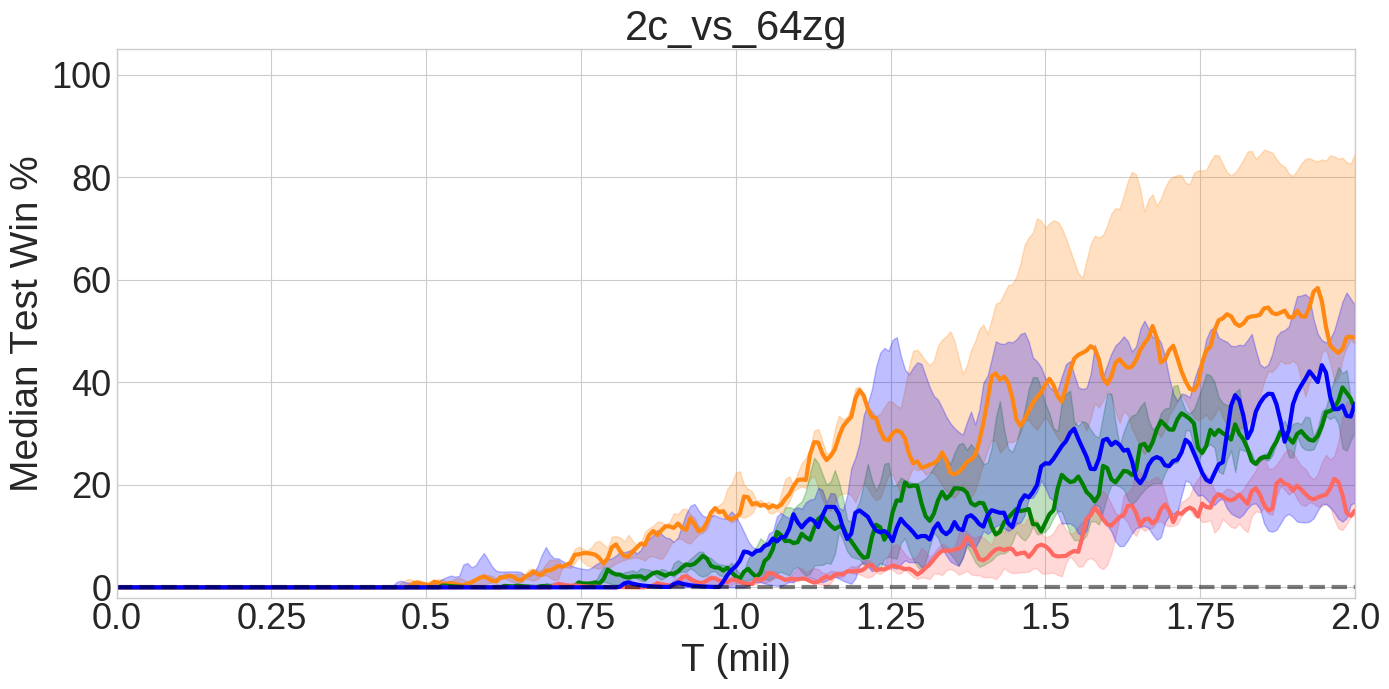}
    \includegraphics[width=0.45\textwidth]{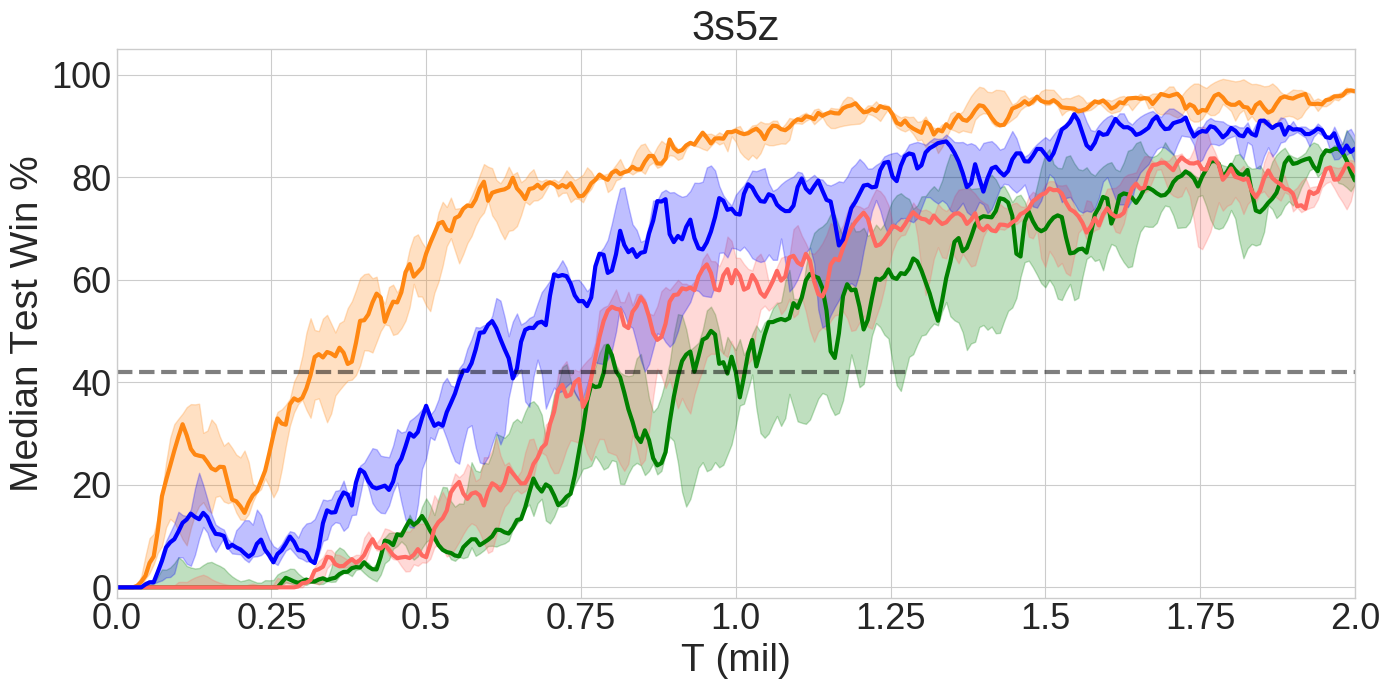}
    \includegraphics[width=0.45\textwidth]{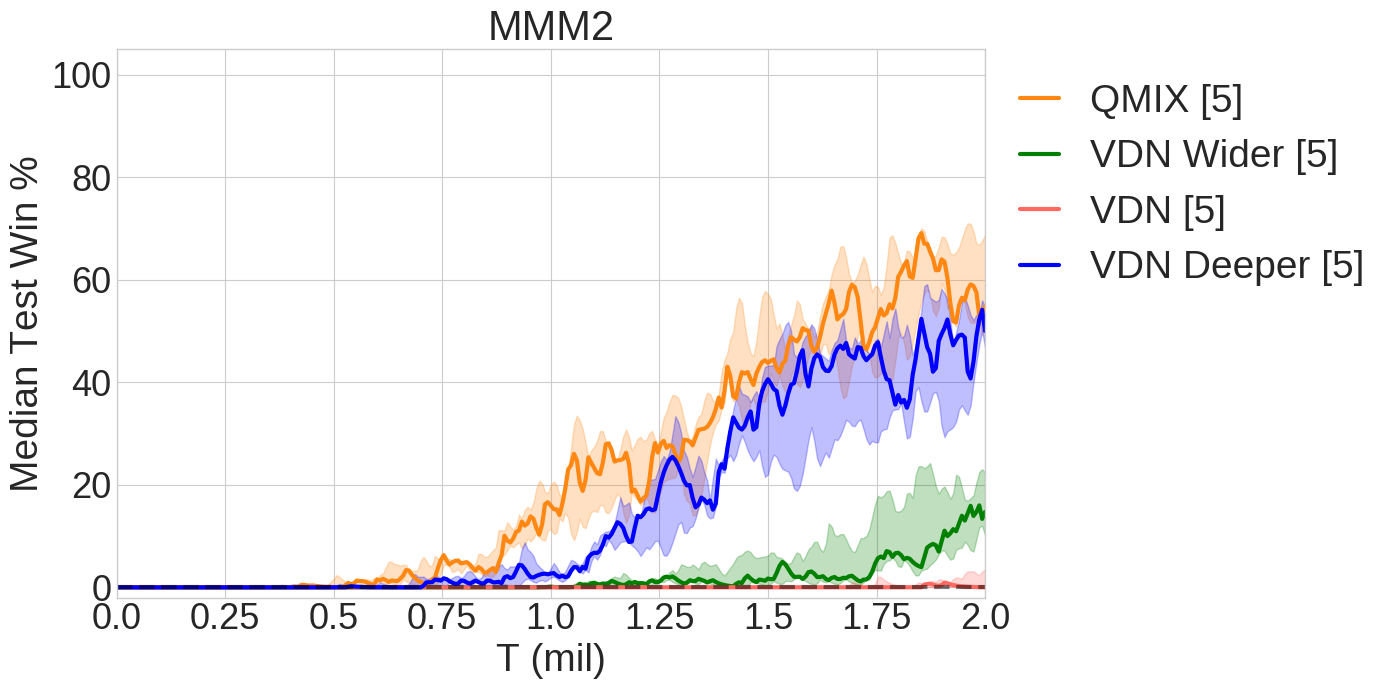}
    \caption{Comparing larger agent networks for VDN.}
    \label{fig:vdn_agent}
\end{figure*}

\begin{figure*}[h!]
    \centering
    \includegraphics[width=0.45\textwidth]{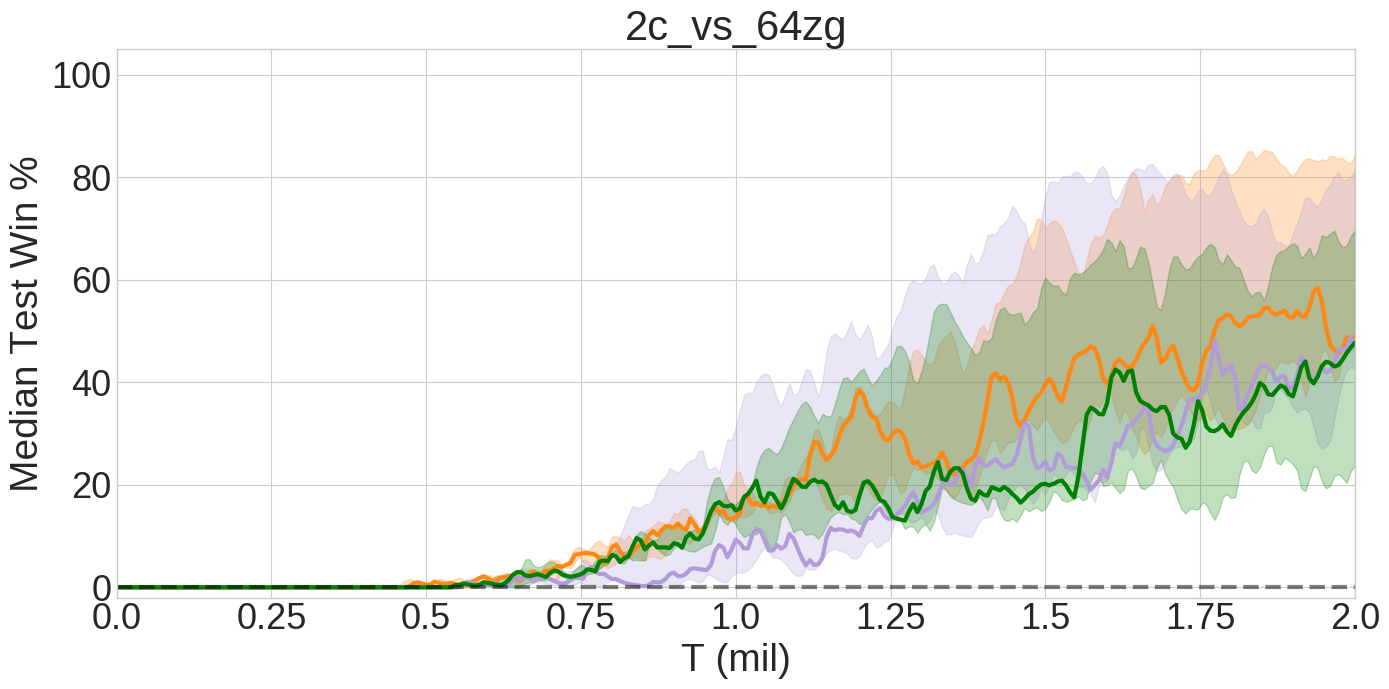}
    \includegraphics[width=0.45\textwidth]{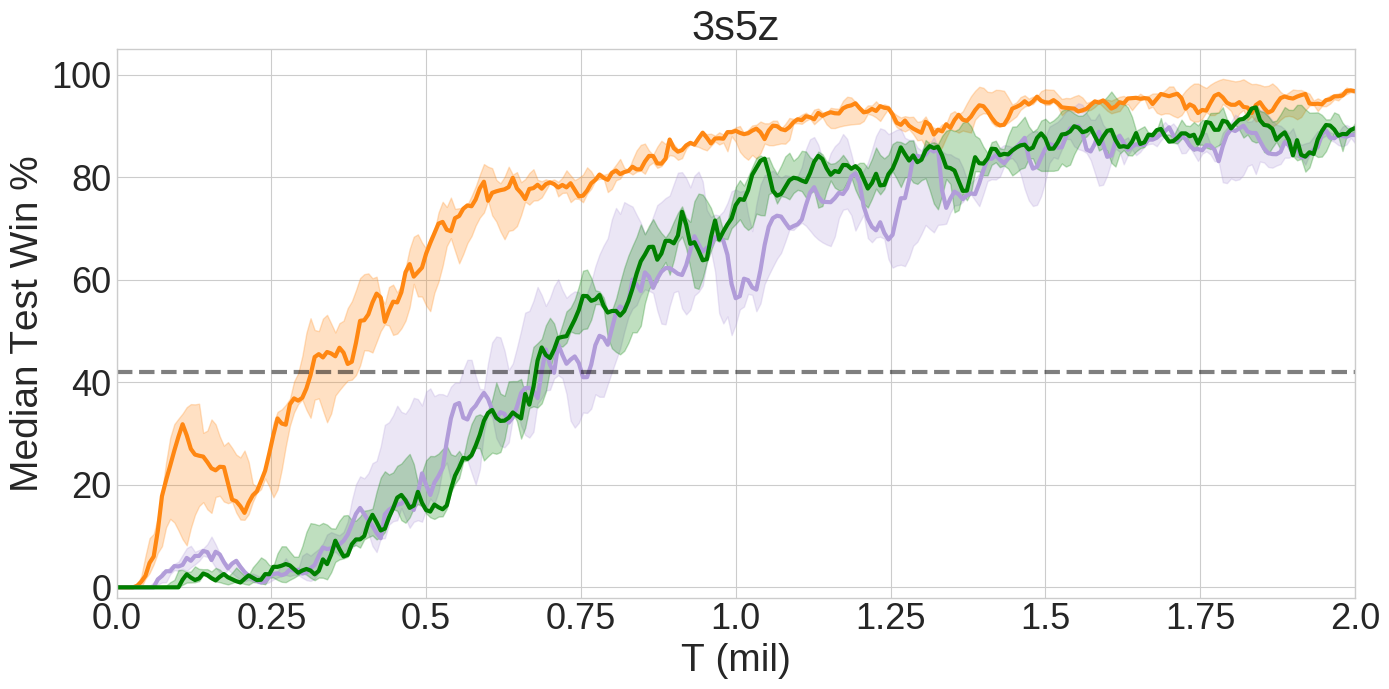}
    \includegraphics[width=0.45\textwidth]{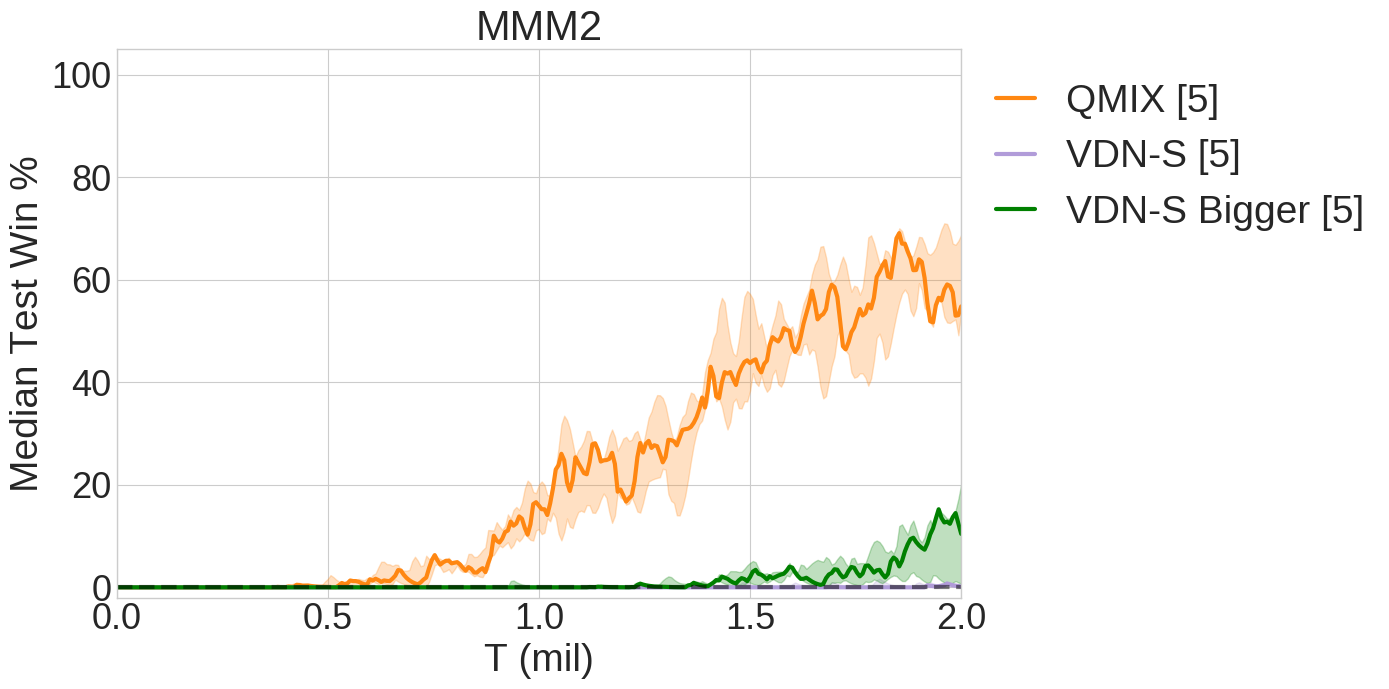}
    \caption{Comparing a larger state-dependent bias for VDN-S.}
    \label{fig:vdn_state_bigger}
\end{figure*}

\section{Conclusion}

This article presented QMIX, a deep multi-agent RL method that allows end-to-end learning of decentralised policies in a centralised setting and makes efficient use of extra state information. QMIX allows the learning of a rich joint action-value function, which admits tractable decompositions into per-agent action-value functions. This is achieved by imposing a monotonicity constraint on the mixing network.

To evaluate QMIX and other multi-agent RL algorithms, we introduced SMAC as a set of benchmark problems for cooperative multi-agent RL.
SMAC focuses on decentralised
micromanagement tasks and features 14 diverse combat scenarios which challenge
MARL methods to handle partial observability and high-dimensional inputs. Additionally, we
open-source PyMARL, a framework for cooperative deep MARL algorithms that features implementations of IQL, VDN, QMIX, QTRAN and COMA.

Our results on SMAC show that QMIX improves the final performance over other relevant deep multi-agent RL algorithms, most notably VDN. 
We include a detailed analysis through a series of ablations and visualisations for why QMIX outperforms VDN, showing that two components of QMIX are crucial to the performance the algorithm: a state-dependent bias and a learnt mixing of the agent's utilities into $Q_{tot}$ that is conditioned on the state.
Through our analysis, we found that the learnt mixing need only be linear to recover the majority of QMIX's performance, but that the parametrisation for this linear function is an important consideration.

In the future, we aim to extend SMAC with new challenging scenarios that
feature a more diverse set of units and require a higher level of coordination
amongst agents. Particularly, we plan to make use of the rich skill set of
StarCraft II units, and host scenarios that require the agents to utilise the
features of the terrain. With harder multi-agent coordination problems, we aim to explore the
gaps in existing multi-agent RL approaches and motivate further research in this domain.

There are several interesting avenues for future research:
\begin{itemize}
	\item Investigating the performance gap between value-based methods and policy-based methods. 
	In our experiments, QMIX (and VDN) significantly outperform COMA in all scenarios, both in terms of sample efficiency and final performance. The primary differences between the methods are the use of $Q$-learning vs the Policy Gradient theorem, use of off-policy data from the replay buffer and the architecture used to estimate the joint-action $Q$-Values.
	\item Expanding upon the joint-action $Q$-values representable by QMIX, to allow for the representation of coordinated behaviour. Importantly, this must scale gracefully to the deep RL setting.
	\item Better algorithms for multi-agent exploration that go beyond $\epsilon$-greedy. There have been many advances with regards to exploration in single-agent RL, dealing with both sparse and dense rewards environments. Taking inspiration from these, or developing entirely new approaches that improve exploration in a cooperative multi-agent RL setting is an important problem. SMAC's \textit{super-hard} scenarios provide a nice benchmark for evaluating these advances. 
\end{itemize}

\acks{
This project has received funding from the European Research Council (ERC) under the European Union's Horizon 2020 research and innovation programme (grant agreement number 637713). 
It was also supported by the Oxford-Google DeepMind Graduate Scholarship, the UK EPSRC CDT in Autonomous Intelligent Machines and Systems, Chevening Scholarship, Luys Scholarship and an EPSRC grant (EP/M508111/1, EP/N509711/1). This work is linked to and partly funded by the project Free the Drones (FreeD) under the Innovation Fund Denmark and Microsoft. The experiments were made possible by a generous equipment grant from NVIDIA.

We would like to thank Frans Oliehoek and Wendelin Boehmer for helpful comments and discussion. 
We also thank Oriol Vinyals, Kevin Calderone, and the rest of the SC2LE team at DeepMind and Blizzard for their work on the interface.

}

\bibliography{final}
\newpage

\appendix

\section{Monotonicity}
\label{appendix:monotonicity}
	\label{proof:mon}
	\begin{theorem} 
		If $\forall a \in A \equiv\{1, 2, ..., n\}$, $\frac{\partial Q_{tot}}{\partial Q_a}  \geq 0$ then
		\begin{equation}
		\nonumber
		\argmax_{\mathbf{u}}Q_{tot}(\boldsymbol{\tau}, \mathbf{u}) = 
		\begin{pmatrix}
		\argmax_{u^1}Q_1(\tau^1, u^1)   \\
		\vdots \\
		\argmax_{u^n}Q_n(\tau^n, u^n) \\
		\end{pmatrix}.
		\end{equation}
	\end{theorem}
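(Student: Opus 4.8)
The plan is to establish the identity by a one-agent-at-a-time exchange argument, using the monotonicity hypothesis to make $Q_{tot}$ order-preserving in each $Q_a$ separately. First I would record that the hypothesis $\partial Q_{tot}/\partial Q_a \geq 0$ for all $a$ is equivalent to $Q_{tot}$ being non-decreasing in each of its arguments: for fixed values of $Q_b$, $b \neq a$, and any $q \leq q'$, we have $Q_{tot}(\dots, q', \dots) - Q_{tot}(\dots, q, \dots) = \int_q^{q'} (\partial Q_{tot}/\partial Q_a)\, dt \geq 0$. I would also note that each agent network's output $Q_a(\tau^a, u^a)$ depends only on that agent's own history and action, so changing one agent's action leaves all the other $Q_b$ unchanged; hence a change of a single agent's action moves exactly one coordinate of the mixing function's input.

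Next, define $u^{a*} \in \argmax_{u^a} Q_a(\tau^a, u^a)$ for each $a$ (fixing one maximiser per agent if there are ties) and set $\mathbf{u}^* = (u^{1*}, \dots, u^{n*})$, which is the right-hand side of the claimed identity. For an arbitrary joint action $\mathbf{u} = (u^1, \dots, u^n)$, I would build a chain of joint actions $\mathbf{u} = \mathbf{v}^{(0)}, \mathbf{v}^{(1)}, \dots, \mathbf{v}^{(n)} = \mathbf{u}^*$, where $\mathbf{v}^{(k)}$ agrees with $\mathbf{u}^*$ on agents $1, \dots, k$ and with $\mathbf{u}$ on agents $k+1, \dots, n$. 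Passing from $\mathbf{v}^{(k-1)}$ to $\mathbf{v}^{(k)}$ only changes agent $k$'s action from $u^k$ to $u^{k*}$, hence changes only the $k$-th input to the mixing function, from $Q_k(\tau^k, u^k)$ to $Q_k(\tau^k, u^{k*}) \geq Q_k(\tau^k, u^k)$. By the coordinatewise monotonicity established above, $Q_{tot}(\boldsymbol{\tau}, \mathbf{v}^{(k)}) \geq Q_{tot}(\boldsymbol{\tau}, \mathbf{v}^{(k-1)})$. Chaining these $n$ inequalities yields $Q_{tot}(\boldsymbol{\tau}, \mathbf{u}^*) \geq Q_{tot}(\boldsymbol{\tau}, \mathbf{u})$, and since $\mathbf{u}$ was arbitrary, $\mathbf{u}^*$ attains the maximum of $Q_{tot}(\boldsymbol{\tau}, \cdot)$, which is exactly the asserted equality.

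The main subtlety I anticipate is purely bookkeeping around ties: both sides are in principle sets of maximisers, so I would either phrase the conclusion for an arbitrary selection of per-agent greedy actions (as above), or, to get genuine set equality, also argue the reverse inclusion — that every global maximiser of $Q_{tot}$ is per-agent greedy — which requires either a strict-monotonicity assumption or the convention that $\argmax$ denotes a fixed, consistently applied tie-breaking rule. The exchange argument itself is routine once coordinatewise monotonicity has been extracted from the derivative hypothesis, so I expect the tie-handling convention to be the only point needing explicit care.
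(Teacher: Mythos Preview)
Your proposal is correct and follows essentially the same approach as the paper: a one-coordinate-at-a-time chain of inequalities using the coordinatewise monotonicity of the mixing function to show that the profile of per-agent greedy actions attains the global maximum of $Q_{tot}$. Your treatment is in fact slightly more careful than the paper's, which writes the same telescoping chain but does not explicitly justify monotonicity via the integral argument nor discuss the tie-breaking subtlety you raise.
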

	\begin{proof}
	Since $\frac{\partial Q_{tot}}{\partial Q_a}\geq 0$ for $\forall a\in A$, the following holds for any $(u^1, \dots, u^n)$ and the \textit{mixing network} function $Q_{tot}(\cdot)$ with $n$ arguments:
	\begin{align}
	\nonumber
	\begin{split}
	&Q_{tot}(Q_1(\tau^1, {u}^1), \dots, Q_a(\tau^a, {u}^a), \dots, Q_n(\tau^n, {u}^n)) \\
	&\leq Q_{tot}(\max_{u^1}Q_1(\tau^1, u^1), \dots, Q_a(\tau^a, u^a), \dots, Q_n(\tau^n, {u}^n))\\
	&\dots\\
	&\leq Q_{tot}(\max_{u^1}Q_1(\tau^1, u^1), \dots, \max_{u^a}Q_a(\tau^a, u^a), \dots, Q_n(\tau^n, {u}^n))\\
	&\dots\\
	&\leq Q_{tot}(\max_{u^1}Q_1(\tau^1, u^1), \dots, \max_{u^a}Q_a(\tau^a, u^a), \dots, \max_{u^n}Q_n(\tau^n, {u}^n)). \\
	\end{split}
	\end{align}
	Therefore, 
	the maximiser of the mixing network function is: $$(\max_{u^1}Q_1(\tau^1, u^1), \dots, \max_{u^a}Q_a(\tau^a, u^a), \dots, \max_{u^n}Q_n(\tau^n, {u}^n)).$$
	Thus,
	\begin{align}
	\nonumber
	\begin{split}
	\max_{\mathbf{u}}Q_{tot}(\boldsymbol{\tau}, \mathbf{u}) &\coloneqq \max_{\mathbf{u} = (u^1, \dots, u^n)} Q_{tot}(Q_1(\tau^1, {u}^1), \dots, Q_n(\tau^n, {u}^n))\\
	&= Q_{tot}(\max_{u^1}Q_1(\tau^1, u^1), \dots, \max_{u^n}Q_n(\tau^n, {u}^n)).\\
	\end{split}
	\end{align}
	Letting $\mathbf{u}_* = (u_*^1,\dots,u_*^n) = \begin{pmatrix}
		\argmax_{u^1}Q_1(\tau^1, u^1)   \\
		\vdots \\
		\argmax_{u^n}Q_n(\tau^n, u^n) \\
		\end{pmatrix}$,
	we have that:
	\begin{align}
	\nonumber
	\begin{split}
	Q_{tot}(Q_1(\tau^1, {u}_*^1), \dots, Q_n(\tau^n, {u}_*^n))
	&= Q_{tot}(\max_{u^1}Q_1(\tau^1, u^1), \dots, \max_{u^n}Q_n(\tau^n, {u}^n))\\
	&= \max_{\mathbf{u}}Q_{tot}(\boldsymbol{\tau}, \mathbf{u})
	\end{split}
	\end{align}
	Hence, $\mathbf{u}_* =  \argmax_{\mathbf{u}}Q_{tot}(\boldsymbol{\tau}, \mathbf{u})$, 
	which proves \eqref{eq:argmax_constist}.
	\end{proof}

\section{Two Step Game}
\label{sec:2step_arch}

\subsection{Architecture and Training}

The architecture of all agent networks is a DQN with a single hidden layer comprised of $64$ units with a ReLU nonlinearity. Each agent performs independent $\epsilon$ greedy action selection, with $\epsilon=1$. We set $\gamma=0.99$. The replay buffer consists of the last 500 episodes, from which we uniformly sample a batch of size 32 for training. The target network is updated every 100 episodes. The learning rate for RMSprop is set to $5 \times 10^{-4}$. We train for $10k$ timesteps. The size of the mixing network is $8$ units. All agent networks share parameters, thus the agent id is concatenated onto each agent's observations. We do not pass the last action taken to the agent as input. Each agent receives the full state as input.

Each state is one-hot encoded. The starting state for the first timestep is State $1$. If Agent $1$ takes Action A, it transitions to State $2$ (whose payoff matrix is all $7$s). If agent $1$ takes Action B in the first timestep, it transitions to State $3$. 

\subsection{Learned Value Functions}

The learned value functions for the different methods on the Two Step Game are shown in Tables \ref{qmix_2step_game_all} and \ref{tab:iql_qvals}. 

\begin{table}[h]
    \setlength{\extrarowheight}{3pt}{}
    \centering
    (a)
    \begin{tabular}{c|*{2}{>{\centering\arraybackslash}p{.05\linewidth}|}}
        \multicolumn{1}{c}{} & \multicolumn{2}{c}{State $1$} \\
        \multicolumn{1}{c}{} & \multicolumn{1}{c}{\bb{$A$}}  & \multicolumn{1}{c}{\bb{$B$}} \\ \cline{2-3}
        \cc{$A$} & 6.94 & 6.94 \\ \cline{2-3}
        \cc{$B$} & 6.35 & 6.36  \\\cline{2-3}
    \end{tabular}~
    \begin{tabular}{|*{2}{>{\centering\arraybackslash}p{.05\linewidth}|}}
        \multicolumn{2}{c}{State $2$A} \\
        \multicolumn{1}{c}{\bb{$A$}}  & \multicolumn{1}{c}{\bb{$B$}} \\ \cline{1-2}
        6.99 & 7.02 \\\cline{1-2}
        6.99 & 7.02  \\\cline{1-2}
    \end{tabular}~
    \begin{tabular}{|*{2}{>{\centering\arraybackslash}p{.05\linewidth}|}}
        \multicolumn{2}{c}{State $2$B} \\
        \multicolumn{1}{c}{\bb{$A$}}  & \multicolumn{1}{c}{\bb{$B$}} \\\cline{1-2}
        \text{-1.87} & 2.31 \\\cline{1-2}
        2.33 & 6.51  \\\cline{1-2}
    \end{tabular}\\\bigskip

    (b)
    \begin{tabular}{c|*{2}{>{\centering\arraybackslash}p{.05\linewidth}|}}
        \multicolumn{1}{c}{} & \multicolumn{1}{c}{\bb{$A$}}  & \multicolumn{1}{c}{\bb{$B$}} \\ \cline{2-3}
        \cc{$A$} & 6.93 & 6.93  \\ \cline{2-3}
        \cc{$B$} & 7.92 & 7.92  \\\cline{2-3}
    \end{tabular}~
    \begin{tabular}{|*{2}{>{\centering\arraybackslash}p{.05\linewidth}|}}
        \multicolumn{1}{c}{\bb{$A$}}  & \multicolumn{1}{c}{\bb{$B$}} \\ \cline{1-2}
        7.00 & 7.00 \\ \cline{1-2}
        7.00 & 7.00  \\\cline{1-2}
    \end{tabular}~
    \begin{tabular}{|*{2}{>{\centering\arraybackslash}p{.05\linewidth}|}}
        \multicolumn{1}{c}{\bb{$A$}}  & \multicolumn{1}{c}{\bb{$B$}} \\\cline{1-2}
        0.00 & 1.00 \\\cline{1-2}
        1.00 & 8.00 \\\cline{1-2}
    \end{tabular}\\\bigskip

    (c)
    \begin{tabular}{c|*{2}{>{\centering\arraybackslash}p{.05\linewidth}|}}
        \multicolumn{1}{c}{} & \multicolumn{1}{c}{\bb{$A$}}  & \multicolumn{1}{c}{\bb{$B$}} \\ \cline{2-3}
        \cc{$A$} & 6.94 & 6.93  \\ \cline{2-3}
        \cc{$B$} & 7.93 & 7.92  \\\cline{2-3}
    \end{tabular}~
    \begin{tabular}{|*{2}{>{\centering\arraybackslash}p{.05\linewidth}|}}
        \multicolumn{1}{c}{\bb{$A$}}  & \multicolumn{1}{c}{\bb{$B$}} \\ \cline{1-2}
        7.03 & 7.02 \\ \cline{1-2}
        7.02 & 7.01  \\\cline{1-2}
    \end{tabular}~
    \begin{tabular}{|*{2}{>{\centering\arraybackslash}p{.05\linewidth}|}}
        \multicolumn{1}{c}{\bb{$A$}}  & \multicolumn{1}{c}{\bb{$B$}} \\\cline{1-2}
        0.00 & 1.01 \\\cline{1-2}
        1.01 & 8.02 \\\cline{1-2}
    \end{tabular}\\\bigskip

    (d)
    \begin{tabular}{c|*{2}{>{\centering\arraybackslash}p{.05\linewidth}|}}
        \multicolumn{1}{c}{} & \multicolumn{1}{c}{\bb{$A$}}  & \multicolumn{1}{c}{\bb{$B$}} \\ \cline{2-3}
        \cc{$A$} & 6.98 & 6.97  \\ \cline{2-3}
        \cc{$B$} & 6.37 & 6.36  \\\cline{2-3}
    \end{tabular}~
    \begin{tabular}{|*{2}{>{\centering\arraybackslash}p{.05\linewidth}|}}
        \multicolumn{1}{c}{\bb{$A$}}  & \multicolumn{1}{c}{\bb{$B$}} \\ \cline{1-2}
        7.01 & 7.02 \\ \cline{1-2}
        7.02 & 7.04  \\\cline{1-2}
    \end{tabular}~
    \begin{tabular}{|*{2}{>{\centering\arraybackslash}p{.05\linewidth}|}}
        \multicolumn{1}{c}{\bb{$A$}}  & \multicolumn{1}{c}{\bb{$B$}} \\\cline{1-2}
        \text{-1.39} & 2.57 \\\cline{1-2}
        2.67 & 6.58 \\\cline{1-2}
    \end{tabular}\\\bigskip

    (e)
    \begin{tabular}{c|*{2}{>{\centering\arraybackslash}p{.05\linewidth}|}}
        \multicolumn{1}{c}{} & \multicolumn{1}{c}{\bb{$A$}}  & \multicolumn{1}{c}{\bb{$B$}} \\ \cline{2-3}
        \cc{$A$} & 6.95 & 6.99  \\ \cline{2-3}
        \cc{$B$} & 6.18 & 6.22  \\\cline{2-3}
    \end{tabular}~
    \begin{tabular}{|*{2}{>{\centering\arraybackslash}p{.05\linewidth}|}}
        \multicolumn{1}{c}{\bb{$A$}}  & \multicolumn{1}{c}{\bb{$B$}} \\ \cline{1-2}
        6.99 & 7.06 \\ \cline{1-2}
        7.01 & 7.09  \\\cline{1-2}
    \end{tabular}~
    \begin{tabular}{|*{2}{>{\centering\arraybackslash}p{.05\linewidth}|}}
        \multicolumn{1}{c}{\bb{$A$}}  & \multicolumn{1}{c}{\bb{$B$}} \\\cline{1-2}
        \text{-1.21} & 2.73 \\\cline{1-2}
        2.46 & 6.40 \\\cline{1-2}
    \end{tabular}\\\bigskip

    \caption{$Q_{tot}$ on the 2 step game for (a) VDN, (b) QMIX, (c) QMIX-NS, (d) QMIX-Lin and (e) VDN-S}
    \label{qmix_2step_game_all}
\end{table}

\begin{table}[h]
    \setlength{\extrarowheight}{3pt}
    \centering

    \begin{tabular}{c|*{2}{>{\centering\arraybackslash}p{.05\linewidth}|}}
        \multicolumn{1}{c}{} & \multicolumn{1}{c}{$A$}  & \multicolumn{1}{c}{$B$} \\ \cline{2-3}
        Agent $1$ & 6.96 & 4.47  \\ \cline{2-3}
    \end{tabular}~
    \begin{tabular}{|*{2}{>{\centering\arraybackslash}p{.05\linewidth}|}}
        \multicolumn{1}{c}{$A$}  & \multicolumn{1}{c}{$B$} \\ \cline{1-2}
        6.98 & 7.00 \\ \cline{1-2}
    \end{tabular}~
    \begin{tabular}{|*{2}{>{\centering\arraybackslash}p{.05\linewidth}|}}
        \multicolumn{1}{c}{$A$}  & \multicolumn{1}{c}{$B$} \\\cline{1-2}
        0.50 & 4.50 \\\cline{1-2}
    \end{tabular}\\\bigskip

    \begin{tabular}{c|*{2}{>{\centering\arraybackslash}p{.05\linewidth}|}}
        \multicolumn{1}{c}{} & \multicolumn{1}{c}{$A$}  & \multicolumn{1}{c}{$B$} \\ \cline{2-3}
        Agent $2$ & 5.70 & 5.78  \\\cline{2-3}
    \end{tabular}~
    \begin{tabular}{|*{2}{>{\centering\arraybackslash}p{.05\linewidth}|}}
        \multicolumn{1}{c}{$A$}  & \multicolumn{1}{c}{$B$} \\ \cline{1-2}
        7.00 & 7.02  \\\cline{1-2}
    \end{tabular}~
    \begin{tabular}{|*{2}{>{\centering\arraybackslash}p{.05\linewidth}|}}
        \multicolumn{1}{c}{$A$}  & \multicolumn{1}{c}{$B$} \\\cline{1-2}
        0.50 & 4.47 \\\cline{1-2}
    \end{tabular}

    \caption{$Q_{a}$ for IQL on the 2 step game}
    \label{tab:iql_qvals}
\end{table}

\subsection{Results}

Figure \ref{fig:two_step_loss} shows the loss for the different methods. Table \ref{table:final_test_two_step} shows the final testing reward for each method.

\begin{figure*}[htb!]
    \centering
    \includegraphics[width=0.39\textwidth]{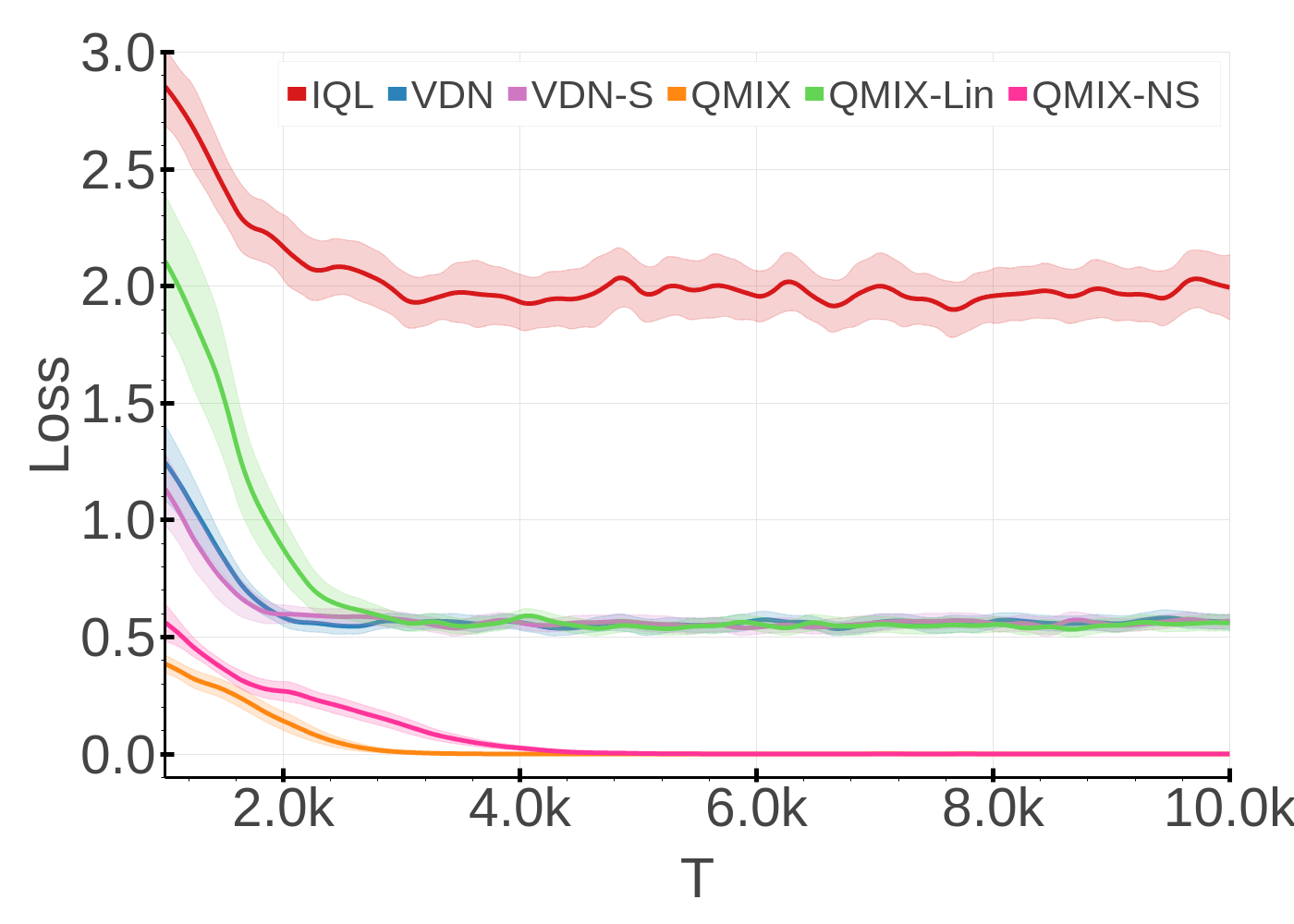}
    \caption{Loss for all six methods on the Two Step Game. The mean and 95\% confidence interval is shown across 30 independent runs.}
    \label{fig:two_step_loss}
\end{figure*}

\begin{table}[h]
	\setlength{\extrarowheight}{3pt}
	\centering
    \begin{center}
        \begin{tabular}{|c| c | c | c | c | c |}
        \hline
        \textbf{IQL} & \textbf{VDN} & \textbf{VDN-S} & \textbf{QMIX} & \textbf{QMIX-Lin} & \textbf{QMIX-NS} \\
        \hline
        7 & 7 & 7 & 8 & 7 & 8 \\
        \hline
        \end{tabular}
    \end{center}
    \caption{The final Test Reward acheived.}
    \label{table:final_test_two_step}
\end{table}

\section{Random Matrix Games} 
\label{sec:rnd_matrix_setup}

\subsection{Architecture and Training}

The architecture of all agent networks is a DQN with 2 hidden layers comprised of $64$ units with a ReLU nonlinearity. Each agent performs independent $\epsilon$ greedy action selection, with $\epsilon=1$. 
The replay buffer consists of the last 500 episodes, from which we uniformly sample a batch of size 32 for training. The target network is updated every 100 episodes. The learning rate for RMSprop is set to $5 \times 10^{-4}$. We train for $20k$ timesteps. 
The mixing network is identical to the SMAC experiments. 
All agent networks share parameters, thus the agent id is concatenated onto each agent's observations. We do not pass the last action taken to the agent as input. Each agent receives the full state (a vector of $1$s with dimension 5) as input.
\section{SMAC}\label{appendix:SMAC}

\subsection{Scenarios}\label{appendix:SMAC_scenarios}

Perhaps the simplest scenarios are \textbf{symmetric} battle scenarios, where the two armies are composed of the same units.
Such challenges are particularly interesting when some of the units are extremely effective against others (this is known as \textit{countering}), for example, by dealing bonus damage to a particular armour type.
In such a setting, allied agents must deduce this property of the game and design an intelligent strategy to protect teammates vulnerable to certain enemy attacks.

SMAC also includes more challenging scenarios, for example, in which the enemy army outnumbers the allied army by one or more units. In such \textbf{asymmetric} scenarios it is essential to consider the health of enemy units in order to effectively target the desired opponent.

Lastly, SMAC offers a set of interesting \textbf{micro-trick} challenges that require a higher-level of cooperation and a specific micro trick to defeat the enemy. 
An example of such scenario is the \texttt{corridor} scenario (Figure~\ref{fig:SC2maps_2}d). Here, six friendly Zealots face 24 enemy Zerglings, which requires agents to make effective use of the terrain features. Specifically, agents should collectively wall off the choke point (the narrow region of the map) to block enemy attacks from different directions. 
The \texttt{3s\_vs\_5z} scenario features three allied Stalkers against five enemy Zealots (Figure~\ref{fig:starcraft_screenshots}a). Since Zealots counter Stalkers, the only winning strategy for the allied units is to kite the enemy around the map and kill them one after another.
Some of the micro-trick challenges are inspired by \textit{StarCraft Master} challenge missions released by Blizzard \cite{SC2Master}.

\subsection{Environment Setting}\label{appendix:SMAC_evn_setting}

SMAC makes use of the StarCraft II Learning Environment (\textit{SC2LE}) \cite{vinyals_starcraft_2017} to communicate with the StarCraft II engine. SC2LE provides full control of the game by making it possible to send commands and receive observations from the game. However, SMAC is conceptually different from the RL environment of SC2LE. The goal of SC2LE is to learn to play the full game of StarCraft II. This is a competitive task where a centralised RL agent receives RGB pixels as input and performs both macro and micro with the player-level control similar to human players. SMAC, on the other hand, represents a set of cooperative multi-agent micro challenges where each learning agent controls a single military unit.

SMAC uses the \textit{raw API} of SC2LE. Raw API observations do not have any graphical component and include information about the units on the map such as health, location coordinates, etc. The raw API also allows sending action commands to individual units using their unit IDs. This setting differs from how humans play the actual game, but is convenient for designing decentralised multi-agent learning tasks.

Furthermore, to encourage agents to explore interesting micro strategies themselves, we limit the influence of the StarCraft AI on our agents. 
In the game of StarCraft II, whenever an idle unit is under attack, it automatically starts a reply attack towards the attacking enemy units without being explicitly ordered. We disable such automatic replies towards the enemy attacks or enemy units that are located closely by creating new units that are the exact copies of existing ones with two attributes modified: \textit{Combat: Default Acquire Level} is set to \textit{Passive} (default \textit{Offensive}) and \textit{Behaviour: Response} is set to \textit{No Response} (default \textit{Acquire}). These fields are only modified for allied units; enemy units are unchanged.

The sight and shooting range values might differ from the built-in \textit{sight} or \textit{range} attribute of some StarCraft II units. Our goal is not to master the original full StarCraft II game, but rather to benchmark MARL methods for decentralised control.

The game AI is set to level 7, \textit{very difficult}. Our experiments, however, suggest that this setting does significantly impact the unit micromanagement of the built-in heuristics.

\subsection{Architecture and Training}
\label{sec:smac_setup}

The architecture of all agent networks is a DRQN with a recurrent layer 
comprised of a GRU with a 64-dimensional hidden state, with a fully-connected 
layer before and after.
Exploration is performed during training using independent $\epsilon$-greedy action selection, where each agent $a$ performs $\epsilon$-greedy action selection over its own $Q_a$. 
Throughout the training, we anneal $\epsilon$ linearly from $1.0$ to $0.05$ over $50k$ time steps and keep it constant for the rest of the learning. 
We set $\gamma = 0.99$ for all experiments.
The replay buffer contains the most recent $5000$ episodes.  
We sample batches of 32 episodes uniformly from the replay buffer, and train on fully unrolled episodes, performing a single gradient descent step after every episode.
The target networks are updated after every $200$ training episodes.
The Double $Q$-Learning update rule from \citep{van2016deep} is used for all $Q$-Learning variants (IQL, VDN, QMIX and QTRAN).

To speed up the learning, we share the parameters of the agent networks across all agents. 
Because of this, a one-hot encoding of the \texttt{agent\_id} is concatenated onto each agent's observations. 
All neural networks are trained using RMSprop\footnote{We set $\alpha = 0.99$ and do not use weight decay or momentum.} with learning rate $5 \times 10^{-4}$. 

The mixing network consists of a single hidden layer of $32$ units, utilising an ELU non-linearity. 
The hypernetworks consist of a feedforward network with a single hidden layer of $64$ units with a ReLU non-linearity. 
The output of the hypernetwork is passed through an absolute function (to acheive non-negativity) and then resized into a matrix of appropriate size.

The architecture of the COMA critic is a feedforward fully-connected neural network with the first 2 layers having 128 units, followed by a final layer of $|U|$ units. We set $\lambda=0.8$. 
We utilise the same $\epsilon$-floor scheme as in \citep{foerster_counterfactual_2017} for the agents’ policies, linearlly annealing $\epsilon$ from 0.5 to 0.01 over 100k timesteps.
For COMA we roll-out 8 episodes and train on those episodes.
The critic is first updated, performing a gradient descent step for each timestep in the episode, starting with the final timestep.
Then the agent policies are updated by a single gradient descent step on the data from all 8 episodes.

The architecture of the centralised $Q$ for QTRAN is similar to the one used in \citep{son_qtran:_2019}.
The agent's hidden states ($64$ units) are concatenated with their chosen action ($|U|$ units) and passed through a feedforward network with a single hidden layer and a ReLU non-linearity to produce an agent-action embedding ($64 + |U|$ units). The network is shared across all agents. The embeddings are summed across all agents.
The concatentation of the state and the sum of the embeddings is then passed into the $Q$ network.
The $Q$ network consists of 2 hidden layers with ReLU non-linearities with $64$ units each.
The $V$ network takes the state as input and consists of 2 hidden layers with ReLU non-linearities with $64$ units each.
We set $\lambda_{opt}=1$ and $\lambda_{nopt\_min}=0.1$.

We also compared a COMA style architecture in which the input to $Q$ is the state and the joint-actions encoded as one-hot vectors.
For both architectural variants we also tested having 3 hidden layers.
For both network sizes and architectural variants we performed a hyperparameter search over $\lambda_{opt}=1$ and $\lambda_{nopt\_min} \in \{0.1,1,10\}$ on all three of the scenarios we tested QTRAN on and picked the best performer out of all configurations.

VDN-S uses a state-dependent bias that has the same architecture as the final bias in QMIX's mixing network, a single hidden layer of 64 units with a ReLU non-linearity.
VDN-S Bigger uses 2 hidden layers of 64 units instead.

\subsection{Table of Results}
\label{sec:table_results}

Table \ref{tab:hai} shows the final median performance (maximum median across the testing intervals within the last $250k$ of training) of the algorithms tested.
The mean test win \%, across 1000 episodes, for the heuristic-based ai is also shown.
\begin{table}[h]
    \setlength{\extrarowheight}{3pt}
    \centering
    \begin{center}
        \begin{tabular}{| c | c | c | c | c | c |}
        \hline
        ~&\textbf{IQL}&\textbf{COMA}&\textbf{VDN}&\textbf{QMIX}&\textbf{Heuristic}\\
        \hline \hline
\textbf{2s\_vs\_1sc} &
100&
 98&
100&
100&
0\\
\hline

\textbf{2s3z} &
 75&
 43&
 97&
 99&
90\\
\hline

\textbf{3s5z} &
 10&
  1&
 84&
 97&
42\\
\hline

\textbf{1c3s5z} &
 21&
 31&
 91&
 97&
81\\
\hline

\textbf{10m\_vs\_11m} &
 34&
  7&
 97&
 97&
12\\
\hline
\hline

\textbf{2c\_vs\_64zg} &
  7&
  0&
 21&
 58&
0\\
\hline

\textbf{bane\_vs\_bane} &
 99&
 64&
 94&
 85&
43\\
\hline

\textbf{5m\_vs\_6m} &
 49&
  1&
 70&
 70&
0\\
\hline

\textbf{3s\_vs\_5z} &
 45&
  0&
 91&
 87&
0\\
\hline
\hline

\textbf{3s5z\_vs\_3s6z} &
  0&
  0&
  2&
  2&
0\\
\hline

\textbf{6h\_vs\_8z} &
  0&
  0&
  0&
  3&
0\\
\hline

\textbf{27m\_vs\_30m} &
  0&
  0&
  0&
 49&
0\\
\hline

\textbf{MMM2} &
  0&
  0&
  1&
 69&
0\\
\hline

\textbf{corridor} &
  0&
  0&
  0&
  1&
0\\
\hline
        \end{tabular}
        \caption{The Test Win Rate \% of the various algorithms.}
    \end{center}
    \label{tab:hai}
\end{table}

\section{Regression Experiment}
\label{sec:regression_exp}

For the regression experiment we simulate a task with $4$ agents.
There are 10 states, of dimensionality 10, in which each dimension is uniformly sampled from $[-1,1]$.
For each of the 10 states, the agent $Q$-values are uniformly sampled from $[-1,1]$.
The $Q_{tot}$ targets for each state are uniformly sampled from $[0,10]$.
The agent $Q$-values, targets and states are kept fixed, only the parameters of the mixing network may change.

We use a mixing embed dimension of 32, and train for a total of 2000 timesteps.
At each timestep we uniformly sample a state, add it to the replay buffer of size 200, and execute a training step.
Each training step consists of sampling a minibatch of 32 from the replay buffer and minimising the $L_2$ loss between the $Q_{tot}$ targets and the network's outputs.

\end{document}